\newtheorem{proposition}{Proposition}
\newtheorem{theorem}{Theorem}
\newtheorem{lemma}{Lemma}
\newtheorem{corollary}{Corollary}
\newtheorem{assumption}{Assumption}
\def\br#1{\color{brown}{#1}}
\def\bl#1{\color{blue}{#1}}
\title{Distributed TD(0) with Almost No Communication}
\author{
  Rui Liu \\
  Division of Systems Engineering\\
  Boston University\\
  Boston, MA, 02215 \\
  \texttt{rliu@bu.edu} \\
   \And
 Alex Olshevsky \\
  Department of ECE and Division of Systems Engineering\\
  Boston University\\
  Boston, MA, 02215 \\
  \texttt{alexols@bu.edu} \\
}
\begin{document}
\maketitle
\begin{abstract} We provide a new non-asymptotic analysis of distributed TD(0) with linear function approximation. Our approach relies on ``one-shot averaging,'' where $N$ agents run local copies of TD(0) and average the outcomes only once at the very end. We consider two models: one in which the agents interact with an environment they can observe and whose transitions depends on all of their actions (which we call the global state model), and one in which each agent can run a local copy of an identical Markov Decision Process, which we call the local state model. 

In the global state model, we show that the convergence rate of our distributed one-shot averaging method matches the known convergence rate of TD(0). By contrast, the best convergence rate in the previous literature showed a rate which, according to the worst-case bounds given, could underperform the non-distributed version by $O(N^3)$ in terms of the number of agents $N$. In the local state model, we demonstrate a version of the linear time speedup phenomenon, where the convergence time of the distributed process is a factor of $N$ faster than the convergence time of TD(0). As far as we are aware, this is the first result rigorously showing benefits from parallelism for temporal difference methods.
\end{abstract}

\section{Introduction}

Recent years have seen reinforcement learning used in a variety of multi-agent systems, for example, cooperative control \citep{wang2020cooperative}, traffic control \citep{kuyer2008multiagent, bazzan2009opportunities}, networked robotics \citep{yang2004multiagent, duan2016benchmarking}, and bidding and advertising \citep{jin2018real}. However, a rigorous understanding of how standard methods in reinforcement learning perform in a multi-agent setting with limited communication is only partially available. 

One of the most fundamental problems in reinforcement learning is policy evaluation, and one of the most basic policy evaluation algorithms is temporal difference (TD) learning, originally proposed in \citet{sutton1988learning}. TD learning works by updating a value function from differences in predictions over a succession of steps in the underlying Markov Decision Process (MDP). 

Developments in the field of multi-agent reinforcement learning (MARL) have led to an increased interest in decentralizing TD methods, which is the subject of this paper. We will consider two different MARL settings. These are described formally below, but, in brief, the ``global state'' setting considers a collection of agents interacting with an environment which takes actions depending on the actions of all the agents, which may have different rewards; and the ``local state'' setting, involves each agent having its own copy of the same MDP. In both settings, the goal is to find a policy maximizing the average of the discounted reward streams. 



\subsection{Related Literature}

\noindent {\bf Analysis of centralized TD algorithm:} A natural benchmark to compare the performance of distributed TD methods to is the performance of centralized TD methods. In the context of linear function approximation, these date to 
\citet{jaakkola1994convergence,tsitsiklis1997analysis}. Precise conditions for the asymptotic convergence result was first given in \citet{tsitsiklis1997analysis} by viewing TD as a stochastic approximation for solving a Bellman equation. Recently, there has been an increased interest in non-asymptotic convergence results, 
e.g., \citet{dalal2018finite0, lakshminarayanan2018linear,bhandari2018finite}. The state of the art results show that, under i.i.d samples, TD algorithm with linear function approximation converge as fast as $O(1/\sqrt{T})$ for value function with step-size $1/\sqrt{T}$ and converge as fast as $O(1/t)$ with step-size $O(1/t)$ \citep{bhandari2018finite}.

\noindent {\bf Analysis of distributed TD methods:} There has been several recent non-asymptotic analyses of distributed TD with linear function approximation. All of these were in the global state model informally described above. The first paper on the subject \citet{doan2019finite} proposed a distributed variant of the TD algorithm by combining consensus step and local TD updates for the update of each agent. For distributed TD algorithm with projection step and i.i.d samples, \citet{doan2019finite} provided a $O((\log T) / \sqrt{T})$ convergence rate for value function with step-size that scales as $1/\sqrt{T}$ and $O((\log t)/t)$ convergence rate for optimal value with step-size $O(1/t)$. 

However, the constants in these bounds scaled with the spectral gap of the matrix underlying inter-agent communications. It is mentioned in  \citet{doan2019finiteb} that these could be as bad as $O(N^3)$ in terms of the number of agents $N$. This is of the main issues we will try to address in this paper: we would like to derive bounds that either benefit from, or are not hurt by, the multi-agent nature of the system. That is, we would like bounds that either do not get worse with $N$ or are actually improved as $N$ becomes large. 

In \citet{sun2019finite}, the case of both i.i.d and Markov samples was studied using a Lyapunov approach.  It was shown that all local estimates converge linearly to a small neighborhood of the optimum with constant step-size. The size of the neighborhood scaled with the inverse of the underlying spectral gap of a matrix based on the pattern of inter-agent communications (and thus, indirectly, with the number of agents for many communication patterns); this was removed in \citet{wang2020decentralized} using a more sophisticated ``gradient-tracking'' approach, which involves communicating twice as much information per each step. 

Finally, we also mention \citet{shen2020asynchronous} although that work deals with actor-critic rather than temporal difference methods. It is shown there, up to a certain approximation error, it is possible to obtain a linear speedup for a distributed model of actor-critic with independent samples across agents; this is in the same spirit as what we are attempting to do in part of this work. 



\subsection{Our Contribution}

This paper provides a simple scheme for both the global state model studied in the previous literature and the local state model we introduce here under i.i.d. sampling. In contrast to previous papers which required communication between neighbors at every step of the underlying methods, our schemes requires only one global average computation at the very end. 

We show that we are able to replicate the standard bounds for TD(0) in the global state model, including convergence to the same limit $\theta^*$ without any dependence on the number of agents. In the local state model, we show that we can improve state-of-the-art convergence times for TD(0): in other words, there is a benefit from parallelism in the local state setting. In particular, to the extent that the variance of the temporal difference error enters the convergence bounds for the local state model, it can be divided by the number of agents $N$. 



\section{Preliminaries}

We begin by standardizing notation and providing standard background information on Markov Decision Processes and temporal difference methods.

\subsection{Markov Decision Processes}
A discounted reward MDP is described by a 5-tuple $(\mathcal{S},\mathcal{A},\mathcal{P},r,\gamma)$, where $\mathcal{S}=[n]=\{1,2,\cdots,n\}$ is a finite state space, $\mathcal{A}$ is a finite action space, $\mathcal{P}(s'|s,a):\mathcal{S} \times \mathcal{A} \times \mathcal{S} \rightarrow [0,1]$ is transition probability from $s$ to $s'$ determined by $a$, $r(s,a,s'): \mathcal{S} \times \mathcal{A} \times \mathcal{S} \rightarrow \mathbb{R}$ are deterministic rewards and $\gamma \in (0,1)$ is the discount factor. 

Let $\mu$ denote a fixed policy that maps a state $s \in \mathcal{S}$ to a probability distribution $\mu(\cdot|s)$ over the action space $\mathcal{A}$, so that $\sum_{a \in \mathcal{A}} \mu(a|s) =1$. For such a fixed policy $\mu$, define the instantaneous reward vector $R^{\mu} : \mathcal{S} \rightarrow \mathbb{R}$ as $$R^{\mu}(s)=\sum_{s' \in \mathcal{S}} \sum_{a \in \mathcal{A}} \mu(a|s)\mathcal{P}(s'|s,a)r(s,a,s').$$ Fixing the policy $\mu$ induces a probability transition matrix between states: $$P^{\mu}(s,s')=\sum_{a \in \mathcal{A}} \mu(a|s)\mathcal{P}(s'|s,a).$$ We will use $r_{t}=r(s_t,a_t,s_{t+1})$ to denote the instantaneous reward at time $t$, where $s_t$, $a_t$ are the state and action taken at step $t$. The value function of $\mu$, denoted by $V^{\mu}: \mathcal{S} \rightarrow \mathbb{R}$ is defined as 
\begin{equation}
 V^{\mu}(s)=E_{\mu,s}\left[\sum_{t=0}^{\infty} \gamma^tr_{t}\right], \label{eq:valuefunction}
\end{equation}where $E_{\mu,s} \left[ \cdot \right]$ indicates that $s$ is the initial state and the actions are chosen according to the policy $\mu$. In the following, we will treat $V^{\mu}$ and $R^{\mu}$ as vectors in $\mathbb{R}^n$ and treat $P^{\mu}$ as a matrix in $\mathbb{R}^{n \times n}$. 

Next, we state a standard assumptions on the underlying Markov chain.
\begin{assumption} \label{ass:mc}
The Markov chain with transition matrix $P^{\mu}$ is irreducible and aperiodic. 
\end{assumption}
A consequence of Assumption \ref{ass:mc} is that there exists a unique stationary distribution $\pi = (\pi_1, \pi_2, \cdots, \pi_n)$, a row vector whose entries are positive and sum to $1$. This stationary distribution satisfies $\pi^T P^{\mu} =\pi^T$ and $\pi_{s'} = \lim_{t \rightarrow \infty} (P^{\mu})^t(s,s')$ for any two states $s,s' \in \mathcal{S}$. {\em Note that we use $\pi$ to denote the stationary distribution and $\mu$ to denote the policy.}


We next provide definitions of two norms that we will have occasion to use later. For a positive definite matrix $A \in \mathbb{R}^{n \times n}$, we define the inner product $\langle x, y \rangle_{A} = x^T A y$ and the associated norm $\|x\|_A = \sqrt{x^T A x}$ respectively. Since the numbers $\pi_{s}$ are positive for all $s \in \mathcal{S}$, then the diagonal matrix $D = {\rm diag}(\pi_1,\cdots,\pi_n) \in \mathbb{R}^{n \times n}$ is positive definite. Therefore, for any two vectors $V,V' \in \mathbb{R}^n$, we can also define an inner product as $$\left \langle V,V' \right \rangle _{D} = V^T D V'=\sum_{s \in \mathcal{S}}\pi_s V(s) V'(s),$$and the associated norm as 
\begin{equation}\label{eq:def_D}
    \|V\|_{D}^2=V^T D V = \sum_{s \in \mathcal{S}} \pi_s V(s)^2.
\end{equation}

Finally, we introduce the definition of Dirichlet seminorm, following the notation of \citet{ollivier2018approximate}: 
\begin{equation}\label{eq:def_Dir}
\|V\|_{{\rm Dir}}^2 = \frac{1}{2} \sum_{s,s' \in \mathcal{S} } \pi_s P^{\mu}(s,s') (V(s')-V(s))^2.
\end{equation}
Note that Dirichlet seminorm depends both on the transition matrix $P^{\mu}$ and the stationary distribution $\pi$. 
Similarly, we introduce the $k$-step Dirichlet seminorm: 
 $$\|V\|_{{\rm Dir},k}^2 = \frac{1}{2} \sum_{s,s' \in \mathcal{S}} \pi_s (P^{\mu})^k(s,s')(V(s')-V(s))^2.$$

\subsection{Temporal Difference Learning}
Evaluating the value function $V^{\mu}$ of a policy can be computationally expensive when the number of states is very large. The classical TD algorithm  uses low dimensional approximation $V^{\mu}_{\theta}$. 
For brevity, we will omit the superscript $\mu$ throughout from now on.

We next introduce the update rule of the classical temporal difference method with linear function approximation $V_{\theta}$, a linear function of $\theta$: 
\begin{equation}\label{eq:V_theta}
 V_{\theta}(s)=\sum_{l=1}^K \theta_l \phi_l(s) \quad \forall s \in \mathcal{S},
\end{equation}
where $\phi_l = (\phi_l(1),\cdots,\phi_l(n))^T \in \mathbb{R}^n$ for $l \in [K]$ are $K$ given feature vectors. Together, all K feature vectors form a $n \times K$ matrix $\Phi = (\phi_1, \cdots,\phi_K)$. For $s \in \mathcal{S}$, let $\phi(s) = (\phi_1(s),\cdots,\phi_K(s))^T \in \mathbb{R}^K$ denote the $s$-th row of matrix $\Phi$, a vector that collects the features of state $s$. Then, Eq. (\ref{eq:V_theta}) can be written in a compact form $V_{\theta}(s) = \theta^T \phi(s)$. 

The TD(0) method maintains a parameter $\theta(t)$ which is updated at every step to improve the approximation. Supposing that we observe a sequence of states $\{s(t)\}_{t \in \mathbb{N}_0}$, then the classical TD($0)$ algorithm updates as:
\begin{align}
 \theta ({t+1})= \theta (t) + \alpha_t \delta(t) \phi(s(t)), \label{eq:lineartd} 
\end{align}where $\{\alpha_t \}_{t \in \mathbb{N}_0}$ is the sequence of step-sizes, and letting $s'(t)$ denote the next state after $s(t)$, the quantity $\delta (t)$ is the temporal difference error
\begin{equation}\label{eq:delta}
 \delta(t) = r (t) + \gamma \theta ^T(t) \phi(s' (t)) - \theta ^T(t) \phi(s (t)).
\end{equation}



A common assumption on feature vectors in the literature \citep{tsitsiklis1997analysis,bhandari2018finite} is that features are linearly independent and uniformly bounded, which is formally given next.
\begin{assumption}\label{ass:features}
The matrix $\Phi$ has full column rank, i.e., the feature vectors $\{ \phi_1, \ldots, \phi_K\}$ are linearly independent. Additionally, we have that $ \|\phi(s)\|_2^2 \leq 1$ for $s \in \mathcal{S}$.
\end{assumption}

Under Assumption \ref{ass:mc} and \ref{ass:features}, we introduce the steady-state feature covariance matrix $\Phi^T D \Phi$. That this is a positive definite matrix as an immediate consequence of Assumptions \ref{ass:mc} and \ref{ass:features}, and we let $\omega > 0$ be a lower bound on its smallest eigenvalue. 


We will use the fact, shown in \citet{tsitsiklis1997analysis}, that under Assumptions \ref{ass:mc}-\ref{ass:features} as well as an additional assumption on the decay of the step-sizes $\alpha_t$, the sequence of iterates $\{\theta_t\}$ generated by TD($0$) learning converges almost surely a vector satisfying a certain projected Bellman equation; we will use $\theta^*$ to refer to this vector.

\subsection{Two Models of Distributed Temporal Difference Methods}


We now introduce two distributed models, which we will refer to as the {\em global state} and {\em local state} models. The global state model was previously introduced in \citet{doan2019finite} and studied in \citet{wang2020decentralized, sun2019finite}. We are not aware of the local state model being considered in the previous literature.

\subsubsection{The Global State Model} The problem is characterized by the 6-tuple $(\mathcal{S},\mathcal{V}, \mathcal{A}_{\mathcal{V}} , \mathcal{P},r_{\mathcal{V}}, \gamma)$. Here, $\mathcal{S}$, $\mathcal{P}$, and $\gamma$ have the same definition as before; $\mathcal{V} = [N] = \{1, \ldots, N\}$ is the set of agents; $ \mathcal{A}_{\mathcal{V}} = \mathcal{A}_1 \times \cdots \times \mathcal{A}_N$ is the set of joint actions, where $\mathcal{A}_v$ is a set of actions only available to agent $v$; and $r_{\mathcal{V}} = \{r_v\}_{v \in {\mathcal{V}}}$ is a set of reward functions, where $r_v(s,a,s'): \mathcal{S} \times \mathcal{A} \times \mathcal{S} \rightarrow \mathbb{R}$ is the reward function of agent $v$.

The policy $\mu$ now takes the form $\mu(a|s) = \prod_{v=1}^N \mu_v(a_v|s)$, where $\mu_v(a_v|s)$ is the probability to select action $a_v \in \mathcal{A}_v$ when in state $s$. After the action $a(t) = \{a_{1}(t),\cdots,a_{N}(t)\}$, the system moves to a new state $s'(t)$ with probability $\mathcal{P}(s'(t)| s(t), a(t))$; then all the agents observe the new state and obtain local rewards $r_{v}(t) = r_v(s_v(t),a(t),s'_v(t))$. 


The value of a policy will depend on the average of the rewards obtained by the individual agents: 
\begin{equation}
 V(s) = E_{\mu,s} \left[ \sum_{t=0}^{\infty} \frac{\gamma^t}{N} \sum_{v \in \mathcal{V}} r_v(s_v(t),a(t),s'_v(t) \right], \quad i \in \mathcal{S}. \label{eq:polvalue}
\end{equation}

In distributed policy evaluation, the agents wish to cooperate to estimate the reward $V$. We could do this by applying TD($0$) to the reward average $(1/N) \sum_{v} r_v(t)$. However, this is not naturally distributed since the reward average depends on what happens at every node in the network. Nevertheless, let  $\theta_{\rm gl}^*$ be the limit point of this  method; 
our goal is to converge to this $\theta_{ \rm gl}^*$  with a fully distributed method. 

A natural way to do this is to distribute the TD($0$) method as we do in Algorithm \ref{algo:doan}. Informally, Algorithm \ref{algo:doan} starts from an arbitrary parameter vector $\theta_v(0)$. At each iteration $t \in \mathbb{N}_0$, the agents take actions, the system moves to a new random state $s'(t)$ based on these actions, and agent $v$ observes the a tuple $(s(t),s'(t),r_v(t))$. It then executes the TD($0$) algorithm on this tuple. This is done for $T$ steps, and then the system ``outputs'' the average across the network of the running averages of the iterates maintained by each individual node, and the average across the network of the latest estimates. Communication among workers is required {\bf only} in the final step to average their parameters. 

\begin{algorithm}[ht]
\caption{TD($0$) with Global State} \label{algo:doan}
\begin{algorithmic}[1]
\STATE For {$v \in \mathcal{V}$}, initialize $\theta_v(0)$, $s(0)$ 
\FOR {$t=0$ to $T-1$} 
 \FOR{$v \in \mathcal{V}$}
 \STATE Observe a tuple $(s(t),s'(t),r_v(t))$.
 \STATE Compute temporal difference: $$\delta_v(t) = r_v(t) - \left(\phi(s(t))- \gamma \phi(s'(t))\right)^T \theta_v(t).$$
 \STATE Execute local TD update: 
 \begin{equation} \label{eq:DTD}
 {\theta}_v({t+1}) = \theta_v(t) + \alpha_t \delta_v(t)\phi(s(t)).
 \end{equation}
 \STATE Update running average: 
 \[ \hat{\theta}_v(t+1) = \left( 1 - \frac{1}{t+2} \right) \hat{\theta}_v(t) + \frac{1}{t+2} \theta_v(t+1). \]
 \ENDFOR
\ENDFOR
\STATE Return $\hat{\theta}(T)$ and $\bar{\theta}(T)$: \label{step:last}
\begin{equation*}
 \hat{\theta}(T) = \frac{1}{N} \sum_{v \in \mathcal{V}} \hat{\theta}_v(T), \quad \bar{\theta}(T)= \frac{1}{N} \sum_{v \in \mathcal{V}} {\theta}_v(T).
\end{equation*}
\end{algorithmic}
\end{algorithm}

{\bf Message complexity:} Under the assumption that the nodes are connected to a server, computing the average in step 10 takes a single round of communication with a server. In the nearest-neighbor model where the nodes are connected over an undirected graph and nodes know the total number of nodes $N$, it is possible to find an $\epsilon$-approximation of the average in $O(N \log (1/\epsilon))$ time using the algorithm from \citet{olshevsky2017linear}. If such knowledge is not available, and the communication graph is further time-varying, it is possible to do the same in $O(N^2 \log (1/\epsilon)$ using the algorithm from \citet{nedic2009distributed}. As we will later discuss, it suffices to choose $\epsilon$ proportional to a power of $1/T$, so that the message complexity of step 10 in the fully distributed setting is at most $O(\log T)$. For large enough $T$, this is an exponential improvement over the previous papers \cite{doan2019finite, doan2020finite, sun2019finite, wang2020decentralized} which required communication at every step and thus needed $T$ communications. 

\subsubsection{The Local State Model} 



We next consider a model where each agent has its own independently evolving copy of the same MDP. Although this seems quite different form the global state model, the two models can be treated with a very similar analysis. 

More formally, each agent has the same 6-tuple $(\mathcal{S},\mathcal{V},\mathcal{A}, \mathcal{P}, r, \gamma)$; at time $t$, agent $v$ will be in a state $s_v(t)$; it will apply action $a_{v}(t) \in \mathcal{A}$ with probability $\mu( a_v(t)|s_v(t))$; then agent $v$ moves to state $s'_v(t)$ with probability $\mathcal{P}(s'_v(t)| s_v(t), a_v(t))$, {with the transitions of all agents being independent of each other}; finally agent $v$ gets a reward $r_{v}(t) = r(s_v(t),a_v(t),s'_v(t))$. Note that, although the rewards obtained by different agents can be different, the reward function $r(s,a,s')$ is identical across agents. 


We define $\theta_{\rm lc}^*$ to be the fixed point of TD($0$) on the MDP $(\mathcal{S},\mathcal{V},\mathcal{A}, \mathcal{P}, r, \gamma)$.
Naturally, each agent can easily compute $\theta_{\rm lc}^*$ by simply ignoring all the other agents. However, this ignores the possibility that agents can benefit from communication with each other.

We propose a distributed TD method in this setting as Algorithm \ref{algo:our}. It is very similar to to Algorithm \ref{algo:doan}: each agent runs TD($0$) locally at each agent, and, at the end, the agents just average the results. The message complexity of this method is the same as the message complexity of Algorithm 1: one communication with a server if a server is assumed to be available and $O(\log T)$ communications in the fully distributed setting. 

\begin{algorithm}[ht]
\caption{TD($0$) with Local State} \label{algo:our}
\begin{algorithmic}[1]
\STATE For {$v \in \mathcal{V}$}, initialize $\theta_v(0)$, $s_v(0)$ 
\FOR {$t=0$ to $T-1$} 
 \FOR{$v \in \mathcal{V}$}
 \STATE Observe a tuple $(s_v(t),s_v'(t),r_v(t))$.
 \STATE Compute temporal difference: \begin{equation}\label{eq:delta_lc}
 \delta_v(t) = r_v(t) - \left(\phi(s_v(t))- \gamma \phi(s_v'(t))\right)^T \theta_v(t).
 \end{equation}
 \STATE Execute local TD update: 
 \begin{equation} \label{eq:DTD2}
 {\theta}_v({t+1}) = \theta_v(t) + \alpha_t \delta_v(t)\phi_v(s_v(t)).
 \end{equation}
 \STATE Update running average: 
 \[ \hat{\theta}_v(t+1) = \left( 1 - \frac{1}{t+2} \right) \hat{\theta}_v(t) + \frac{1}{t+2} \theta_v(t+1). \]
 \ENDFOR
\ENDFOR
\STATE Return $\hat{\theta}(T)$ and $\bar{\theta}(T)$:
\begin{equation*}
 \hat{\theta}(T) = \frac{1}{N} \sum_{v \in \mathcal{V}} \hat{\theta}_v(T), \quad \bar{\theta}(T)= \frac{1}{N} \sum_{v \in \mathcal{V}} {\theta}_v(T).
\end{equation*}
\end{algorithmic}
\end{algorithm}


\subsection{Why Two Models?} The global state model was introduced in the previous literature \cite{doan2019finite, wang2020decentralized, sun2019finite}. It is a natural starting point for multi-agent RL where the state of the system depends on what all the agents do. 

However, a shortcoming of the global state model is that it cannot be used to parallelize temporal difference learning. Indeed, consider the global state model with the proviso that all rewards except the rewards of the first agent are zero, and only the action of the first agent affects the transition of the global state. Then the global model reduces to the regular policy evaluation problem for one agent. As a consequence, the best we can hope to achieve in the global state model is to recover the guarantees for classical TD learning. 

It may be objected that it is somewhat artificial to only have the first agent make decisions that matter, but it is easy to come up with more involved examples with the same property (e.g., when all rewards are the same and the system evolves according to the action chosen by the most agents). 

By contrast, in the local state model, there is the possibility of doing better than the regular TD($0$) because the agents are ``collectively'' observing $n$ tuples $(s_v(t), s_v'(t), r_v(t))$ per step, whereas classical TD($0$) only gets to observe a single tuple (though it must be stressed that each agent in the local state model only observes its own tuple). Since these tuples are generated independently, to the extent that the variance of the temporal difference error affects the performance of TD($0$), there is the possibility of achieving performance that is a factor of $N$ times better.
\section{Convergence Analyses of Our Methods} 

We next describe the main results of this paper, which are convergence analyses of Algorithms \ref{algo:doan} and \ref{algo:our} 
under the assumption that the tuples are i.i.d. In the literature, the i.i.d model is sometimes referred to as having a ``generator'' for the MDP and is a more restrictive assumption compared to assuming that the state evolves as a Markov process with a fixed starting state. Nevertheless, this is a standard assumption under which many TD and Q-learning methods are analyzed (e.g., \citep{sutton2008convergent,dalal2018finite0,dalal2018finite,lakshminarayanan2018linear,doan2019finite,chen2019finite,kumar2019sample}).


\subsection{Distributed TD(0) with Global State Model}



Before stating our result, we need to introduce notation for the variance of the temporal difference error. Let $\bar{r}(t) = (1/|N|) \sum_{v} r_v(t)$ be the average of the instantaneous rewards received by the agents at time $t$. Then we define $$\bar{\sigma}^2 = E\left[ \left(\bar{r}(t) - \left(\phi(s(t))- \gamma \phi(s'(t))\right)^T \theta_{\rm gl}^*\right) ^2\right].$$ Recall that the expectation is taken with respect to the distribution that generates the state $s$ with probability $\pi_s$, then actions $(a_1, \ldots, a_n)$ from the policy, and the next state $s'(t)$ from the transition of the MDP. 
 
Our first main result bounds the performance of distributed TD(0) with global state in terms of $\bar{\sigma}^2$ as well as the initial distance to the optimal solution. 

\begin{theorem} \label{thm:1}
Suppose Assumptions \ref{ass:mc}-\ref{ass:features} hold. Suppose further that $\{ {\theta}_v(t)\}_{v \in \mathcal{V}}$ and $\{ \hat{\theta}_v(t)\}_{v \in \mathcal{V}}$ are generated by Algorithm \ref{algo:doan} in the global state model where the state $s(t)$ is sampled i.i.d according to the stationary distribution $\pi$. Then,

(a) For any constant step-size sequence $\alpha_0 = \cdots = \alpha_T= \alpha \leq (1- \gamma)/8$,
\begin{equation*}
    E\left[\left\|\bar{\theta}({T}) - \theta_{\rm gl}^*\right\|_2^2 \right] 
 \leq  e^{-\alpha (1-\gamma) \omega T} E\left[ \left\|\bar{\theta}({0}) - \theta_{\rm gl}^*\right\|_2^2 \right] + \frac{2\alpha \bar{\sigma}^2}{ (1-\gamma) \omega}.
\end{equation*}

(b) For any $T \geq \frac{64}{ (1-\gamma)^2}$ and constant step-size sequence $\alpha_0 = \cdots = \alpha_T =\frac{1}{\sqrt{T}}$, 
\begin{equation*}
      (1-\gamma) E\left[ \left\|V_{\theta_{\rm gl}^*} - V_{\hat{\theta}(T)} \right\|_{D}^2\right] + 2 \gamma E \left[\left\|V_{\theta_{\rm gl}^*} - V_{\hat{\theta}(T)} \right\|_{{\rm Dir}}^2 \right]  \leq  \frac{E\left[ \left\|\bar{\theta}({0}) - \theta_{\rm gl}^*\right\|_2^2 \right] + \bar{\sigma}^2 }{\sqrt{T}}.
\end{equation*}

(c) For a decaying step-size sequence $\alpha_t = \frac{\alpha}{t+\tau}$ with $\alpha = \frac{2}{(1-\gamma)\omega}$ and $\tau = \frac{16}{(1-\gamma)^2\omega}$, we have, 
\begin{equation*}
 E\left[\left\|\bar{\theta}({T}) - \theta_{\rm gl}^*\right\|_2^2 \right] \leq \frac{\zeta}{T+\tau},
\end{equation*}where $\zeta = \max\left\{ {2 \alpha^2 \bar{\sigma}^2}, \tau \left\|\bar{\theta}({0}) - \theta_{\rm gl}^*\right\|_2^2 \right\}$.
\end{theorem}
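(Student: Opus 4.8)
The plan is to reduce all three parts to the finite-time analysis of \emph{centralized} TD(0). The only place the distributed structure is used is the following elementary observation: in the global state model every agent sees the same pair $(s(t),s'(t))$, so averaging commutes with the TD update. Averaging Eq.~(\ref{eq:DTD}) over $v\in\mathcal V$ and using $\frac1N\sum_v\delta_v(t)=\bar r(t)-(\phi(s(t))-\gamma\phi(s'(t)))^T\bar\theta(t)$ gives
\[
\bar\theta(t+1)=\bar\theta(t)+\alpha_t\Big(\bar r(t)-\big(\phi(s(t))-\gamma\phi(s'(t))\big)^T\bar\theta(t)\Big)\phi(s(t)),
\]
so $\{\bar\theta(t)\}$ is exactly the trajectory that centralized TD(0) would generate on the i.i.d.\ tuples $(s(t),s'(t),\bar r(t))$ from the initial point $\bar\theta(0)$; by definition $\theta_{\rm gl}^*$ is its fixed point, and the update noise at $\theta_{\rm gl}^*$ equals $\big(\bar r(t)-(\phi(s(t))-\gamma\phi(s'(t)))^T\theta_{\rm gl}^*\big)\phi(s(t))$, whose squared Euclidean norm has expectation at most $\bar\sigma^2$ since $\|\phi(s)\|_2^2\le1$. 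Because the theorem is stated purely in terms of $\bar\theta(0)$, $\theta_{\rm gl}^*$ and $\bar\sigma^2$, it then suffices to prove the three bounds for ordinary TD(0).

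For the core recursion I would follow the Lyapunov argument of \citet{bhandari2018finite}. Write $g_t(\theta)$ for the stochastic TD direction and $\bar g(\theta)=E[g_t(\theta)]=b-A\theta$ with $A=\Phi^TD(I-\gamma P)\Phi$. The first ingredient is the drift identity: using $V_\theta-V_{\theta_{\rm gl}^*}=\Phi(\theta-\theta_{\rm gl}^*)$ together with the elementary fact $x^TDPx=\|x\|_D^2-\|x\|_{\rm Dir}^2$ (obtained by expanding the square in Eq.~(\ref{eq:def_Dir}) and using $\pi^TP=\pi^T$ and $P\mathbf 1=\mathbf 1$), one gets
\[
(\theta-\theta_{\rm gl}^*)^T\bar g(\theta)=-(1-\gamma)\,\|V_\theta-V_{\theta_{\rm gl}^*}\|_D^2-\gamma\,\|V_\theta-V_{\theta_{\rm gl}^*}\|_{\rm Dir}^2 .
\]
The second ingredient bounds $E\|g_t(\theta)\|_2^2$ by splitting $g_t(\theta)=g_t(\theta_{\rm gl}^*)+\big(g_t(\theta)-g_t(\theta_{\rm gl}^*)\big)$: the first term contributes $\bar\sigma^2$, and since $g_t(\theta)-g_t(\theta_{\rm gl}^*)=-\phi(s(t))\big(\phi(s(t))-\gamma\phi(s'(t))\big)^T(\theta-\theta_{\rm gl}^*)$, the expectation of its squared norm is at most $(1-\gamma)^2\|V_\theta-V_{\theta_{\rm gl}^*}\|_D^2+2\gamma\|V_\theta-V_{\theta_{\rm gl}^*}\|_{\rm Dir}^2$, again by the same identity. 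Expanding $\|\bar\theta(t+1)-\theta_{\rm gl}^*\|_2^2$, taking conditional expectations, and using $\alpha_t\le(1-\gamma)/8$ to absorb the $\alpha_t^2$ terms into a fraction of the negative drift yields a one-step inequality of the form
\[
E\|\bar\theta(t+1)-\theta_{\rm gl}^*\|_2^2\le E\|\bar\theta(t)-\theta_{\rm gl}^*\|_2^2-c_1\alpha_t\,E\|V_{\bar\theta(t)}-V_{\theta_{\rm gl}^*}\|_D^2-c_2\alpha_t\,E\|V_{\bar\theta(t)}-V_{\theta_{\rm gl}^*}\|_{\rm Dir}^2+2\alpha_t^2\bar\sigma^2,
\]
with $c_1,c_2$ proportional to $1-\gamma$ and $\gamma$.

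From this inequality the three parts are routine. For (a): discard the Dirichlet term and use $\|V_\theta-V_{\theta_{\rm gl}^*}\|_D^2\ge\omega\|\theta-\theta_{\rm gl}^*\|_2^2$ (from the lower bound $\omega$ on the smallest eigenvalue of $\Phi^TD\Phi$) to obtain the contraction $E\|\bar\theta(t+1)-\theta_{\rm gl}^*\|_2^2\le(1-\alpha(1-\gamma)\omega)E\|\bar\theta(t)-\theta_{\rm gl}^*\|_2^2+2\alpha^2\bar\sigma^2$, then unroll the geometric series and apply $1-x\le e^{-x}$. For (b): keep both norm terms, move them to the left, sum over $t=0,\dots,T-1$ so the distance terms telescope, and divide by $\alpha T$ with $\alpha=1/\sqrt T$ (the hypothesis $T\ge 64/(1-\gamma)^2$ is exactly what makes $1/\sqrt T\le(1-\gamma)/8$); then pass from the time-average of the norm terms to $\|V_{\hat\theta(T)}-V_{\theta_{\rm gl}^*}\|$ by Jensen's inequality, using that the running-average updates give $\hat\theta(T)=\frac1{T+1}\sum_{t=0}^T\bar\theta(t)$ and that $\|\cdot\|_D^2$ and $\|\cdot\|_{\rm Dir}^2$ are convex quadratic forms. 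For (c): plug in $\alpha_t=\alpha/(t+\tau)$ — the choice of $\tau$ makes $\alpha_0=(1-\gamma)/8$, so the one-step inequality is valid for every $t$ — and prove $E\|\bar\theta(t)-\theta_{\rm gl}^*\|_2^2\le\zeta/(t+\tau)$ by induction, the inductive step reducing to $(t+\tau-1)(t+\tau+1)\le(t+\tau)^2$.

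I do not anticipate a genuine obstacle: once the commutation observation is in hand, the rest is an essentially known finite-time analysis. The two points that need care are (i) verifying that averaging commutes with the update, which hinges on all agents sharing $(s(t),s'(t))$ — the defining feature of the global state model, and precisely the reason no speedup over centralized TD(0) is possible here — and (ii) tracking constants tightly enough in the second-moment bound and the Jensen step so that the noise enters part (b) with the stated coefficient; a cruder accounting would only lose a constant factor.
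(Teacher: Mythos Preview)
Your proposal is correct and follows essentially the same route as the paper: observe that the network average $\bar\theta(t)$ obeys the centralized TD(0) recursion with reward $\bar r(t)$, derive the one-step Lyapunov inequality from the drift identity plus a second-moment bound on the TD direction, and then specialize to each step-size schedule exactly as you outline (contraction plus geometric sum for (a), telescoping and Jensen for (b), induction for (c)). The only notable difference is that your second-moment estimate $(1-\gamma)^2\|V_\theta-V_{\theta_{\rm gl}^*}\|_D^2+2\gamma\|V_\theta-V_{\theta_{\rm gl}^*}\|_{\rm Dir}^2$ is sharper than the paper's, which simply invokes the cruder bound $4\|V_\theta-V_{\theta_{\rm gl}^*}\|_D^2$ from \citet{bhandari2018finite}; this refinement does not change the stated constants (and indeed the paper's own proof of part~(b) lands on $2\bar\sigma^2/\sqrt{T}$ rather than the $\bar\sigma^2/\sqrt{T}$ in the statement, so your caution about the coefficient is warranted).
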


A formal proof can be found in the supplementary material. 


The first takeaway here is that the convergence rates to $\theta_{\rm gl}^*$ in parts (b) and (c)  are {\em  exactly the same as the existing convergence times for regular TD(0)}. Indeed, the state-of-the-art finite-time convergence analysis for TD(0) was given in the paper \cite{bhandari2018finite}, and the bounds given there are exactly the same as the ones in the above theorem, up to constant factors\footnote{Actually, the results here are slightly stronger compared to the results in \cite{bhandari2018finite}. The difference is that we give bounds on the quantity $(1-\gamma) E\left[ \left\|V_{\theta^*} - V_{\hat{\theta}(T)} \right\|_{D}^2\right] + 2 \gamma E \left[\left\|V_{\theta^*} - V_{\hat{\theta}(T)} \right\|_{{\rm Dir}}^2 \right]$, whereas the bounds of \cite{bhandari2018finite}, after some rearrangement, give the same upper bound on just $(1-\gamma) E\left[ \left\|V_{\theta_{\rm gl}^*} - V_{\hat{\theta}(T)} \right\|_{D}^2\right]$.}. As discussed earlier, the best one can hope for is to replicate the bounds for regular TD(0) as Algorithm \ref{algo:doan} contains the usual TD(0) as a special case. 

Similarly part (a) shows convergence of the distributed method to an $O(\alpha \bar{\sigma}^2/(\omega (1-\gamma))$ neighborhood of the optimal solution (measured in terms of the performance measure on the left-hand side). This is also equivalent to the asymptotic performance in their state-of-the-art analysis of regular TD(0) with fixed step-size from \citet{bhandari2018finite}.

\noindent {\bf Message complexity of the final consensus step:} For simplicity, we have given Theorem \ref{thm:1} under the assumption that the final averages $\hat{\theta}(T), \bar{\theta}(T)$ are computed exactly.  We now come back to the question of how many inter-neighbor communication steps are needed to implement step 10 (and preserve our theoretical guarantees) when only nearest-neighbor communications in a graph are allowed. 

It is immediate that all the quantities we bound in Theorem \ref{thm:1} (i.e., the left-hand sides of all the equations) are Lipschitz in a neighborhood of $\theta_{gl}^*$. Consequently,  to preserve a constant error in part (a), or $1/\sqrt{T}$ and $1/T$ errors in parts (b) and (c), it suffices to average with an error that is a small enough constant in part (a), and a small enough multiple of $1/\sqrt{T}$ and $1/T$ respectively in parts (b) and (c). 

As discussed earlier, to obtain an $\epsilon$-approximate average using state of the art distributed ``average consensus'' methods takes $O(\log (1/\epsilon))$ steps, where the constant in the $O(\cdot)$ notation will depend on $N$ or the spectral gap as well as assumptions we make about the graph. Thus we need to take $ \epsilon = O(1/\sqrt{T}), \epsilon=O(1/T)$ in cases (b),(c). This means  we will need to run an average consensus method for $O(\log T)$ communications to approximately implement step 10 of Algorithm 1 after running the previous loop for $T$ steps. Thus the final message complexity is $O( \log T)$.

\noindent {\bf Comparison to earlier work:} A similar algorithm was analyzed in \citet{doan2019finite}. The difference is that the agents were assumed to be connected by a (possibly time-varying) sequence of graphs; each agent communicated with neighbors at each step. The bound derived in \citet{doan2019finite} for step-size that scales as $1/\sqrt{k}$ with iteration $k$ was of the form \begin{small}
\[ (1-\gamma) E\left[ \left\|V_{\theta_{\rm gl}^*} - V_{\hat{\theta}(T)} \right\|_{D}^2\right] \leq O \left( \frac{||\bar{\theta}(0) - \theta_{\rm gl}^*||_D^2 + \left( \frac{1}{1-\lambda} \right) \ln T}{\sqrt{T}} \right), 
\] \end{small}where $\lambda$ was related to the spectral gap of the communication graphs; crucially, it was remarked in \citet{doan2019finiteb} that in the worst case, $1/(1-\lambda)$ was as large as $N^3$ on a fixed network of $N$ nodes. 

In other words, the bounds of \citet{doan2019finite} allowed for the possibility that distributed TD(0) performs worse than regular TD(0) in this setting. It might have been natural to guess that something like this is inevitable due to the multi-agent nature of the system. Our results show this is not the case. Not only can we match the performance of regular TD(0), we do not even need to communicate with neighbors except to average the estimates at the very end. 

We remark that \citet{doan2019finite} also considered the fixed step-size, and the follow-up paper \cite{doan2020finite} also considered step-sizes that scale with $1/t$. The comparison of parts (a) and (c) with these results is similar, as is the comparison of our results with the analysis of \citet{sun2019finite}: relative to all these papers, we both remove the scaling with the number of agents or with the eigengap of the underlying graph, while requiring no communication except for average computation at very end. 

We next compare this theorem with the results of \citet{wang2020decentralized}. That paper considered a more sophisticated "gradient tracking" scheme, where multiple quantities are shared among neighbors in an underlying graph at every step. Only the fixed step-size case was analyzed, and it was shown that the system converges to a neighborhood of the optimal solution whose size does not depend on the  number of agents or the spectral gap of a communication matrix. 

The above theorem improves on \citet{wang2020decentralized} in several ways. Besides significantly saving on communication by not requiring communication any communication until the very end, we give a clean expression for the final error: the right-hand side of Theorem \ref{thm:1}(a) is $\alpha \bar{\sigma}^2/(\omega(1-\gamma))$ in the limit as $T \rightarrow \infty$. 
Most importantly, in Theorem \ref{thm:1}(b) and Theorem \ref{thm:1}(c) we show convergence to the optimal solution itself without any dependence $n$ or a spectral gap (rather than only a neighborhood of it).

\subsection{Distributed TD(0) with Local State Model}


We now turn to the analysis of the local state model, beginning with  some notation. Recall that, for regular TD(0) in the global state model, convergence analysis will scale both with the distance to the initial solution, and with the variance $\bar{\sigma}^2$ of the temporal difference error with average reward. For the local model, the variance is identical to the variance defined in the centralized model: 
\begin{equation*}
 {\sigma}^2 = {\rm E} \left[\left( {r}(s,a,s') - \left(\phi(s)- \gamma \phi(s')\right)^T \theta_{\rm lc}^* \right)^2\right].
\end{equation*} As before,  the expectation is taken with respect to the distribution that generates the state $s$ with probability $\pi_s$, then actions $(a_1, \ldots, a_n)$ from the policy, and the next state $s'(t)$ from the transition of the MDP.

In the multi-agent case, we need some notion of the initial distance to the optimal solution;  we simply take the maximum over all the agents to define: \begin{equation*}
 \hat{R}_0 = \max_{v \in \mathcal{V}} E\left[ \left\|{\theta}_v({0}) - \theta_{\rm lc}^*\right\|_2^2 \right].
\end{equation*}

In the case where all agents start with the same initial condition, this reduces to the same quantities as we had before, i.e., $R_0 = ||\bar{\theta}(0) - \theta_{\rm lc}^*||_2^2$.




The following theorem is our second main result. 

\begin{theorem} \label{thm:main}
Suppose Assumptions \ref{ass:mc}-\ref{ass:features} hold. Suppose further that $\{ {\theta}_v(t)\}_{v \in \mathcal{V}}$ and $\{ \hat{\theta}_v(t)\}_{v \in \mathcal{V}}$ are generated by Algorithm \ref{algo:our} in the local state model under i.i.d sampling. Then,

(a) For any constant step-size sequence $\alpha_0 = \cdots = \alpha_T= \alpha \leq (1- \gamma)/8$, we have
\begin{equation*}
    E\left[(1-\gamma) \left\|V_{\theta_{\rm lc}^*} - V_{\hat{\theta}(T)} \right\|_{D}^2 + \gamma \left\|V_{\theta_{\rm lc}^*} - V_{\hat{\theta}(T)} \right\|_{{\rm Dir}}^2 \right] 
 \leq \frac{{\br 1}}{ {\br T }} \left(\frac{{\br 1}}{{\br 2\alpha}} {\br E}\left[ {\br ||\bar{\theta}({0}) - \theta_{\rm lc}^*||_2^2 } \right] +\frac{{\br 4\hat{R}_0}}{{\br 1-\gamma}} \right)+ \frac{ \alpha {{\bl \sigma}}^2}{{\bl N}} + \frac{{\br 8 \alpha^2 {\sigma}^2}}{{\br 1-\gamma}}.
\end{equation*}

(b) For any $T \geq \frac{64}{ (1-\gamma)^2}$ and constant step-size sequence $\alpha_0 = \cdots = \alpha_T =\frac{1}{\sqrt{T}}$, we have
\begin{equation*}
    E\left[(1-\gamma) \left\|V_{\theta_{\rm lc}^*} - V_{\hat{\theta}(T)} \right\|_{D}^2 + \gamma \left\|V_{\theta_{\rm lc}^*} - V_{\hat{\theta}(T)} \right\|_{{\rm Dir}}^2 \right] 
 \leq  \frac{1}{2 \sqrt{T} }\left( E\left[ \left\|\bar{\theta}({0}) - \theta_{\rm lc}^*\right\|_2^2 \right]+\frac{ 2 {{\bl \sigma}}^2}{{\bl N}} \right) +\frac{{\br 1}}{{\br T}} \left(\frac{{\br 4\hat{R}_0+ 8{\sigma}^2} }{{\br 1-\gamma}} \right).
\end{equation*}

(c) For the decaying step-size sequence $\alpha_t = \frac{\alpha}{t+\tau}$ with $\alpha = \frac{2}{(1-\gamma)\omega}$ and $\tau = \frac{16}{(1-\gamma)^2\omega}$. Then, 
\begin{equation*}
    E \left[ \left\|\bar{\theta}({t+1}) - \theta_{\rm lc}^*\right\|_2^2 \right] \leq \frac{2 \alpha ^2 {{\bl \sigma}}^2/{\bl N}}{t+\tau} + \frac{{\br 8 \alpha^2 \hat{\zeta} }}{{\br (t+\tau)^2}} + \frac{{\br (\tau-1)^4E \left[ \left\|\bar{\theta}({0}) - \theta_{\rm lc}^*\right\|_2^2 \right]} }{{\br (t+\tau)^4}},
\end{equation*}where $\hat{\zeta} = \max\left\{ {2 \alpha^2 {\sigma}^2}, \tau \hat{R}_0 \right\}$.
\end{theorem}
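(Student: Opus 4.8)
The plan is to compress everything to a scalar recursion for $a_t := E\|\bar\theta(t)-\theta_{\rm lc}^*\|_2^2$ together with the performance measure appearing on the left of the theorem. Write the mean-path TD direction as $\bar g(\theta):=b-A\theta$, where $A:=\Phi^T D(I-\gamma P)\Phi$ and $b$ is the corresponding constant vector, so that $\theta_{\rm lc}^*$ is exactly the point with $\bar g(\theta_{\rm lc}^*)=0$, and set $\ell(\theta):=(\theta-\theta_{\rm lc}^*)^T A(\theta-\theta_{\rm lc}^*)$; a short computation using the Dirichlet identity gives $\ell(\theta)=(1-\gamma)\|V_{\theta_{\rm lc}^*}-V_\theta\|_D^2+\gamma\|V_{\theta_{\rm lc}^*}-V_\theta\|_{\rm Dir}^2$, which is precisely the convex quadratic bounded in parts (a)--(b). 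The conceptually central observation is that $\bar g$ is affine and all agents share one MDP, so averaging commutes with the mean-path update: writing $g_v(t):=\delta_v(t)\phi(s_v(t))$ and $\bar g_N(t):=\frac1N\sum_v g_v(t)$, we have $\bar\theta(t+1)-\theta_{\rm lc}^* = (\bar\theta(t)-\theta_{\rm lc}^*)+\alpha_t\bar g_N(t)$ with $E[\bar g_N(t)\mid\mathcal F_t]=\frac1N\sum_v\bar g(\theta_v(t))=\bar g(\bar\theta(t))$. There is \emph{no} consensus-error term — this is exactly what makes one-shot averaging legitimate. Squaring, taking expectations, and using $\langle\bar\theta(t)-\theta_{\rm lc}^*,\bar g(\bar\theta(t))\rangle=-\ell(\bar\theta(t))$ yields $a_{t+1}=a_t-2\alpha_t E[\ell(\bar\theta(t))]+\alpha_t^2 E\|\bar g_N(t)\|_2^2$.

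The second ingredient is the noise term, and it is where the factor $1/N$ is born. Since the $N$ copies of the MDP run independently, the martingale-difference parts $\xi_v(t):=g_v(t)-\bar g(\theta_v(t))$ are conditionally independent given $\mathcal F_t$, so the cross terms drop and $E\|\bar g_N(t)\|_2^2=E\|\bar g(\bar\theta(t))\|_2^2+\frac1{N^2}\sum_v E\|\xi_v(t)\|_2^2$. Using $\|\phi\|_2\le1$, the definition of $\theta_{\rm lc}^*$, and the fact that the covariance of $\phi(s)-\gamma\phi(s')$ is dominated in the PSD order by $A+A^T$ — which is exactly what turns the second moment of the TD direction back into $\ell$ — one gets bounds of the shape $\|\bar g(\theta)\|_2^2\le\frac{c}{1-\gamma}\ell(\theta)$ and $E[\|\xi_v(t)\|_2^2\mid\mathcal F_t]\le 2\sigma^2+\frac{c}{1-\gamma}\ell(\theta_v(t))$. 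Crucially these are stated in $\ell$, not in $\|\theta-\theta_{\rm lc}^*\|_2^2$, which is why the bounds of parts (a)--(b) carry no eigenvalue $\omega$. Choosing $\alpha_t\le(1-\gamma)/8$ so that the $\|\bar g(\bar\theta(t))\|_2^2$ term is swallowed by $-2\alpha_t E[\ell(\bar\theta(t))]$, we reach a one-step inequality of the form
\[
a_{t+1}\le a_t-\alpha_t E[\ell(\bar\theta(t))]+\frac{2\alpha_t^2\sigma^2}{N}+\frac{c\,\alpha_t^2}{(1-\gamma)N^2}\sum_v E[\ell(\theta_v(t))].
\]

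The residual term is controlled by running the \emph{same} one-agent argument on each $\theta_v(t)$ (Algorithm~\ref{algo:our} with the network ignored is ordinary TD(0)): for a constant step this gives $\sum_{t=0}^{T-1}E[\ell(\theta_v(t))]\le\frac1\alpha E\|\theta_v(0)-\theta_{\rm lc}^*\|_2^2+2\alpha\sigma^2 T$, and for $\alpha_t=\alpha/(t+\tau)$ with the stated $\alpha,\tau$ it gives $E\|\theta_v(t)-\theta_{\rm lc}^*\|_2^2\le\hat\zeta/(t+\tau)$ — the same recursion that proves Theorem~\ref{thm:1}(c), now with $\hat R_0$ in place of $\|\bar\theta(0)-\theta_{\rm lc}^*\|_2^2$ — hence $E[\ell(\theta_v(t))]=O(\hat\zeta/(t+\tau))$. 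For parts (a) and (b) I then telescope the one-step inequality over $t=0,\dots,T$, substitute the single-agent sum bound (its initial-condition piece contributes the $\hat R_0/(1-\gamma)$ term, its $\sigma^2$ piece the $\alpha^2\sigma^2/(1-\gamma)$ term, both after dividing by $\alpha T$), and pass from $\frac1{T+1}\sum_t E[\ell(\bar\theta(t))]$ to $E[\ell(\hat\theta(T))]$ using convexity of $\ell$ together with the identity $\hat\theta(T)=\frac1{T+1}\sum_{t=0}^{T}\bar\theta(t)$ satisfied by the running average; part (b) is the choice $\alpha=1/\sqrt T$, for which $T\ge 64/(1-\gamma)^2$ is precisely $\alpha\le(1-\gamma)/8$. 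For part (c) I do not average: I feed $E[\ell(\theta_v(t))]=O(\hat\zeta/(t+\tau))$ into the one-step inequality and use $E[\ell(\bar\theta(t))]\ge(1-\gamma)\omega\,a_t$ to obtain a variable-coefficient recursion $a_{t+1}\le(1-\frac{c}{t+\tau})a_t+\frac{2\alpha^2\sigma^2/N}{(t+\tau)^2}+\frac{q}{(t+\tau)^3}$, which a routine Chung-type induction solves into the three displayed terms, with the $1/N$ surviving on the leading $\sigma^2/(t+\tau)$ term.

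The genuinely delicate parts, I expect, are not the headline ``independence kills the cross terms'' step but rather: (i) routing the second-moment bound on the TD direction through the performance measure $\ell$ (via the $A+A^T$ domination of the covariance) so that parts (a)--(b) come out free of $\omega$; and (ii) the bookkeeping in part (c) of propagating the per-agent $O(1/(t+\tau))$ errors into the $\bar\theta$-recursion and solving the resulting variable-coefficient recursion with the right polynomial decay exponents, so that the initial condition decays fast enough to land in the $(\tau-1)^4/(t+\tau)^4$ slot.
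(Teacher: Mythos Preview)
Your plan is correct and tracks the paper's argument closely: the same one-step recursion for $a_t=E\|\bar\theta(t)-\theta_{\rm lc}^*\|_2^2$ (the paper's Lemma~\ref{lem:2}), the same single-agent telescoping bound to control $\sum_t E[\ell(\theta_v(t))]$ (the paper's Eq.~(\ref{eq:sum_vt})), the same convexity step for $\hat\theta(T)$, and in part~(c) the same substitution of the per-agent $O(\hat\zeta/(t+\tau))$ bound followed by explicitly unrolling the variable-coefficient recursion via products.

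The one place where you deviate is the noise bound. The paper does \emph{not} split $\bar g_N(t)$ into its mean $\bar g(\bar\theta(t))$ plus martingale noise; instead it writes $\bar g_N(t)=\bm a^*-\bm b^*$ with $\bm a^*$ the (centered, independent-across-$v$) TD direction evaluated at $\theta_{\rm lc}^*$ and $\bm b^*$ the drift correction, and bounds $\|\bm b^*\|^2$ crudely by $N\sum_v(\cdots)$, yielding $\frac{8}{N}\sum_v\|V_{\theta_v(t)}-V_{\theta_{\rm lc}^*}\|_D^2$ with only \emph{one} factor of $1/N$. Your decomposition exploits the conditional independence of the $\xi_v$'s and therefore gains an extra $1/N$ on the residual per-agent term --- so you would in fact prove slightly more than the stated theorem on those lower-order terms, while needing an additional absorption of $\alpha_t^2\|\bar g(\bar\theta(t))\|^2$ into the drift (which your step-size condition accommodates). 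Two small caveats: the mechanism you cite --- ``covariance of $\phi(s)-\gamma\phi(s')$ dominated by $A+A^T$'' --- is not quite how the bound goes; the actual route (and the paper's) is Bhandari et al.'s Lemma~5 style estimate $E[((\phi(s)-\gamma\phi(s'))^Tx)^2]\le 4\|V_x\|_D^2$ via $\|\phi\|\le1$ and stationarity, which then feeds into $\ell$ through $\|V\|_D^2\le\ell/(1-\gamma)$. And in part~(c), be careful that absorbing first leaves you with contraction $1-\tfrac{2}{t+\tau}$ rather than the paper's $1-\tfrac{4}{t+\tau}$; to land the initial condition in the $(t+\tau)^{-4}$ slot you should either skip the absorption there (bounding $\|\bar g(\bar\theta(t))\|^2\le 4\|V_{\bar\theta(t)}-V_{\theta_{\rm lc}^*}\|_D^2\le\frac4N\sum_v\|V_{\theta_v(t)}-V_{\theta_{\rm lc}^*}\|_D^2$ by convexity and folding it into the per-agent residual) or simply revert to the paper's coarser $\bm a^*,\bm b^*$ split for that part.
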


The proof of Theorem \ref{thm:main} is given in the supplementary material. 

To parse Theorem \ref{thm:main}, note that all the terms in brown are ``negligible'' in a limiting sense. Indeed, in part (a), the first term scales as $O(1/T)$ and consequently goes to zero as $T \rightarrow \infty$ (whereas the remaining terms do not). In parts (b) and (c), the terms in brown go to zero at an asymptotically faster rate compared to the dominant term (i.e., as $1/T$ vs the dominant $1/\sqrt{T}$ term in part(b) and as $1/t^2, 1/t^4$ compared to the dominant $1/t$ in part (c)). Finally, the last term in part (a) scales as $O(\alpha^2)$ and will be negligible compared to the term preceding it, which scales as $O(\alpha)$, when $\alpha$ is small. 

Moreover, among the non-negligible terms, whenever ${\sigma}^2$ appears, it is divided by $N$; this is highlighted in blue. 

To summarize, parts (b) and (c) show that, when the number of iterations is large enough, we can divide the variance term by $N$ as a consequence of the parallelism among $N$ agents. Part (a) shows that, when the number of iterations is large enough and the step-size is small enough, the size of the final error will be divided by $N$. 

Note that, in part (c), the result of this is a factor of $N$ speed up of the entire convergence time (when $T$ is large enough). In part (a), this results in a factor of $N$ shrinking of the asymptotic error (when the step-size $\alpha$ is small enough). In part (b), however, this only shrinks the ``variance term'' by a factor of $N$; the term depending on the initial condition is unaffected. The explanation for this is that in parts (a) and (c), the variance of the temporal difference error dominates the convergence rate, while in part (b) this is not the case. 

As far as we are aware, these results constitute the first example where parallelism was shown to help for distributed temporal difference learning. They also justify the introduction of the local state model in this paper: indeed, even if there is nothing multi-agent about the underlying problem, one might still choose to distribute the MDP among agents (which could be nodes in a computer cluster) in order to speed-up computation as guaranteed by this theorem.

\section{Numerical Experiments}

In this section, we perform some experiments to verify the conclusions of our theorems and compare Algorithm \ref{algo:our} with earlier work from \citet{doan2019finite} and \citet{wang2020decentralized}. Our experiments are performed on classic control problems from OpenAI gym and Gridworld; details are given in the supplementary materials. 

We focus on the case of constant step-size, since this both matches what is usually done in practice (where a fixed but small step-size is typically picked) and results in faster convergence fitting within our limited computation budget. Normally, a choice of step-size of $\alpha_t = \alpha$ results in an error of $O(\alpha)$ around the optimal solution. But according to Theorem \ref{thm:main}(a), choosing $N = \frac{1-\gamma}{8 \alpha}$ will result in a final error that is a much smaller $O(\alpha^2)$. 

In other words, if we were to plot the inverse of the variance of the final answer, we should see it grows linearly in one agent, and quadratically with $N$ agents chosen as above. This is exactly what Figure \ref{fig:var} below shows, plotting as a function of $\alpha^{-1}$ to make the quadratic vs linear distinction happen when $\alpha \rightarrow +\infty$, thus making it more visible. 

Note that the step-size $\alpha$ can be thought of trading off between the quality of the final solution, which is $O(\alpha)$, and the convergence time (which scales with $\alpha^{-1}$). These graphs show that we can use parallelism to get a much more accurate solution ($O(\alpha^2)$ error instead of $O(\alpha)$).



\begin{figure}[ht]
\centering  
\subfigure[Grid World]{
\label{Fig.sub.1}
\includegraphics[width=0.45\textwidth]{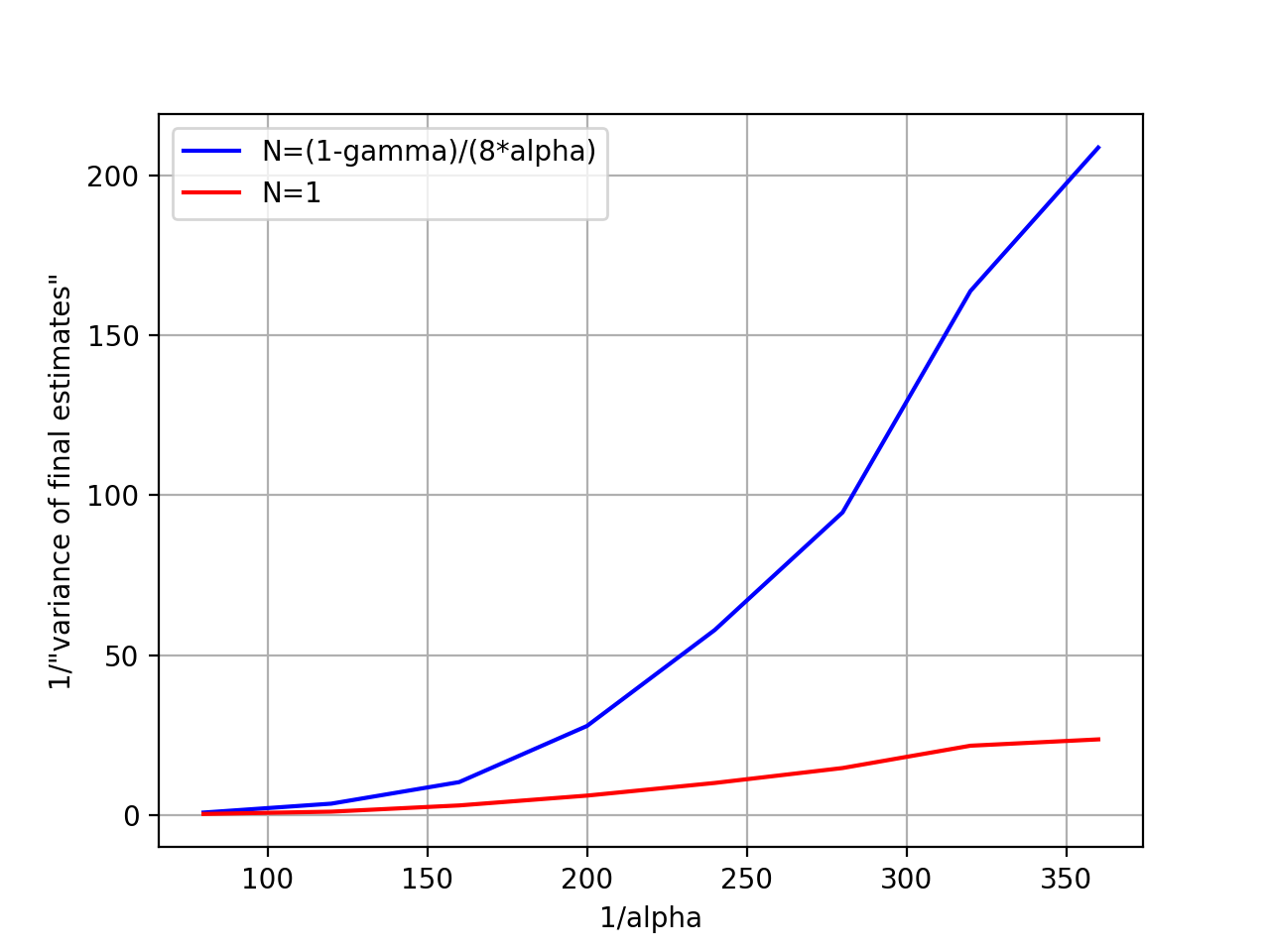}}
\subfigure[Acrobot-v1]{
\label{Fig.sub.2}
\includegraphics[width=0.45\textwidth]{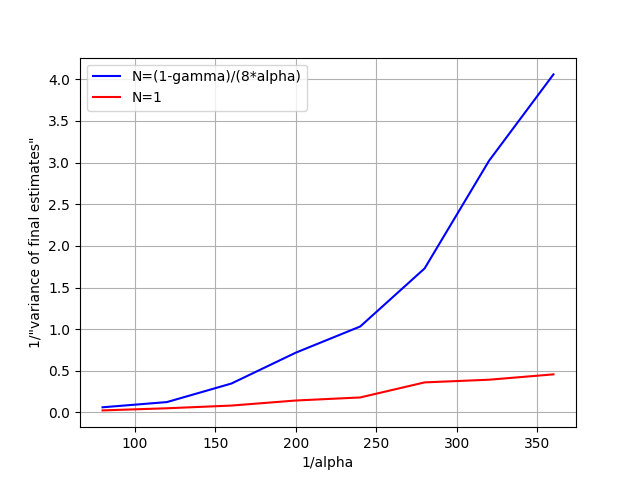}}
\subfigure[CartPole-v1]{
\label{Fig.sub.3}
\includegraphics[width=0.45\textwidth]{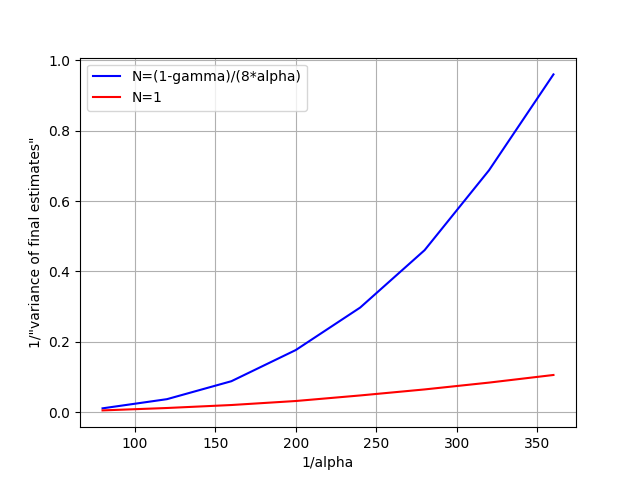}}
\subfigure[MountainCar-v1]{
\label{Fig.sub.4}
\includegraphics[width=0.45\textwidth]{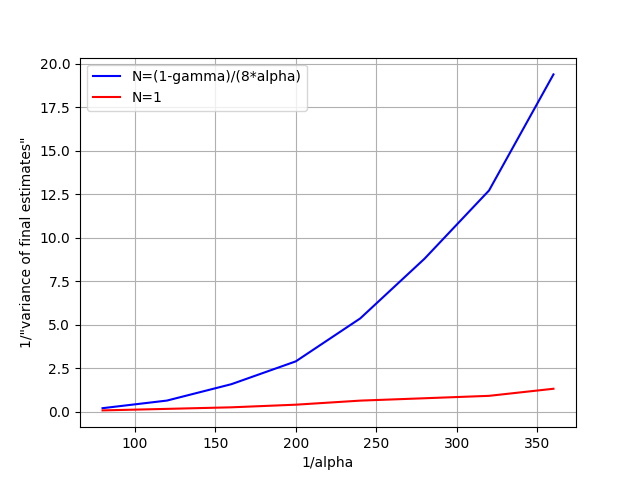}}
\subfigure[MountainCarContinous-v0]{
\label{Fig.sub.5}
\includegraphics[width=0.45\textwidth]{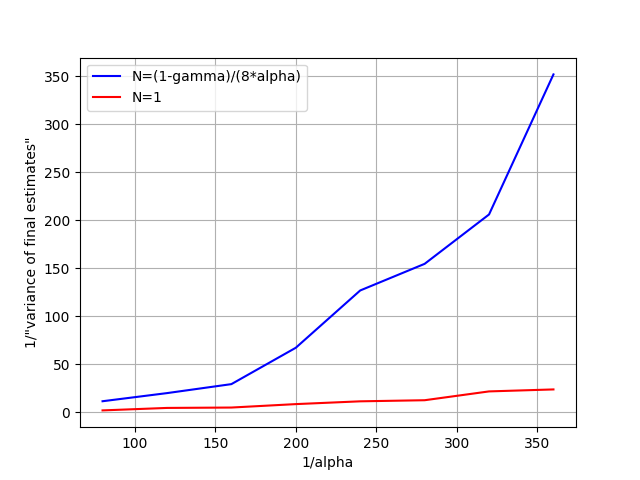}}
\subfigure[Pendulum-v1]{
\label{Fig.sub.6}
\includegraphics[width=0.45\textwidth]{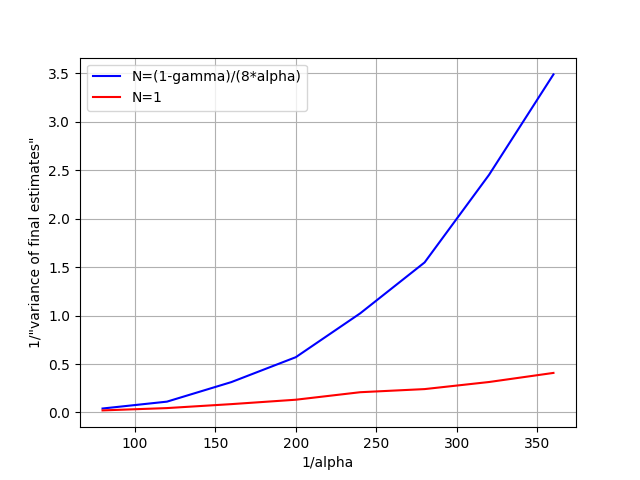}}
\caption{We plot the inverse of the empirical variance of the final solution vs $\alpha^{-1}$ on the x-axis. The policy takes uniformly random actions, and the function approximation uses tile coding. Details are available in the supplementary information.}
\label{fig:var}
\end{figure}



Our second set of simulations compare Algorithm \ref{algo:our} with earlier distributed TD methods stated in \citet{doan2019finite}, \citet{sun2019finite} and \citet{wang2020decentralized} in terms of TD error. The distributed TD methods of \citet{doan2019finite} and \citet{sun2019finite} are the same expect that \citet{doan2019finite} has an additional projection step (these two methods can be viewed as the same if one chooses a large enough set for the projection step). Again, we consider constant step size. The number of agents $N = 100$. The communication graph among agents is generated by the Erdos–Renyi model, which is connected. {\bf Recall that our method only uses one run of average consensus at the end, whereas the other methods require a communication at every step}. The graphs for our method show the TD error at each iteration if we stopped the method and run the average consensus to average the estimates across the network.  Figure \ref{fig:compare_six} shows that the TD errors of Algorithm \ref{algo:our} perform essentially identically to the other methods in spite of the reduced communication. 

\begin{figure}[ht]
\centering  
\subfigure[Grid World]{
\label{Fig.sub.1'}
\includegraphics[width=0.45\textwidth]{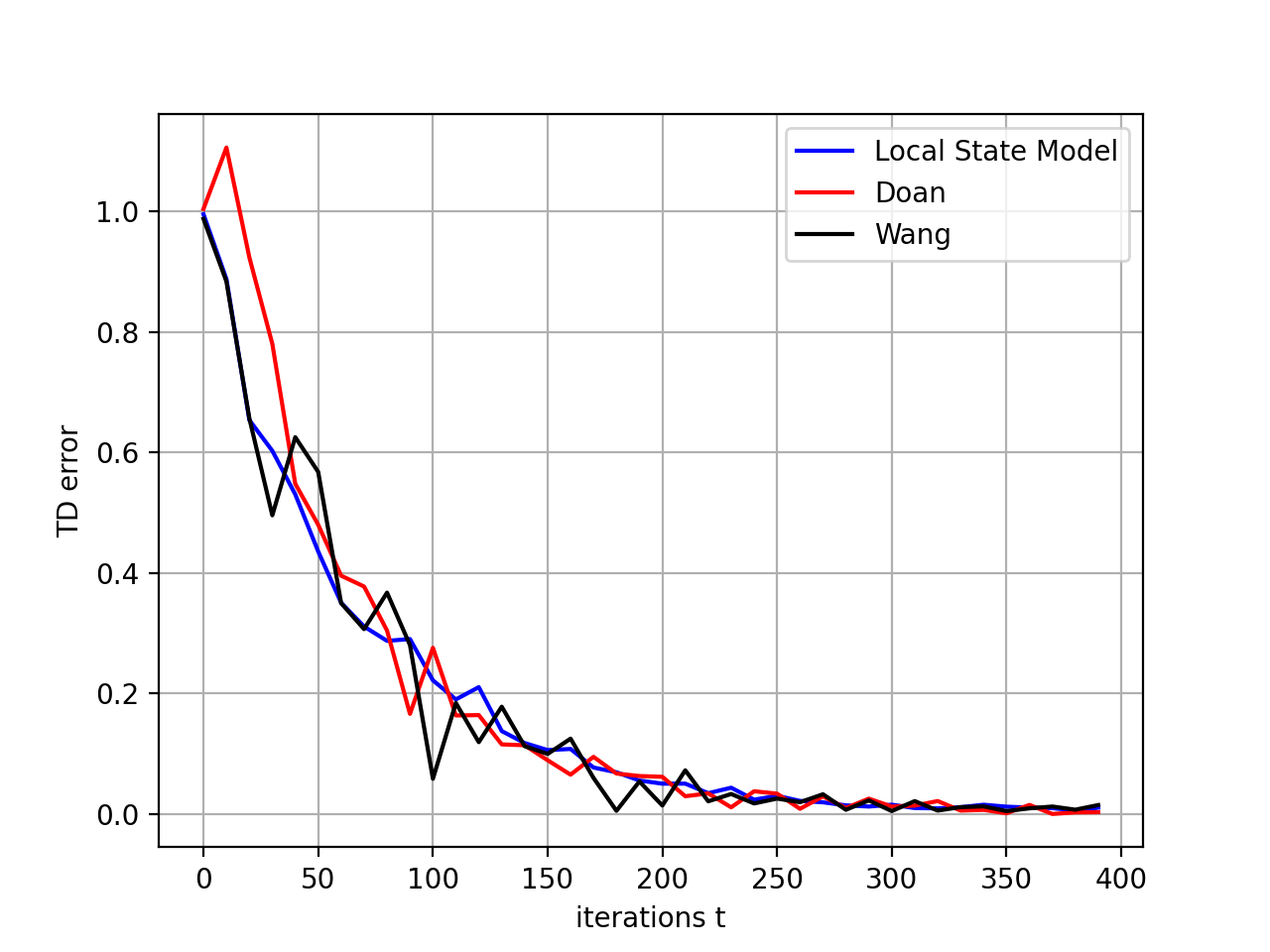}}
\subfigure[Acrobot-v1]{
\label{Fig.sub.2'}
\includegraphics[width=0.45\textwidth]{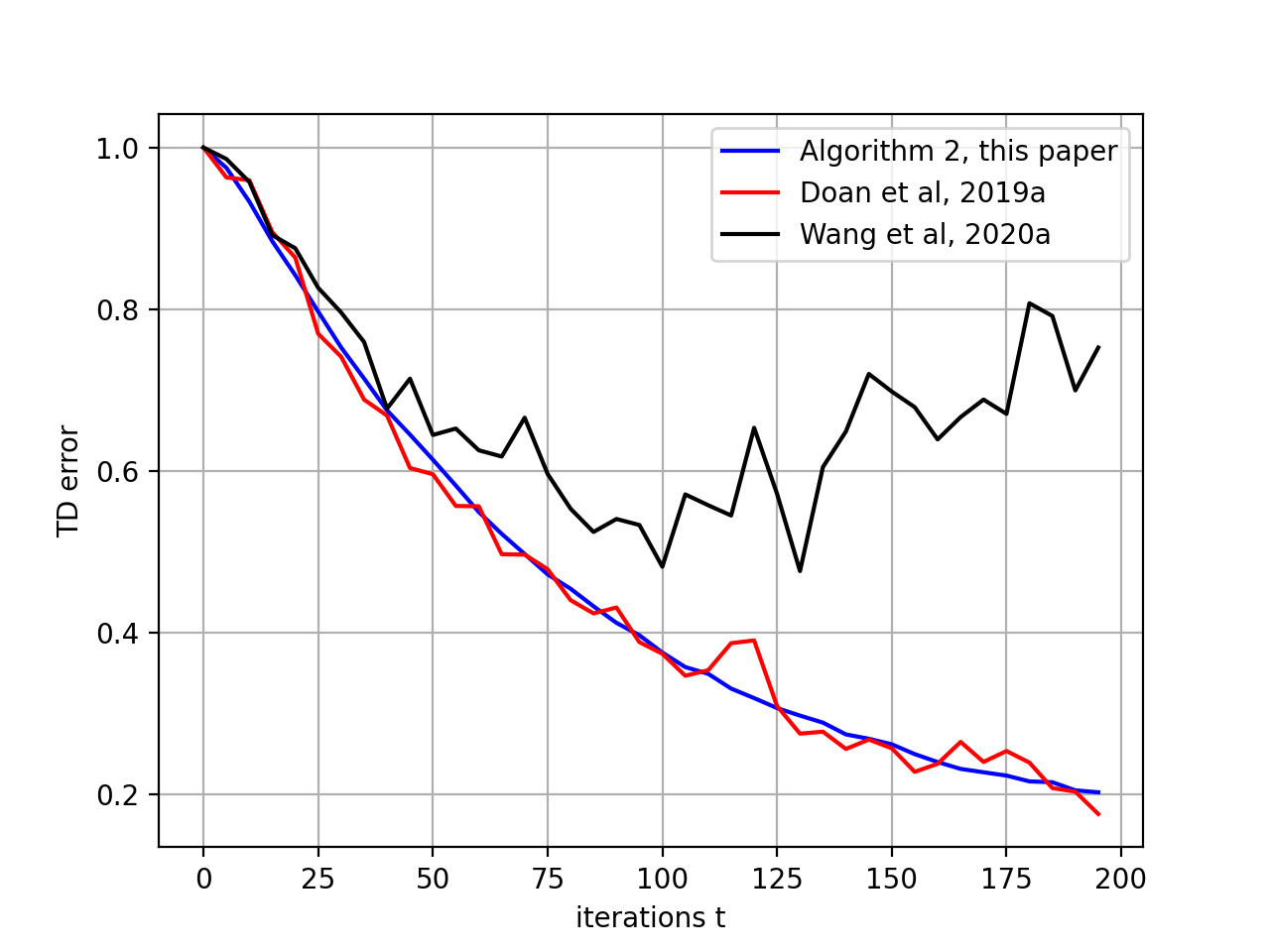}}
\subfigure[CartPole-v1]{
\label{Fig.sub.3'}
\includegraphics[width=0.45\textwidth]{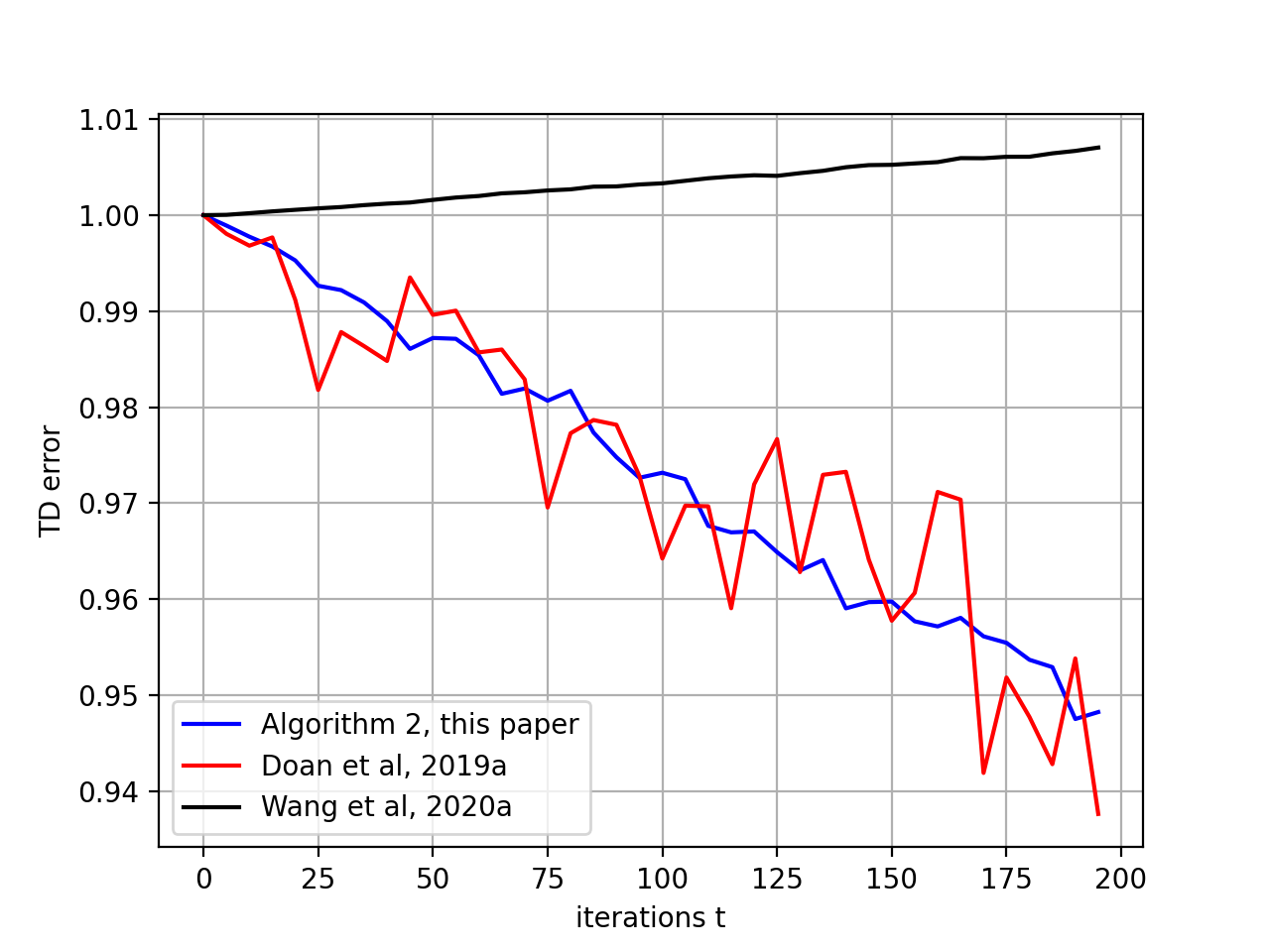}}
\subfigure[MountainCar-v1]{
\label{Fig.sub.4'}
\includegraphics[width=0.45\textwidth]{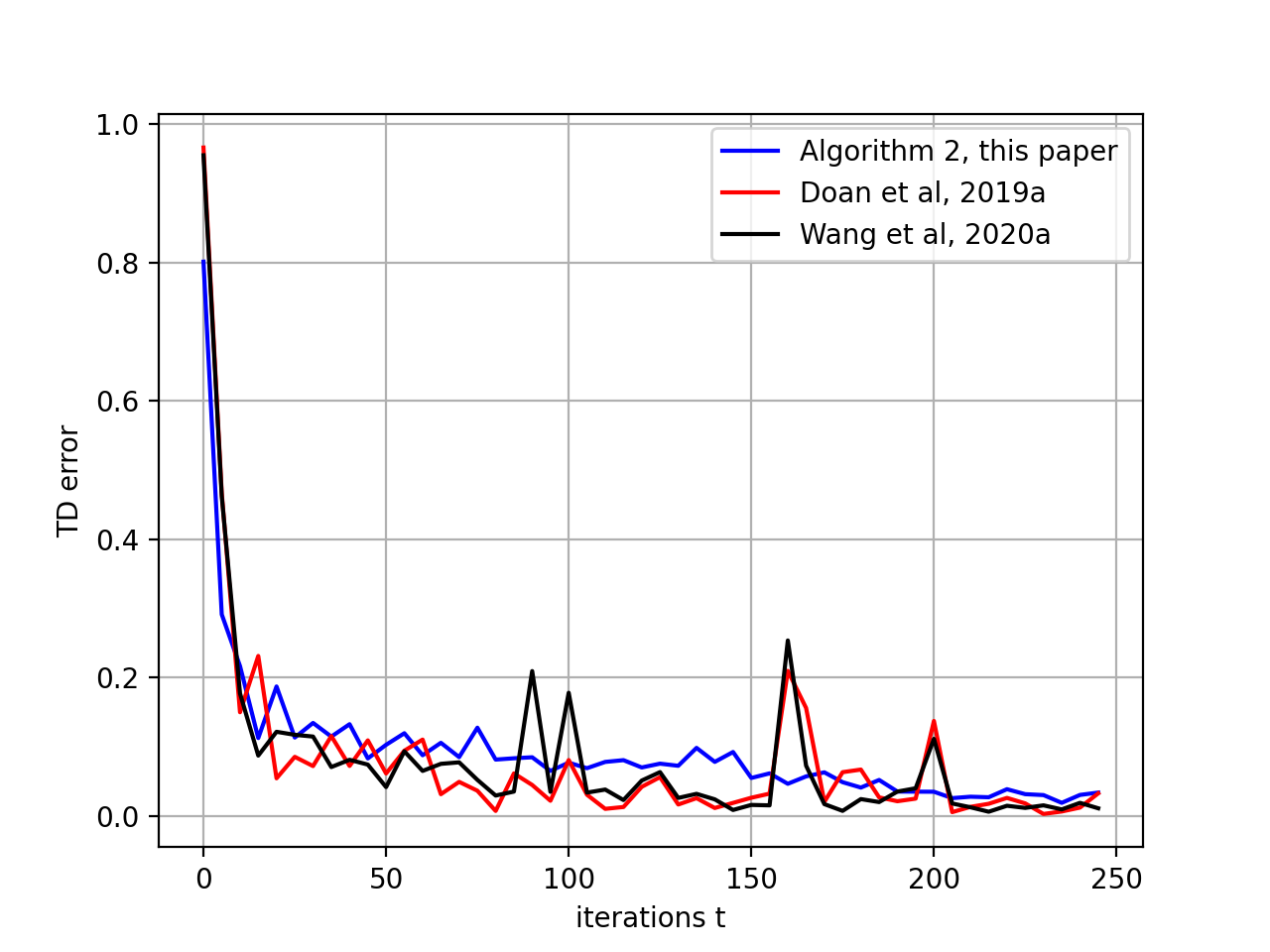}}
\subfigure[MountainCarCont-v0]{
\label{Fig.sub.5'}
\includegraphics[width=0.45\textwidth]{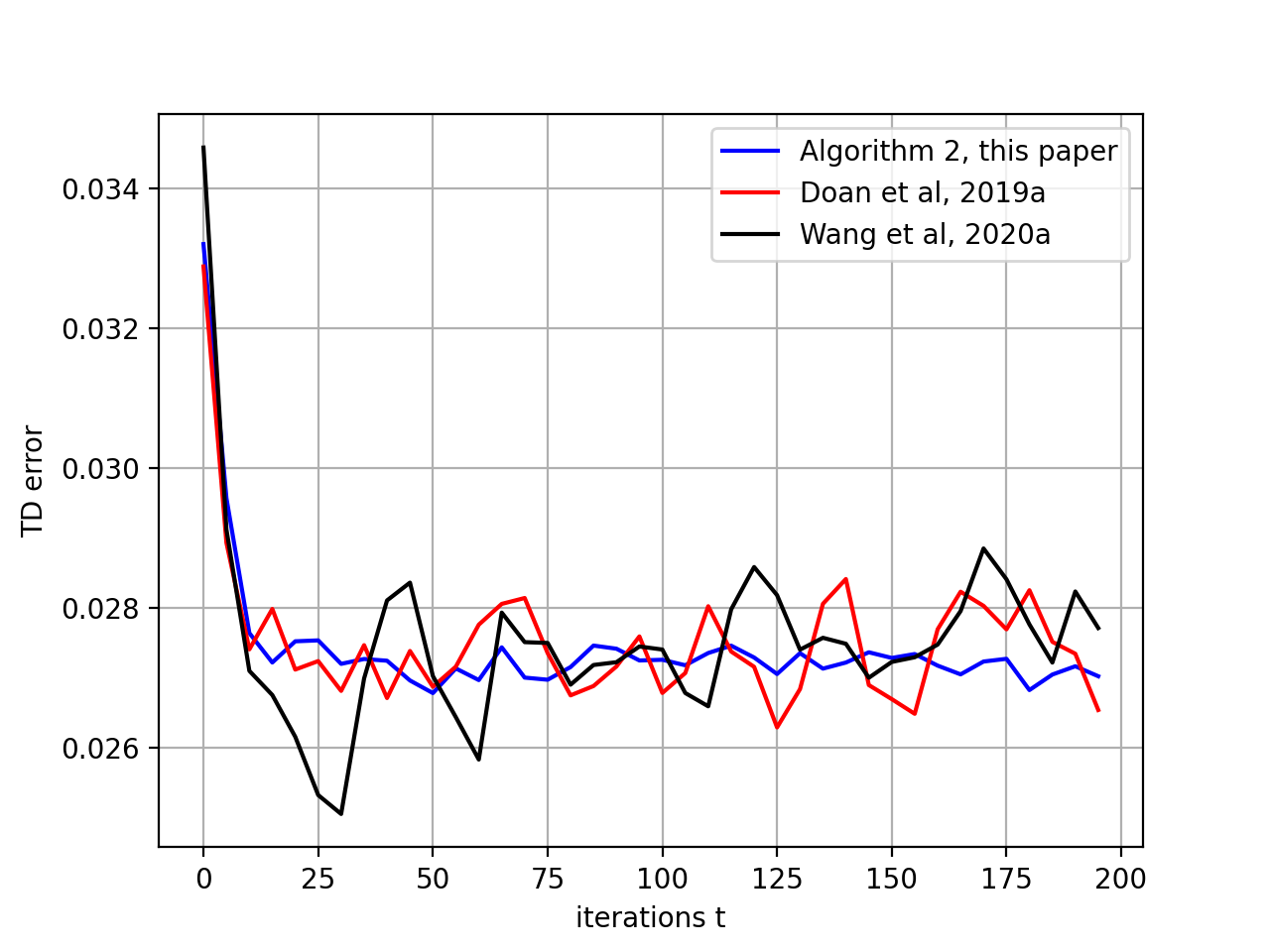}}
\subfigure[Pendulum-v1]{
\label{Fig.sub.6'}
\includegraphics[width=0.45\textwidth]{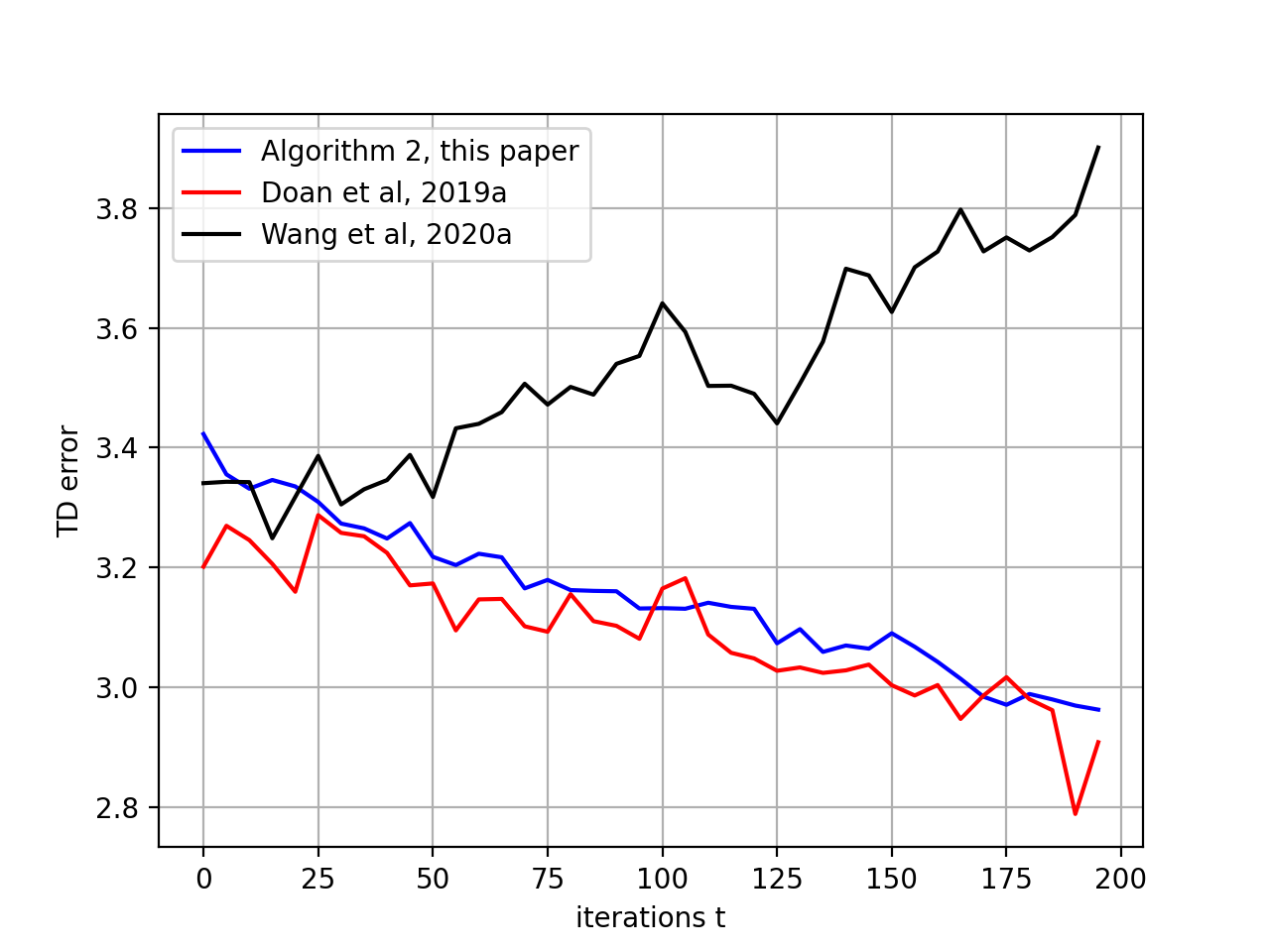}}
\caption{Comparison of our method to the previous literature in the same setting as Figure \ref{fig:var}. Graphs show the TD error vs iteration count. }
\label{fig:compare_six}
\end{figure}

\section{Conclusion}
We have presented convergence results for  distributed TD($0$) with linear function approximation. Our results improve on the previous literature both in terms of utilizing almost no communication: only one run of average consensus is needed. The convergence bounds we derive match state-of-the-art analysis of TD($0$) or reduce the variance by a factor of $N$ when the nodes generate their samples independently.


\clearpage
\bibliography{references}
\bibliographystyle{apalike}

\clearpage

\appendix

\section*{Supplementary Information} 

We now provide proofs of the theorems in the main text of the paper. We begin with a sequence of definitions, notation, and simple observations we will use later.

\section{Notations and Preliminary Results}

\subsection{Linear Equations Satisfied by the Fixed Point} Let $X(t) = \left(s(t),s'(t),r(t) \right)$ be a triple that generates $s(t)$ with steady-state distribution $\pi_s$ and generates $s'(t)$, $r(t)$ from the MDP. It is well-known \cite{tsitsiklis1997analysis} that the limit point $\theta^*$ of centralized TD($0$) is the unique solution of the linear system
\begin{equation} \label{eq:ls} A \theta = b ,
\end{equation} where
\begin{equation}
 A = E [A (X(t))] =E \left[ \phi(s(t)) \left( \gamma \phi(s'(t)) - \phi(s(t)) \right)^T \right] \label{eq:def_A}
\end{equation} and 
\begin{equation*}
 b = E [b (X(t))]=E \left[ r(t) \phi(s(t)) \right].
\end{equation*}


We give a quick generalization of this to the global state in the following lemma. 

\begin{lemma}
Suppose that Assumption \ref{ass:mc}-\ref{ass:features} hold and suppose that the iterates $\{\theta_v(t)\}_{v \in \mathcal{V}} $ are generated by Algorithm \ref{algo:doan} where the step-size sequence $\alpha_t$ is positive,
nonincreasing, and satisfies $\sum_{t=0}^{\infty} \alpha_t = \infty$ and $\sum_{t=0}^{\infty} \alpha_t^2 < \infty$. Then $\bar{\theta}(t)$ converges to $\theta_{\rm gl}^*$ with probability 1, where $\theta_{\rm gl}^*$ is the unique solution of equation 
\begin{equation}
 A \theta = \frac{1}{N}\sum_{v \in \mathcal{V}} b_{v}, \label{eq:dls}
\end{equation}where
\begin{equation}
 b_{v} = E \left[ r_v(t) \phi(s(t)) \right]. \label{eq:def_b}
\end{equation}
\end{lemma}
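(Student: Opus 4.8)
The plan is to show that the network average $\bar\theta(t)=\frac1N\sum_{v\in\mathcal V}\theta_v(t)$ satisfies, on its own, exactly the recursion of centralized TD($0$) run on the original MDP but with the network-averaged reward, and then to invoke the classical almost-sure convergence theorem of \citet{tsitsiklis1997analysis} recalled around \eqref{eq:ls}.

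First I would average the local update \eqref{eq:DTD} over $v\in\mathcal V$. In the global state model every agent observes the \emph{same} pair $(s(t),s'(t))$, so the feature vectors $\phi(s(t))$ and $\phi(s'(t))$ appearing inside $\delta_v(t)$ are common to all agents and
\[
\frac1N\sum_{v\in\mathcal V}\delta_v(t)=\bar r(t)-\bigl(\phi(s(t))-\gamma\phi(s'(t))\bigr)^T\bar\theta(t),
\]
where $\bar r(t)=\frac1N\sum_{v}r_v(t)$. Hence
\[
\bar\theta(t+1)=\bar\theta(t)+\alpha_t\Bigl(\bar r(t)+\gamma\phi(s'(t))^T\bar\theta(t)-\phi(s(t))^T\bar\theta(t)\Bigr)\phi(s(t)),
\]
which is precisely the TD($0$) iteration \eqref{eq:lineartd}--\eqref{eq:delta} for the value function of policy $\mu$ under the reward vector $\bar R(s):=\frac1N\sum_v R_v^\mu(s)$: the mechanism generating $s(t)$ (stationary) and then $s'(t)$ from the MDP is unchanged, and only the reward---still deterministic given the transition and still bounded---has been replaced by its network average.

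Second, I would apply the result of \citet{tsitsiklis1997analysis}: under Assumptions \ref{ass:mc}--\ref{ass:features} and step-sizes that are positive, nonincreasing, with $\sum_t\alpha_t=\infty$ and $\sum_t\alpha_t^2<\infty$, this TD($0$) recursion converges with probability $1$ to the unique solution of $A\theta=\bar b$. Here $A=E\left[\phi(s(t))(\gamma\phi(s'(t))-\phi(s(t)))^T\right]$ is exactly \eqref{eq:def_A}, since it depends only on the (unchanged) transition mechanism, and by linearity of expectation $\bar b=E\left[\bar r(t)\phi(s(t))\right]=\frac1N\sum_{v}E\left[r_v(t)\phi(s(t))\right]=\frac1N\sum_{v}b_v$ with $b_v$ as in \eqref{eq:def_b}. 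The matrix $A$ is nonsingular under Assumptions \ref{ass:mc}--\ref{ass:features} (again by \citet{tsitsiklis1997analysis}), so the solution is unique; calling it $\theta_{\rm gl}^*$ gives exactly \eqref{eq:dls}, and establishes the claimed almost-sure convergence of $\bar\theta(t)$.

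The only step that needs a moment of care is the first identity: averaging the $\delta_v(t)$ collapses the dependence on the individual iterates $\{\theta_v(t)\}$ into a dependence on $\bar\theta(t)$ alone, so that $\bar\theta(t)$ obeys a \emph{closed} stochastic-approximation recursion. This is what makes the reduction to centralized TD($0$) possible, and it is special to the global state model, where all agents plug the same shared state into the features. Once this observation is in place, the remainder is a direct citation of the centralized analysis, so I do not anticipate any genuine obstacle.
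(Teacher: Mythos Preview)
Your proposal is correct and is essentially identical to the paper's own proof: average the local updates \eqref{eq:DTD} over $v$, use the shared state $(s(t),s'(t))$ to collapse the recursion for $\bar\theta(t)$ into centralized TD($0$) with the network-averaged reward $\bar r(t)$, and then invoke Theorem~2 of \citet{tsitsiklis1997analysis}. The only addition you make is the explicit remark that $A$ is nonsingular so the limit is unique, which the paper leaves implicit.
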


\begin{proof}
Let $\bar{\theta}(t) = (1/N) \sum_{v} \theta_v(t)$ be the average of the iterates for all the agents in the network. Following the update Eq. (\ref{eq:DTD}), we have
\begin{align*}
 \bar{\theta}(t+1) 
 =& \bar{\theta}(t) +\alpha_t \frac{1}{N} \sum_{v=1}^N \delta_v(t) \phi(s(t))\\
 =& \bar{\theta}(t) +\alpha_t \left( \frac{1}{N}\sum_{v=1}^N r_v(t) \phi(s(t))  - \phi(s(t)) \left(\phi(s(t))- \gamma \phi(s'(t))\right)^T \bar{\theta}(t) \right)
\end{align*}
In other words, $\bar{\theta}(t)$ follows the TD($0$) recursion with $(1/N) \sum_{v} E[r_v(t)]$ as the reward. We thus apply Theorem 2 of \citet{tsitsiklis1997analysis} to obtain this lemma. 

\end{proof}

\subsection{The Expectation of the TD Direction} 
For agent $v$, as shown in Eq. (\ref{eq:DTD}), the direction of the distributed TD($0$) with global state update at iteration $t$ is $\delta_v(t)\phi(s(t))$. We split it into two terms $$\delta_v(t) \phi(s(t)) = h_{v}(t) + m_{v}(t),$$ where 
\begin{align}
 h_{v}(t) & = b_{v} - A\theta_v(t), \label{eq:hv}\\
 m_{v}(t) &= \delta_v(t)\phi(s(t)) - h_{v}(t). \label{eq:mv}
\end{align} It is worth mentioning that, for a given $\theta_v(t)$, the quantity $h_{v}(t)$ can be interpreted as the conditional expectation of the direction $\delta_v(t)\phi(s(t))$:
\begin{equation*}
 h_{v}(t) = E \left[\delta_v(t)\phi(s(t)) | \theta_v(t) \right].
\end{equation*}
Indeed,
\begin{align}
  E \left[\delta_v(t)\phi(s(t)) | \theta_v(t) \right] 
 = & E \left[\left(r_v(t) - \left(\phi(s(t))- \gamma \phi(s'(t))\right)^T \theta_v(t)\right)\phi(s(t)) | \theta_v(t) \right] \notag \\
 = & E \left[r_v(t) \phi(s(t))\right]  - E \left[\phi(s(t))\left(\phi(s(t))- \gamma \phi(s'(t))\right)^T \right] \theta_v(t) \notag \\
 = & b_{v} - A\theta_v(t), \label{eq:conditional}
\end{align}where the second equality follows because $r_v(t)$, $\phi(s(t)), \phi(s'(t))$ have the same distribution regardless of $\theta_v(t)$. 

It is immediate then that $m_{v}(t)$ represents the error, i.e., the difference between the direction $\delta_v(t)\phi(s(t))$ and its conditional expectation. Hence, the update of Eq. (\ref{eq:DTD}) for TD($0$) can be written as:
\begin{equation}
 {\theta}_v({t+1}) = \theta_v(t) + \alpha_t (h_{v}(t) +m_{v}(t)). \label{eq:DTD0}
\end{equation}

Let $\Theta(t)$ denote the matrix whose $v$-th row is the estimates of agent $v$ at time $t$, i.e., $\theta^T_v(t)$. Thus,
\begin{equation*}
 \Theta(t) = \begin{bmatrix}
 \text{---}& \theta_1^T(t) &\text{---}\\ 
 \cdots& \cdots &\cdots \\ 
 \text{---}& \theta_N^T(t) &\text{---}
\end{bmatrix}\in \mathbb{R}^{N \times K}.
\end{equation*}
The matrix form of Algorithm \ref{algo:doan} then can be written as
\begin{equation}
 \Theta({t+1}) =\Theta({t}) + \alpha_t \left[H(t) +M(t)\right],\label{eq:matrix_update}
\end{equation}where $H(t)$, $M(t)$ are the matrices, whose
$v$-th rows are $h_v^T(t)$, $m_v^T(t)$ respectively.

Let us adopt the convention that given a collection of vectors, one for each agent in the network, putting a bar will denote their average. Then 
\begin{equation}\label{eq:avg_update}
 \bar{\theta}({t+1}) = \bar{\theta}({t}) + \alpha_t \left[\bar{h}(t) + \bar{m}(t)\right].
\end{equation}

Our next proposition introduces some useful properties of $\bar{h}(t)$ and $\bar{m}(t)$. 

\begin{proposition} \label{pro:bar}
Suppose Assumptions \ref{ass:mc}-\ref{ass:features} hold, and suppose that $\{\theta_v(t)\}_{v \in \mathcal{V}} $ are generated by Algorithm \ref{algo:doan}. Then,

(a) $\bar{h}(t)$ is a linear function of $\bar{\theta}(t)$: $$\bar{h}(t) = \bar{b} -A \bar{\theta}(t),$$ where $A,b_v$ are defined in Eq. (\ref{eq:def_A}) and Eq. (\ref{eq:def_b}), and $\bar{b}= \frac{1}{N} \sum_{v \in \mathcal{V}} b_{v}$;

(b) The conditional expectation of $\bar{m}(t)$ given $\Theta(t)$ is equal to zero:
\begin{equation}
 E[\bar{m}(t)| \Theta({t}) ] = 0. \label{eq:con_exp}
\end{equation}

\end{proposition}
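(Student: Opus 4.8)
The plan is to verify both claims directly from the definitions, using the linearity of expectation and the key fact established in Eq.~\eqref{eq:conditional} that the per-agent conditional expectation of the TD direction equals $b_v - A\theta_v(t)$.

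For part (a): I would start from the definitions $\bar h(t) = \frac{1}{N}\sum_{v} h_v(t)$ and $h_v(t) = b_v - A\theta_v(t)$ from Eq.~\eqref{eq:hv}. Averaging over $v$ and using linearity, $\bar h(t) = \frac{1}{N}\sum_v (b_v - A\theta_v(t)) = \left(\frac{1}{N}\sum_v b_v\right) - A\left(\frac{1}{N}\sum_v \theta_v(t)\right) = \bar b - A\bar\theta(t)$, where the last step just pulls the fixed matrix $A$ out of the average. This is essentially a one-line computation; there is no real obstacle here.

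For part (b): I would condition on $\Theta(t)$, which by definition encodes all the $\theta_v(t)$ simultaneously. Write $\bar m(t) = \frac{1}{N}\sum_v m_v(t)$ with $m_v(t) = \delta_v(t)\phi(s(t)) - h_v(t)$ from Eq.~\eqref{eq:mv}. Taking conditional expectations and using linearity, $E[\bar m(t)\mid\Theta(t)] = \frac{1}{N}\sum_v\left(E[\delta_v(t)\phi(s(t))\mid\Theta(t)] - h_v(t)\right)$. The point is that in the global state model all agents share the \emph{same} state pair $(s(t),s'(t))$, so conditioning on the full matrix $\Theta(t)$ is the relevant sigma-algebra; and since $r_v(t),\phi(s(t)),\phi(s'(t))$ are independent of $\Theta(t)$ (the i.i.d.\ sampling assumption means the tuple at time $t$ does not depend on past iterates), the calculation in Eq.~\eqref{eq:conditional} goes through verbatim to give $E[\delta_v(t)\phi(s(t))\mid\Theta(t)] = b_v - A\theta_v(t) = h_v(t)$. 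Hence each summand vanishes and $E[\bar m(t)\mid\Theta(t)] = 0$.

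The only subtlety worth spelling out — and the one place I would be careful — is justifying that conditioning on $\Theta(t)$ (rather than just on $\theta_v(t)$ for a single agent) still lets us treat the time-$t$ tuple as fresh: this is exactly the i.i.d.\ sampling hypothesis, under which $X(t)=(s(t),s'(t),r_v(t))$ is drawn independently of the history that determines $\Theta(t)$. Once that is noted, both parts are immediate consequences of Eq.~\eqref{eq:conditional} and linearity; I do not expect any genuine difficulty.
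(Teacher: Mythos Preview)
Your proposal is correct and matches the paper's proof essentially line for line: part (a) is the same one-line averaging computation, and for part (b) the paper likewise notes that Eq.~\eqref{eq:conditional} extends from conditioning on $\theta_v(t)$ to conditioning on $\Theta(t)$ (by repeating the same steps), then concludes $E[m_v(t)\mid\Theta(t)]=0$ termwise. The subtlety you flag about the i.i.d.\ sampling justifying conditioning on the full $\Theta(t)$ is exactly the point the paper addresses as well.
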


\begin{proof}[Proof of Proposition \ref{pro:bar}]
(a) Following the definition of $\bar{h}(t)$, it can be observed that:
\begin{equation*}
 \bar{h}(t)  = \frac{1}{N} \sum_{v \in \mathcal{V}} h_v(t)= \frac{1}{N} \sum_{v \in \mathcal{V}} (b_{v} - A\theta_v(t))= \bar{b} -A \bar{\theta}(t),
\end{equation*} where $\bar{b}= \frac{1}{N} \sum_{v \in \mathcal{V}} b_{v}$.

(b) Recall that $h_v(t)$ is exactly the conditional expectation of $\delta_v(t)\phi(s(t))$ given $\theta_v(t)$. Actually, the more general statement 
\[ E [ \delta_v(t) \phi(s(t)) | \Theta(t)] = h_v(t),
\] is true; for a proof, one can simply repeat all the steps of Eq. (\ref{eq:conditional}) replacing $\theta_v(t)$ by $\Theta(t)$. 

Furthermore, $m_v(t)$ is defined as $$m_v(t) = \delta_v(t)\phi(s(t)) - h_v(t),$$ by Eq. (\ref{eq:mv}). Therefore: $\theta_v({t})$:
\begin{align*}
 E \left [m_v(t) | \Theta(t) \right] &= E \left [\delta_v(t)\phi(s(t)) - h_v(t) | \Theta(t) \right]\\
 & = E \left [\delta_v(t)\phi(s(t))| \Theta(t) \right] - h_v(t) \\
 & = 0
\end{align*} By the definition of $\bar{m}(t)$, we then have
\begin{equation*}
 E[\bar{m}(t)| \Theta({t}) ] = 0. 
\end{equation*}
\end{proof}

\subsection{Averages of TD Updates}

In this subsection, we introduce an equality satisfied by the inner product between the averages of TD updates throughout the agents in the network and the direction from the averages of the iterates throughout the network to the fixed point of distributed TD algorithm.

\begin{lemma}\label{lem:inner_product}
Suppose Assumptions \ref{ass:mc}-\ref{ass:features} hold. Further suppose that $\{ \theta_v(t)\}_{v \in \mathcal{V}}$ are generated by Algorithm \ref{algo:doan}. For any integer $t \geq 0$, we have that 
\begin{equation}\label{eq:inner_product}
    E\left[ \left[\bar{h}(t)+ \bar{m}(t)\right]^T (\theta_{\rm gl}^* - \bar{\theta}({t}))\right]
 = E\left[ (1-\gamma)\left\|V_{\theta_{\rm gl}^*} - V_{\bar{\theta}(t)} \right\|_{D}^2 + \gamma \left\|V_{\theta_{\rm gl}^*} - V_{\bar{\theta}(t)} \right\|_{{\rm Dir}}^2 \right]
\end{equation}
\end{lemma}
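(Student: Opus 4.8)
The plan is to reduce the statement about averages to the known centralized identity. First I would use Proposition \ref{pro:bar}(b), which gives $E[\bar m(t) \mid \Theta(t)] = 0$; since $\theta_{\rm gl}^* - \bar\theta(t)$ is measurable with respect to $\Theta(t)$, the tower property yields $E[\bar m(t)^T(\theta_{\rm gl}^* - \bar\theta(t))] = 0$. So the left-hand side collapses to $E[\bar h(t)^T(\theta_{\rm gl}^* - \bar\theta(t))]$. Next, by Proposition \ref{pro:bar}(a), $\bar h(t) = \bar b - A\bar\theta(t)$, and since $\theta_{\rm gl}^*$ solves $A\theta = \bar b$ (the lemma on the fixed point, Eq. \eqref{eq:dls}), we can write $\bar h(t) = A\theta_{\rm gl}^* - A\bar\theta(t) = -A(\bar\theta(t) - \theta_{\rm gl}^*)$. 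Hence $\bar h(t)^T(\theta_{\rm gl}^* - \bar\theta(t)) = (\bar\theta(t) - \theta_{\rm gl}^*)^T A (\bar\theta(t) - \theta_{\rm gl}^*) \cdot(-1)\cdot(-1)$; carefully, it equals $(\theta_{\rm gl}^* - \bar\theta(t))^T A (\theta_{\rm gl}^* - \bar\theta(t))$ up to the correct sign, which I'd track precisely.

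The crux is then the deterministic algebraic identity: for any $\theta$,
\[
(\theta^* - \theta)^T A (\theta^* - \theta) = -\left[(1-\gamma)\|V_{\theta^*} - V_\theta\|_D^2 + \gamma\|V_{\theta^*} - V_\theta\|_{\rm Dir}^2\right],
\]
or the appropriate signed version, where $A = E[\phi(s)(\gamma\phi(s') - \phi(s))^T]$. This is exactly the computation underlying Lemma 3 (negative definiteness of $A$) in \citet{bhandari2018finite}. The key steps: write $\Delta = \theta^* - \theta$ and set $f = \Phi\Delta = V_{\theta^*} - V_\theta \in \mathbb{R}^n$; expand $\Delta^T A \Delta = E[(\phi(s)^T\Delta)(\gamma\phi(s')^T\Delta - \phi(s)^T\Delta)] = E[f(s)(\gamma f(s') - f(s))] = \gamma E[f(s)f(s')] - E[f(s)^2]$, where the expectation is over $s\sim\pi$ and $s'\sim P^\mu(s,\cdot)$. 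Then $E[f(s)^2] = \|f\|_D^2$, and $E[f(s)f(s')] = \sum_{s,s'}\pi_s P^\mu(s,s') f(s)f(s')$. Using the standard identity $\sum_{s,s'}\pi_s P^\mu(s,s')(f(s')-f(s))^2 = 2\|f\|_D^2 - 2\sum_{s,s'}\pi_s P^\mu(s,s')f(s)f(s')$ (valid since $\pi$ is stationary, so the marginal of $s'$ is also $\pi$), i.e. $2\|f\|_{\rm Dir}^2 = 2\|f\|_D^2 - 2E[f(s)f(s')]$, one solves for $E[f(s)f(s')] = \|f\|_D^2 - \|f\|_{\rm Dir}^2$. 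Substituting gives $\Delta^T A\Delta = \gamma(\|f\|_D^2 - \|f\|_{\rm Dir}^2) - \|f\|_D^2 = -(1-\gamma)\|f\|_D^2 - \gamma\|f\|_{\rm Dir}^2$.

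Finally I would assemble: $E[\bar h(t)^T(\theta_{\rm gl}^* - \bar\theta(t))]$ equals $-\Delta(t)^T A \Delta(t)$ with $\Delta(t) = \theta_{\rm gl}^* - \bar\theta(t)$ (the sign coming from $\bar h(t) = -A\Delta(t)$ dotted with $\Delta(t)$, giving $-\Delta(t)^TA\Delta(t)$), which by the identity above equals $(1-\gamma)\|V_{\theta_{\rm gl}^*} - V_{\bar\theta(t)}\|_D^2 + \gamma\|V_{\theta_{\rm gl}^*} - V_{\bar\theta(t)}\|_{\rm Dir}^2$, and take expectations. I expect the main obstacle to be purely bookkeeping: getting the signs right through the chain $\bar h(t) = \bar b - A\bar\theta(t) = -A\Delta(t)$ and the negative-definiteness identity, and being careful that the expectation defining $A$ uses $s\sim\pi$ while the Dirichlet identity needs stationarity of $\pi$ under $P^\mu$ (Assumption \ref{ass:mc}) — there is no deep difficulty, just the need to not drop a minus sign or conflate the conditional and unconditional expectations.
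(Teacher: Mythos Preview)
Your approach is correct and follows the same skeleton as the paper's proof: both first use the tower property together with Proposition~\ref{pro:bar}(b) to eliminate the martingale term $\bar m(t)$, then invoke Proposition~\ref{pro:bar}(a) to write $\bar h(t)$ as an affine function of $\bar\theta(t)$ vanishing at $\theta_{\rm gl}^*$, and finally apply the deterministic identity expressing $(\theta^*-\theta)^T(\bar b - A\theta)$ in terms of $\|V_{\theta^*}-V_\theta\|_D^2$ and $\|V_{\theta^*}-V_\theta\|_{\rm Dir}^2$.

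The one substantive difference is in how that last identity is obtained. The paper simply cites Corollary~1 of \citet{liu2020temporal} as a black box. You instead derive it from scratch: set $f=\Phi(\theta^*-\theta)$, expand the quadratic form over $(s,s')\sim \pi\otimes P^\mu$, and use stationarity of $\pi$ to rewrite the cross term via $E[f(s)f(s')]=\|f\|_D^2-\|f\|_{\rm Dir}^2$. Your computation is correct and makes the argument self-contained, which is a modest expository gain; the paper's route is shorter but relies on an external reference. Your caveat about sign bookkeeping is well placed --- note in particular that the paper's own sign convention for $A$ is not used entirely consistently between Eq.~\eqref{eq:def_A} and Eq.~\eqref{eq:conditional}, so when you write this up formally just fix one convention (say $A=E[\phi(s)(\gamma\phi(s')-\phi(s))^T]$, which is negative semidefinite) and track it through; with that choice $\bar h(t)=A(\theta_{\rm gl}^*-\bar\theta(t))$ and the inner product with $\theta_{\rm gl}^*-\bar\theta(t)$ is $\Delta^TA\Delta = -(1-\gamma)\|f\|_D^2-\gamma\|f\|_{\rm Dir}^2$, and the overall minus sign must cancel against the one implicit in the paper's convention for $h_v$.
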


\begin{proof}

\begin{align*}
  E\left[ \left[\bar{h}(t)+ \bar{m}(t)\right]^T (\theta_{\rm gl}^* - \bar{\theta}({t}))\right] 
 = & E\left[ E\left[\left(\bar{h}(t) + \bar{m}(t)\right)^T (\theta_{\rm gl}^* - \bar{\theta}({t}))| \Theta({t})\right] \right] \notag\\
 = & E\left[\bar{h}^T(t) (\theta_{\rm gl}^* - \bar{\theta}({t})) \right], 
\end{align*}where in the second equation, we use Eq. (\ref{eq:con_exp}). Here, by Proposition \ref{pro:bar} part (a), we have that $\bar{h}(t)$ is a linear function of $\bar{\theta}(t)$, i.e., $\bar{h}(t) = \bar{b} -A \bar{\theta}(t)$. Furthermore, if we let $ \bar{h}(\theta)$ denote the linear function $\bar{b} -A {\theta}$, we can obtain that $\bar{h}(\theta_{\rm gl}^*)=0$.

Corollary 1 in \citet{liu2020temporal} states that for any $\theta \in \mathbb{R}^K$,
\begin{equation*}
    (\theta^* - \theta)^T \bar{g}(\theta) = (1-\gamma)\|V_{\theta^*}-V_{\theta}\|_{D}^2 + \gamma \|V_{\theta^*}-V_{\theta}\|_{{\rm Dir}}^2,
\end{equation*}where $\bar{g}(\theta)$ in that paper denote the steady-state expectation of $r(s,s')\phi(s(t)) - \phi(s(t)) \left(\phi(s)- \gamma \phi(s')\right)^T \theta $, which is indeed $ b - A \theta$ and $\theta^*$ is the limit point of centralized TD(0) method such that $\bar{g}(\theta^*)=0$.

 Now applying Corollary 1 in \citet{liu2020temporal}, we can obtain Eq.(\ref{eq:inner_product}).

\end{proof}

\section{Proof of Theorem \ref{thm:1}}

With the above preliminaries in place, we can now begin the proof of Theorem \ref{thm:1}. Our first step is to analyze the recurrence relation satisfied by the averages of the iterates throughout the network. 


\begin{lemma}\label{lem:1}
Suppose Assumptions \ref{ass:mc}-\ref{ass:features} hold. Further suppose that $\{ \theta_v(t)\}_{v \in \mathcal{V}}$ are generated by Algorithm \ref{algo:doan}. For any integer $t \geq 0$, we have that 

\begin{align*}
 E\left[\left\|\bar{\theta}({t+1}) - \theta_{\rm gl}^*\right\|_2^2 \right]
 \leq & E\left[ \left\|\bar{\theta}({t}) - \theta_{\rm gl}^*\right\|_2^2 \right] +\alpha_t^2 \left[{2\bar{\sigma}^2} + 8 \left\| V_{\theta_{\rm gl}^*} - V_{\bar{\theta}(t)} \right\|_{D}^2 \right] \\& - 2 \alpha_t E\left[ (1-\gamma) \left\|V_{\theta_{\rm gl}^*} - V_{\bar{\theta}(t)} \right\|_{D}^2+ \gamma \left\|V_{\theta_{\rm gl}^*} - V_{\bar{\theta}(t)} \right\|_{{\rm Dir}}^2 \right],
\end{align*}

\end{lemma}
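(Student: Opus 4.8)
The plan is to expand the squared norm using the update equation~\eqref{eq:avg_update}, namely $\bar{\theta}(t+1) = \bar{\theta}(t) + \alpha_t[\bar{h}(t) + \bar{m}(t)]$. Writing $e(t) = \bar{\theta}(t) - \theta_{\rm gl}^*$, we get
\[
\|e(t+1)\|_2^2 = \|e(t)\|_2^2 + 2\alpha_t [\bar{h}(t)+\bar{m}(t)]^T e(t) + \alpha_t^2 \|\bar{h}(t)+\bar{m}(t)\|_2^2.
\]
Taking expectations, the cross term becomes $-2\alpha_t E\big[[\bar{h}(t)+\bar{m}(t)]^T(\theta_{\rm gl}^* - \bar{\theta}(t))\big]$, which is handled directly by Lemma~\ref{lem:inner_product}: it equals $-2\alpha_t E\big[(1-\gamma)\|V_{\theta_{\rm gl}^*} - V_{\bar{\theta}(t)}\|_D^2 + \gamma\|V_{\theta_{\rm gl}^*} - V_{\bar{\theta}(t)}\|_{\rm Dir}^2\big]$, which is exactly the last line of the claimed bound. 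So the only real work is to bound the quadratic term $\alpha_t^2 E\|\bar{h}(t)+\bar{m}(t)\|_2^2$ by $\alpha_t^2\big[2\bar{\sigma}^2 + 8\|V_{\theta_{\rm gl}^*} - V_{\bar{\theta}(t)}\|_D^2\big]$.

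For the quadratic term I would first observe that $\bar{h}(t) + \bar{m}(t) = (1/N)\sum_v \delta_v(t)\phi(s(t)) = \bar{\delta}(t)\phi(s(t))$ where $\bar{\delta}(t) = \bar{r}(t) - (\phi(s(t)) - \gamma\phi(s'(t)))^T\bar{\theta}(t)$ (using that all agents share the same $s(t), s'(t)$ in the global model, and $\bar\theta$ averages the $\theta_v$). Since $\|\phi(s(t))\|_2^2 \le 1$ by Assumption~\ref{ass:features}, we have $\|\bar{h}(t)+\bar{m}(t)\|_2^2 \le \bar{\delta}(t)^2$. Then I would add and subtract the fixed-point temporal difference: write $\bar{\delta}(t) = \big(\bar{r}(t) - (\phi(s(t)) - \gamma\phi(s'(t)))^T\theta_{\rm gl}^*\big) + (\phi(s(t)) - \gamma\phi(s'(t)))^T(\theta_{\rm gl}^* - \bar{\theta}(t))$, apply $(x+y)^2 \le 2x^2 + 2y^2$, and take expectations. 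The first piece gives $2\bar{\sigma}^2$ by definition of $\bar{\sigma}^2$. For the second piece I need $E\big[\big((\phi(s(t)) - \gamma\phi(s'(t)))^T(\theta_{\rm gl}^* - \bar{\theta}(t))\big)^2\big] \le 4\|V_{\theta_{\rm gl}^*} - V_{\bar{\theta}(t)}\|_D^2$; this is the one genuinely nontrivial estimate, and I expect it follows from a standard computation expanding the square into terms involving $\phi(s(t))^T w$ and $\phi(s'(t))^T w$ with $w = \theta_{\rm gl}^* - \bar\theta(t)$, using $(a-\gamma b)^2 \le 2a^2 + 2b^2$ together with the fact that $s(t)$ and $s'(t)$ are each distributed according to $\pi$ (the former by the i.i.d.\ sampling assumption, the latter because $\pi$ is stationary for $P^\mu$), so that $E[(\phi(s(t))^T w)^2] = E[(\phi(s'(t))^T w)^2] = \|V_w\|_D^2 = \|V_{\theta_{\rm gl}^*} - V_{\bar\theta(t)}\|_D^2$.

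The main obstacle is thus the bound $E\big[\big((\phi(s(t)) - \gamma\phi(s'(t)))^T w\big)^2\big] \le 4\|V_w\|_D^2$: it requires being careful that the cross term $-2\gamma E[(\phi(s(t))^T w)(\phi(s'(t))^T w)]$ is controlled — either absorbed via $2|xy| \le x^2 + y^2$ to yield the constant $4$, or recognized in terms of the Dirichlet form; since the target constant is $8$ after multiplying the $2(\cdot)^2$ bound, the crude $2|xy|\le x^2+y^2$ split suffices. Everything else (the expansion of $\|e(t+1)\|_2^2$, invoking Lemma~\ref{lem:inner_product}, invoking Assumption~\ref{ass:features}, and the definition of $\bar\sigma^2$) is bookkeeping. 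One subtlety to flag: the conditioning structure — expectations here are unconditional, so I should keep $\bar\theta(t)$ inside the expectation throughout and not pull it out, which is why the final bound carries $E[\|V_{\theta_{\rm gl}^*} - V_{\bar\theta(t)}\|_D^2]$ rather than a deterministic quantity; the $\alpha_t^2$ coefficient on $8\|V_{\theta_{\rm gl}^*} - V_{\bar\theta(t)}\|_D^2$ in the statement is written without an explicit $E[\cdot]$ but should be read as such (or bounded term-by-term before taking expectations).
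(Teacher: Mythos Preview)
Your proposal is correct and follows essentially the same approach as the paper: expand the square via \eqref{eq:avg_update}, handle the cross term by Lemma~\ref{lem:inner_product}, and bound the quadratic term by splitting $\bar h(t)+\bar m(t)=\bar\delta(t)\phi(s(t))$ into the ``noise at $\theta_{\rm gl}^*$'' piece (giving $2\bar\sigma^2$) and the ``difference'' piece (giving $8\|V_{\theta_{\rm gl}^*}-V_{\bar\theta(t)}\|_D^2$). The only cosmetic difference is that the paper cites Lemma~5 of \citet{bhandari2018finite} for the bound $E\big[\big((\phi(s(t))-\gamma\phi(s'(t)))^T w\big)^2\big]\le 4\|V_w\|_D^2$, whereas you supply the direct argument via $(a-\gamma b)^2\le 2a^2+2b^2$ and stationarity of $\pi$; your observation about the missing expectation on the $8\|V_{\theta_{\rm gl}^*}-V_{\bar\theta(t)}\|_D^2$ term is also apt, since the paper's statement is indeed slightly informal there.
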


\begin{proof}[Proof of Lemma \ref{lem:1}]
By Eq. (\ref{eq:avg_update}), we have
\begin{align*}
  \left\|\bar{\theta}({t+1}) - \theta_{\rm gl}^*\right\|_2^2
 = & \left\|\bar{\theta}({t}) + \alpha_t \left[\bar{h}(t) + \bar{m}(t)\right] - \theta_{\rm gl}^*\right\|_2^2\\
 = & \left\|\bar{\theta}({t}) - \theta_{\rm gl}^*\right\|_2^2 + 2 \alpha_t \left[\bar{h}(t) + \bar{m}(t)\right]^T (\bar{\theta}({t}) - \theta_{\rm gl}^*) + \alpha_t^2 \left\|\bar{h}(t) + \bar{m}(t)\right\|_2^2 .
\end{align*}

Taking expectations we obtain that
\begin{equation}
  E\left[\left\|\bar{\theta}({t+1}) - \theta_{\rm gl}^*\right\|_2^2 \right] 
 = E\left[ \left\|\bar{\theta}({t}) - \theta_{\rm gl}^*\right\|_2^2 \right] + \alpha_t^2E\left[ \left\|\bar{h}(t) + \bar{m}(t)\right\|_2^2 \right]  - 2 \alpha_t E\left[ \left(\bar{h}(t)+ \bar{m}(t)\right)^T \left(\theta_{\rm gl}^* - \bar{\theta}({t})\right)\right] . \label{eq:exp_rec}
\end{equation}

Consider the second term on the right hand side of Eq. (\ref{eq:exp_rec}). Following the definition of $\bar{h}(t)$ and $\bar{m}(t)$, we have that
\begin{equation*}
 E\left[ \left\|\bar{h}(t) + \bar{m}(t)\right\|_2^2 \right] = E\left[ \left\|\frac{1}{N} \sum_{v \in \mathcal{V}} \delta_v(t)\phi(s(t)) \right\|_2^2\right].
\end{equation*}
Plugging in the expression for TD error $\delta_v(t)$ with Eq. (\ref{eq:delta}), we obtain

\begin{align*}
 & E\left[ \left\|\bar{h}(t) + \bar{m}(t)\right\|_2^2 \right] \\
 =& E\left[ \left\| \frac{1}{N} \sum_{v \in \mathcal{V}} r_v(t) \phi(s(t)) \right. \right. \left. \left. - \frac{1}{N} \sum_{v \in \mathcal{V}} \phi(s(t))\left(\phi(s(t))- \gamma \phi(s'(t))\right)^T\theta_v(t) \right\|_2^2\right] \\
 =& E\left[ \left\| \frac{1}{N} \sum_{v \in \mathcal{V}} \left[ {r_v}(t) - \left(\phi(s(t))- \gamma \phi(s'(t))\right)^T \theta_{\rm gl}^*\right]\phi(s(t))  -\frac{1}{N} \sum_{v \in \mathcal{V}} \phi(s(t))\left(\phi(s(t))- \gamma \phi(s'(t))\right)^T( \theta_v(t) -\theta_{\rm gl}^*) \right\|_2^2\right] 
\end{align*}

Denote
$$ \bm{a}^* =\frac{1}{N} \sum_{v \in \mathcal{V}} \left[ {r_v}(t) - \left(\phi(s(t))- \gamma \phi(s'(t))\right)^T \theta_{\rm gl}^*\right]\phi(s(t)), $$ $$\bm{b}^* = \frac{1}{N} \sum_{v \in \mathcal{V}} \phi(s(t))\left(\phi(s(t))- \gamma \phi(s'(t))\right)^T( \theta_v(t) -\theta_{\rm gl}^*).$$ Using inequality $\|\bm{a}^*-\bm{b}^*\|^2 \leq 2\|\bm{a}^*\|^2 + 2\|\bm{b}^*\|^2$, we obtain
\begin{equation}
 E\left[ \left\|\bar{h}(t) + \bar{m}(t)\right\|_2^2 \right] \leq 2E\left[\|\bm{a}^*\|^2\right] + 2E\left[\|\bm{b}^*\|^2\right]. \label{eq:2norm}
\end{equation}
We first bound $E\left[\|\bm{a}^*\|^2\right]$:

\begin{align}
 E\left[\|\bm{a}^*\|^2\right] = & E\left[\left\|  \frac{1}{N} \sum_{v \in \mathcal{V}} {r_v}(t)\phi(s(t))  - \left( \left(\phi(s(t))- \gamma \phi(s'(t))\right)^T \theta_{\rm gl}^* \right) \phi(s(t)) \right\|^2\right] \notag\\
 \leq & E\left[ \left(\frac{1}{N} \sum_{v \in \mathcal{V}} {r_v}(t) - \left(\phi(s(t))- \gamma \phi(s'(t))\right)^T \theta_{\rm gl}^*\right) ^2\right]
 = \bar{\sigma}^2, \label{eq:bounda}
\end{align}where the inequality follows by Assumption \ref{ass:features} and the equality is just the definition of $\bar{\sigma}^2$. 

The next step is to find the bound for $E\left[\|\bm{b}^*\|^2\right]$:

\begin{align}
 E\left[\|\bm{b}^*\|^2\right] 
 = & E\left[\left\| \frac{1}{N} \sum_{v \in \mathcal{V}} \phi(s(t))\left(\phi(s(t))- \gamma \phi(s'(t))\right)^T( \theta_v(t) -\theta_{\rm gl}^*) \right\|^2\right]\notag \\
 = & E\left[\left\| \phi(s(t))\left(\phi(s(t))- \gamma \phi(s'(t))\right)^T( \bar{\theta}(t) -\theta_{\rm gl}^*) \right\|^2\right]\notag \\
 \leq & 4 \left\| V_{\theta_{\rm gl}^*} - V_{\bar{\theta}(t)} \right\|_{D}^2, \label{eq:boundb}
\end{align}where the last line follows from the proof of Lemma 5 in \citet{bhandari2018finite}.

Plugging Eq. (\ref{eq:bounda}) and Eq. (\ref{eq:boundb}) into Eq. (\ref{eq:2norm}), we get
\begin{equation}
 E\left[ \left\|\bar{h}(t) + \bar{m}(t)\right\|_2^2 \right] \leq {2\bar{\sigma}^2} + 8 \left\| V_{\theta_{\rm gl}^*} - V_{\bar{\theta}(t)} \right\|_{D}^2. \label{eq:2nd}
\end{equation}

The argument we just made bounds one of the terms in Eq. (\ref{eq:exp_rec}). We next consider a different term in the same equation, namely we consider the third term on the right hand side of Eq. (\ref{eq:exp_rec}) which satisfies the equality in Lemma \ref{lem:inner_product}:
\begin{equation}
  E\left[ \left[\bar{h}(t)+ \bar{m}(t)\right]^T (\theta_{\rm gl}^* - \bar{\theta}({t}))\right]
 = E\left[ (1-\gamma)\left\|V_{\theta_{\rm gl}^*} - V_{\bar{\theta}(t)} \right\|_{D}^2 + \gamma \left\|V_{\theta_{\rm gl}^*} - V_{\bar{\theta}(t)} \right\|_{{\rm Dir}}^2 \right]\label{eq:inner_product_2}
\end{equation}

Finally, combining equations Eq. (\ref{eq:exp_rec}), Eq. (\ref{eq:inner_product_2}), and Eq. (\ref{eq:2nd}), we obtain
\begin{align*}
 E\left[\left\|\bar{\theta}({t+1}) - \theta_{\rm gl}^*\right\|_2^2 \right]
 \leq & E\left[ \left\|\bar{\theta}({t}) - \theta_{\rm gl}^*\right\|_2^2 \right] +\alpha_t^2 \left[{2\bar{\sigma}^2} + 8 \left\| V_{\theta_{\rm gl}^*} - V_{\bar{\theta}(t)} \right\|_{D}^2 \right]\\
 & - 2 \alpha_t E\left[ (1-\gamma) \left\|V_{\theta_{\rm gl}^*} - V_{\bar{\theta}(t)} \right\|_{D}^2+ \gamma \left\|V_{\theta_{\rm gl}^*} - V_{\bar{\theta}(t)} \right\|_{{\rm Dir}}^2 \right],
\end{align*}
which is what we needed to show.
\end{proof}

With this lemma in place, we next prove Theorem \ref{thm:1}.

\begin{proof}[Proof of Theorem \ref{thm:1}]
Starting with the statement of Lemma \ref{lem:1}, which we reproduce here for convenience,
\begin{align}
  E\left[\left\|\bar{\theta}({t+1}) - \theta_{\rm gl}^*\right\|_2^2 \right]
 \leq &E\left[ \left\|\bar{\theta}({t}) - \theta_{\rm gl}^*\right\|_2^2 \right] +\alpha_t^2 \left[{2\bar{\sigma}^2} + 8 \left\| V_{\theta_{\rm gl}^*} - V_{\bar{\theta}(t)} \right\|_{D}^2 \right] \notag \\
 & - 2 \alpha_t E\left[ (1-\gamma) \left\|V_{\theta_{\rm gl}^*} - V_{\bar{\theta}(t)} \right\|_{D}^2+ \gamma \left\|V_{\theta_{\rm gl}^*} - V_{\bar{\theta}(t)} \right\|_{{\rm Dir}}^2 \right].\label{eq:rec}
\end{align} we will plug in different choices of step-size. 

\textbf{Proof of part (a):} We consider a constant step-size sequence $\alpha_0 = \cdots = \alpha_T \leq (1- \gamma)/8$. Let $\alpha$ denote this constant step-size. Since $\alpha \leq (1- \gamma)/8$, it follows that 
\begin{equation*}
 8 \alpha^2 - 2 \alpha (1-\gamma) \leq - \alpha (1-\gamma).
\end{equation*}

Plugging this into Eq. (\ref{eq:rec}) and rearranging, 
\begin{align*}
 & E\left[\left\|\bar{\theta}({t+1}) - \theta_{\rm gl}^*\right\|_2^2 \right] \notag\\
 \leq &E\left[ \left\|\bar{\theta}({t}) - \theta_{\rm gl}^*\right\|_2^2 \right] - \alpha (1-\gamma) E \left[ \left\|V_{\theta_{\rm gl}^*} - V_{\bar{\theta}(t)} \right\|_{D}^2\right]+ {2\alpha^2 \bar{\sigma}^2} - 2\alpha \gamma E\left[ \left\|V_{\theta_{\rm gl}^*} - V_{\bar{\theta}(t)} \right\|_{{\rm Dir}}^2 \right] \\
 \leq & E\left[ \left\|\bar{\theta}({t}) - \theta_{\rm gl}^*\right\|_2^2 \right] - \alpha (1-\gamma) E \left[ \left\|V_{\theta_{\rm gl}^*} - V_{\bar{\theta}(t)} \right\|_{D}^2\right] + {2\alpha^2 \bar{\sigma}^2}\\
 \leq & \left(1-\alpha (1-\gamma) \omega \right) E\left[ \left\|\bar{\theta}({t}) - \theta_{\rm gl}^*\right\|_2^2 \right]+ {2\alpha^2 \bar{\sigma}^2},\\
\end{align*}where the second inequality follows that $\left\|V_{\theta_{\rm gl}^*} - V_{\bar{\theta}(t)} \right\|_{{\rm Dir}}^2$ is non-negative and the third inequality uses Lemma 1 in \citet{bhandari2018finite} which states that $$\sqrt{\omega} \|\theta\|_2 \leq \|V_{\theta}\|_{D} \leq \|\theta\|_2 .$$
Iterating this inequality establishes that after $T$ iterations
\begin{align*}
 E\left[\left\|\bar{\theta}(T) - \theta_{\rm gl}^*\right\|_2^2 \right] 
 \leq & \left(1-\alpha (1-\gamma) \omega \right)^T E\left[ \left\|\bar{\theta}({0}) - \theta_{\rm gl}^*\right\|_2^2 \right]+ 2\alpha^2 \bar{\sigma}^2 \sum_{t=0}^{T-1} \left(1-\alpha (1-\gamma) \omega \right)^t\\
 \leq & e^{-\alpha (1-\gamma) \omega T} E\left[ \left\|\bar{\theta}({0}) - \theta_{\rm gl}^*\right\|_2^2 \right] + \frac{2\alpha \bar{\sigma}^2}{ (1-\gamma) \omega},
\end{align*}where the last inequality follows because $1-\alpha (1-\gamma) \omega \leq e^{-\alpha (1-\gamma) \omega }$ and the standard formula for the sum of a geometric series.

\textbf{Proof of part (b):} We now take the step-size $\alpha_0 = \cdots = \alpha_T =\frac{1}{\sqrt{T}}$. Since the step-size is constant once we fix $T$, we can denote it by $\alpha$. Since $T \geq \frac{64}{ (1-\gamma)^2}$, it can be observed that $\alpha = \frac{1}{\sqrt{T}} \leq \frac{1-\gamma}{8}$. 
Plugging this into Eq. (\ref{eq:rec}) and rearranging it, we obtain
\begin{equation*}
  \alpha E\left[ (1-\gamma) \left\|V_{\theta_{\rm gl}^*} - V_{\bar{\theta}(t)} \right\|_{D}^2+ 2 \gamma \left\|V_{\theta_{\rm gl}^*} - V_{\bar{\theta}(t)} \right\|_{{\rm Dir}}^2 \right]
 \leq E\left[ \left\|\bar{\theta}({t}) - \theta_{\rm gl}^*\right\|_2^2 \right] - E\left[\left\|\bar{\theta}({t+1}) - \theta_{\rm gl}^*\right\|_2^2 \right]+2 \alpha^2 \bar{\sigma}^2.
\end{equation*}
Summing over $t$ gives
\begin{align*}
 \sum_{t=0}^{T-1} \alpha E\left[ (1-\gamma) \left\|V_{\theta_{\rm gl}^*} - V_{\bar{\theta}(t)} \right\|_{D}^2+ 2 \gamma \left\|V_{\theta_{\rm gl}^*} - V_{\bar{\theta}(t)} \right\|_{{\rm Dir}}^2 \right]
 \leq &E\left[ \left\|\bar{\theta}({0}) - \theta_{\rm gl}^*\right\|_2^2 \right] - E\left[\left\|\bar{\theta}({T}) - \theta_{\rm gl}^*\right\|_2^2 \right]+2T \alpha^2 \bar{\sigma}^2\\
 \leq & E\left[ \left\|\bar{\theta}({0}) - \theta_{\rm gl}^*\right\|_2^2 \right] +2T \alpha^2 \bar{\sigma}^2,
\end{align*}
Dividing by $ \alpha$ on both sides, we obtain:
\begin{equation*}
  \sum_{t=0}^{T-1} E\left[ (1-\gamma) \left\|V_{\theta_{\rm gl}^*} - V_{\bar{\theta}(t)} \right\|_{D}^2+ 2 \gamma \left\|V_{\theta_{\rm gl}^*} - V_{\bar{\theta}(t)} \right\|_{{\rm Dir}}^2 \right]
 \leq  \frac{1}{\alpha}E\left[ \left\|\bar{\theta}({0}) - \theta_{\rm gl}^*\right\|_2^2 \right] +2 T \alpha \bar{\sigma}^2.
\end{equation*}
Finally, recall our notation $\hat{\theta}(T)=\frac{1}{T} \sum_{t=1}^T \bar{\theta}(t) $. Then, by convexity
\begin{align}
 E\left[(1-\gamma) \left\|V_{\theta_{\rm gl}^*} - V_{\hat{\theta}(T)} \right\|_{D}^2 + 2 \gamma \left\|V_{\theta_{\rm gl}^*} - V_{\hat{\theta}(T)} \right\|_{{\rm Dir}}^2 \right] 
 \leq& \frac{1}{T} \sum_{t=1}^T E\left[ (1-\gamma) \left\|V_{\theta_{\rm gl}^*} - V_{\bar{\theta}(t)} \right\|_{D}^2 + 2 \gamma \left\|V_{\theta_{\rm gl}^*} - V_{\bar{\theta}(t)} \right\|_{{\rm Dir}}^2 \right] \notag \\
 \leq & \frac{1}{\alpha T }E\left[ \left\|\bar{\theta}({0}) - \theta_{\rm gl}^*\right\|_2^2 \right] +2 \alpha \bar{\sigma}^2 . \label{eq:alpha}
\end{align}
Plugging in that $\alpha =\frac{1}{\sqrt{T}}$, we have that
\begin{align*}
 E\left[(1-\gamma) \left\|V_{\theta_{\rm gl}^*} - V_{\hat{\theta}(T)} \right\|_{D}^2 + 2 \gamma \left\|V_{\theta_{\rm gl}^*} - V_{\hat{\theta}(T)} \right\|_{{\rm Dir}}^2 \right] 
 \leq & \frac{1}{ \sqrt{T} }E\left[ \left\|\bar{\theta}({0}) - \theta_{\rm gl}^*\right\|_2^2 \right] +\frac{2 \bar{\sigma}^2}{\sqrt{T}} \\
 = & \frac{E\left[ \left\|\bar{\theta}({0}) - \theta_{\rm gl}^*\right\|_2^2 \right] + 2 \bar{\sigma}^2 }{\sqrt{T}}.
\end{align*}

\textbf{Proof of part (c):} From $\alpha_t \leq \alpha_0 = \frac{\alpha}{\tau}= \frac{1-\gamma}{8}$, we have that for $t \geq 0$, it can once again be observed that $$8 \alpha_t^2 - 2 \alpha_t (1-\gamma) \leq - \alpha_t (1-\gamma).$$ Applying this to Eq. (\ref{eq:rec}), we have

\begin{align*}
 E \left[ \left\|\bar{\theta}({t+1}) - \theta_{\rm gl}^*\right\|_2^2 \right] 
 \leq & E\left[ \left\|\bar{\theta}({t}) - \theta_{\rm gl}^*\right\|_2^2 \right]+ 2 \alpha_t^2 \bar{\sigma}^2 - \alpha_t (1-\gamma) E \left[ \left\|V_{\theta_{\rm gl}^*} - V_{\bar{\theta}(t)} \right\|_{D}^2\right] - 2 \alpha_t \gamma E \left[ \left\|V_{\theta_{\rm gl}^*} - V_{\bar{\theta}(t)} \right\|_{\rm Dir}^2\right]\\
 \leq &E\left[ \left\|\bar{\theta}({t}) - \theta_{\rm gl}^*\right\|_2^2 \right]+ 2 \alpha_t^2 \bar{\sigma}^2 - \alpha_t (1-\gamma) E \left[ \left\|V_{\theta_{\rm gl}^*} - V_{\bar{\theta}(t)} \right\|_{D}^2\right],
\end{align*}
Applying Lemma 1 in \citet{bhandari2018finite} stating that $\sqrt{\omega} \|\theta\|_2 \leq \|V_{\theta}\|_{{D}}$, we obtain
\begin{equation*}
  E \left[ \left\|\bar{\theta}({t+1}) - \theta_{\rm gl}^*\right\|_2^2 \right] 
 \leq ( 1- \alpha_t (1-\gamma) \omega) E\left[ \left\|\bar{\theta}({t}) - \theta_{\rm gl}^*\right\|_2^2 \right] + 2 \alpha_t^2 \bar{\sigma}^2.
\end{equation*}
Wwe will next prove by induction that this last inequality implies that $$ E\left[\left\|\bar{\theta}({t}) - \theta_{\rm gl}^*\right\|_2^2 \right] \leq \frac{\zeta}{t+\tau},$$ where $\zeta = \max\left\{ 2 \alpha^2 \bar{\sigma}^2, \tau \left\|\bar{\theta}({0}) - \theta_{\rm gl}^*\right\|_2^2 \right\}$. 

Indeed, the assertion clearly holds at $t=0$. Suppose that the assertion holds at time $t$, i.e., suppose that $ E\left[\left\|\bar{\theta}({t}) - \theta_{\rm gl}^*\right\|_2^2 \right] \leq \frac{\zeta}{t+\tau}$. Then,
\begin{align*}
 E \left[ \left\|\bar{\theta}({t+1}) - \theta_{\rm gl}^*\right\|_2^2 \right] 
 \leq & ( 1- \alpha_t (1-\gamma) \omega) \cdot \frac{\zeta}{t+ \tau} + 2 \alpha_t^2 \bar{\sigma}^2 \\
 = & \frac{(t+ \tau)\zeta - \alpha (1-\gamma) \omega \zeta + 2 \alpha^2 \bar{\sigma}^2 }{(t+ \tau)^2} \\
 = & \frac{(t+ \tau-1 )\zeta + \zeta - \alpha (1-\gamma) \omega \zeta + 2 \alpha^2 \bar{\sigma}^2}{(t+ \tau)^2} \\
 = & \frac{(t+ \tau-1 )\zeta - \zeta + 2 \alpha^2 \bar{\sigma}^2}{(t+ \tau)^2} \\
 \leq & \frac{(t+ \tau-1 )\zeta }{(t+ \tau)^2} \leq \frac{(t+ \tau-1 )\zeta }{(t+ \tau)^2 -1 } \\
 = & \frac{\zeta }{t+1+\tau},
\end{align*}where note that, at the first equal sign, we plug in the expression for $\alpha_t$; and the second inequality, we use that $\zeta \geq 2 \alpha^2 \bar{\sigma}^2$.
\end{proof}
Next, we state Eq. (\ref{eq:alpha}) in the proof of part(b) as the following Corollary.
\begin{corollary}
Suppose Assumptions \ref{ass:mc}-\ref{ass:features} hold. Suppose further that $\{ \theta_v(t)\}_{v \in \mathcal{V}}$ are generated by Algorithm \ref{algo:doan} in the global state model. Then, For any constant step-size sequence $\alpha_0 = \cdots = \alpha_T= \alpha \leq (1- \gamma)/8$,
\begin{equation*}
  E\left[(1-\gamma) \left\|V_{\theta_{\rm gl}^*} - V_{\hat{\theta}(T)} \right\|_{D}^2 + 2 \gamma \left\|V_{\theta_{\rm gl}^*} - V_{\hat{\theta}(T)} \right\|_{{\rm Dir}}^2 \right] 
 \leq  \frac{1}{\alpha T }E\left[ \left\|\bar{\theta}({0}) - \theta_{\rm gl}^*\right\|_2^2 \right] +2 \alpha \bar{\sigma}^2 .
\end{equation*}
\end{corollary}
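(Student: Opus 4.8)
The plan is to recognize that this corollary is precisely inequality Eq.~(\ref{eq:alpha}), which already appears as an intermediate step inside the proof of Theorem~\ref{thm:1}(b) (that part then merely specializes further to $\alpha = 1/\sqrt{T}$). So I would isolate and reproduce the steps leading to Eq.~(\ref{eq:alpha}), starting from the one-step recursion of Lemma~\ref{lem:1}.

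Concretely, first I would take the bound of Lemma~\ref{lem:1} with the constant step-size $\alpha_t \equiv \alpha$ and use the hypothesis $\alpha \le (1-\gamma)/8$, equivalently $8\alpha^2 \le \alpha(1-\gamma)$, to absorb the positive term $8\alpha^2\,E[\|V_{\theta_{\rm gl}^*} - V_{\bar{\theta}(t)}\|_D^2]$ into half of the negative term $-2\alpha(1-\gamma)\,E[\|V_{\theta_{\rm gl}^*} - V_{\bar{\theta}(t)}\|_D^2]$. Rearranging, this yields for each $t \ge 0$
\[
\alpha\, E\!\left[(1-\gamma)\|V_{\theta_{\rm gl}^*}-V_{\bar{\theta}(t)}\|_D^2 + 2\gamma\|V_{\theta_{\rm gl}^*}-V_{\bar{\theta}(t)}\|_{{\rm Dir}}^2\right] \le E\!\left[\|\bar{\theta}(t)-\theta_{\rm gl}^*\|_2^2\right] - E\!\left[\|\bar{\theta}(t+1)-\theta_{\rm gl}^*\|_2^2\right] + 2\alpha^2\bar{\sigma}^2 .
\]
Then I would sum over $t = 0,\dots,T-1$: the squared-distance terms telescope, and discarding the nonnegative tail $E[\|\bar{\theta}(T)-\theta_{\rm gl}^*\|_2^2]$ gives $\alpha\sum_{t=0}^{T-1} E[\cdots] \le E[\|\bar{\theta}(0)-\theta_{\rm gl}^*\|_2^2] + 2T\alpha^2\bar{\sigma}^2$. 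Dividing through by $\alpha T$ and invoking convexity of $\|\cdot\|_D^2$ and $\|\cdot\|_{{\rm Dir}}^2$ together with the linearity of $\theta \mapsto V_\theta$ and the fact that $\hat{\theta}(T)$ is the running average of the $\bar{\theta}(t)$ (Jensen's inequality), the averaged left-hand side dominates its value at $\hat{\theta}(T)$, which is exactly the claimed bound.

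I do not expect a genuine obstacle here; this is a routine ``descent lemma $+$ telescoping $+$ Jensen'' argument, and the substantive work was already done in Lemma~\ref{lem:1}. The only points requiring care are bookkeeping-level: keeping the coefficient $2$ on the Dirichlet seminorm (it survives because the good term in Lemma~\ref{lem:1} is $-2\alpha_t(1-\gamma)\|V\|_D^2 - 2\alpha_t\gamma\|V\|_{{\rm Dir}}^2$ and only the $\|V\|_D^2$ part is consumed in absorbing $8\alpha^2\|V\|_D^2$), and matching the index range of the running average $\hat{\theta}(T)$ against the summation range $t = 0,\dots,T-1$, which is harmless since the discrepancy amounts to a single nonnegative summand.
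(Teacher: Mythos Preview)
Your proposal is correct and follows essentially the same approach as the paper: the paper explicitly states that this corollary is just Eq.~(\ref{eq:alpha}) extracted from the proof of Theorem~\ref{thm:1}(b), obtained exactly by the ``Lemma~\ref{lem:1} $+$ step-size absorption $+$ telescoping $+$ Jensen'' argument you describe. Your bookkeeping remarks about the surviving factor of $2$ on the Dirichlet term and the index range of $\hat{\theta}(T)$ are also accurate.
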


We believe this corollary is of independent interest, as it gives a very clean expression for steady-state error of $O(\alpha \bar{\sigma}^2)$ when evaluated in terms of the metric on the left-hand side. Note that there is no scaling here withe either the number of agents or the spectral gap, either in the final steady-state error or in the convergence time that comes from the first term on the right-hand side. 

We state this as a corollary, rather than a theorem, since our theorem uses the metrics in the previous work, i.e., $||\theta_{\rm gl}^* - \hat{\theta}(T)||_2^2$ in the case of fixed step-size, to make the comparison between this paper and earlier papers clear. However, the relatively clean expression for the final steady-state error in this corollary suggests that, rather than using the distance to the optimal solution as the metric of performance, it is better to use the distances between the corresponding value vectors.

\section{Proof of Theorem \ref{thm:main}}

We now turn to the proof of Theorem \ref{thm:main}. We now assume, for the remainder of this section, that we are analyzing Algorithm \ref{algo:our} in the local state model, subject to Assumptions \ref{ass:mc} and \ref{ass:features}. It turns out that, in many respects, the local state model can be treated analogously to the global state model. 

Our starting point is similar. For a particular agent $v$, as shown in Eq. (\ref{eq:DTD2}), the direction of the distributed TD($0$) update at iteration $t$ is $\delta_v(t)\phi(s_v(t))$. As before, we split this into two terms $$\delta_v(t) \phi(s_v(t)) = h_{v}(t) + m_{v}(t),$$ where 
\begin{align*}
 h_{v}(t) & = b - A\theta_v(t),\\
 m_{v}(t) &= \delta_v(t)\phi(s_v(t)) - h_{v}(t),
\end{align*} where 
\[ A 
 =E_{\mu,s} \left[ \phi(s_v(t)) \left( \gamma \phi(s_v'(t)) - \phi(s_v(t)) \right)^T \right], \] where, note that the right-hand side does not actually depend on $v$ since $s_v(t), s_v'(t)$ have the same joint distribution regardless of $v$, and 
 \[ 
 b = E_{\mu,s} \left[ r_v(t) \phi(s_v(t)) \right] 
 \] where, again, the right-hand side actually does not depend on $v$.

It is worth mentioning that, although here we use the same notation $h_v$ and $m_v$ as in our earlier treatment of the global state model, the definitions are now slightly different since each agent maintains its own state $s_v(t)$.


As before we let $\Theta(t)$ be \begin{equation*} \Theta(t) = \begin{bmatrix}
 \text{---}& \theta_1^T(t) &\text{---}\\ 
 \cdots& \cdots &\cdots \\ 
 \text{---}& \theta_N^T(t) &\text{---}
\end{bmatrix}\in \mathbb{R}^{N \times K}.
\end{equation*} Our first observation is that the quantity $h_{v}(t)$ can be interpreted as the conditional expectation of the update direction:
\begin{equation*}
 h_{v}(t) = E \left[\delta_v(t)\phi(s_v(t)) | \Theta(t) \right].
\end{equation*}
An identical argument as in Eq. (\ref{eq:conditional}) can be carried out since $r_v(t)$, $\phi(s_v(t)), \phi(s_v'(t))$ have the same distribution for all agents $v$. 

Similarly to before, we have the following proposition for the local state model. Recall here our notation of putting a bar to denote the network-wide average. 

\begin{proposition} \label{pro:bar_lc}
Suppose Assumptions \ref{ass:mc}-\ref{ass:features} hold, and suppose that $\{\theta_v(t)\}_{v \in \mathcal{V}} $ are generated by Algorithm \ref{algo:our}. Then,

(a) $\bar{h}(t)$ is a linear function of $\bar{\theta}(t)$: $$\bar{h}(t) = {b} -A \bar{\theta}(t).$$ 

(b) The conditional expectation of $\bar{m}(t)$ given $\Theta(t)$ is equal to zero:
\begin{equation}
 E[\bar{m}(t)| \Theta({t}) ] = 0. \label{eq:con_exp_lc}
\end{equation}

\end{proposition}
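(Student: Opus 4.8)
The plan is to mirror the proof of Proposition \ref{pro:bar} almost verbatim, the only change being that the per-agent right-hand sides $b_v$ collapse to a single vector $b$, because the reward function and transition kernel are identical across agents in the local state model (as recorded just above the proposition statement, $E_{\mu,s}[r_v(t)\phi(s_v(t))]$ and $E_{\mu,s}[\phi(s_v(t))(\gamma\phi(s_v'(t))-\phi(s_v(t)))^T]$ do not depend on $v$).

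For part (a), I would start from the definition $h_v(t) = b - A\theta_v(t)$ already recorded above, average over $v \in \mathcal{V}$, and use linearity of the averaging operation together with the fact that $b$ does not depend on $v$:
\[
\bar h(t) = \frac{1}{N} \sum_{v \in \mathcal{V}} h_v(t) = \frac{1}{N} \sum_{v \in \mathcal{V}} \bigl( b - A\theta_v(t) \bigr) = b - A\bar\theta(t).
\]
This step is purely algebraic; the only point worth stressing is why the vector ``$b$'' in each summand is the same, which is precisely the $v$-independence observation noted before the proposition.

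For part (b), I would first establish the per-agent statement $E[m_v(t)\mid\Theta(t)] = 0$. By definition $m_v(t) = \delta_v(t)\phi(s_v(t)) - h_v(t)$, so it suffices to show $E[\delta_v(t)\phi(s_v(t))\mid\Theta(t)] = h_v(t) = b - A\theta_v(t)$. Expanding $\delta_v(t) = r_v(t) - (\phi(s_v(t)) - \gamma\phi(s_v'(t)))^T\theta_v(t)$ and taking the conditional expectation, the computation is identical to Eq. (\ref{eq:conditional}), with $\theta_v(t)$ replaced by $\Theta(t)$: the conditional expectation of the reward term is $b$ and that of the matrix term is $A\theta_v(t)$. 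Averaging over $v$ and using linearity of conditional expectation then yields $E[\bar m(t)\mid\Theta(t)] = \frac{1}{N}\sum_{v\in\mathcal{V}} E[m_v(t)\mid\Theta(t)] = 0$.

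I do not expect any real obstacle here; the one point that requires a sentence of care is the independence claim used in part (b). One must invoke the i.i.d.-sampling assumption to argue that the fresh tuple $(s_v(t), s_v'(t), r_v(t))$ is drawn independently of all current iterates $\theta_1(t),\ldots,\theta_N(t)$ (which are measurable with respect to the samples up to time $t-1$), so that conditioning on $\Theta(t)$ does not change the distribution of the tuple and the computation of Eq. (\ref{eq:conditional}) carries over. Everything else is a restatement of the global-state argument with $b_v \equiv b$.
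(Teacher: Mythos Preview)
Your proposal is correct and is exactly what the paper does: it omits the proof entirely, stating that it is ``essentially identical to the proof of Proposition~\ref{pro:bar},'' which is precisely the argument you have written out (with the collapse $b_v\equiv b$ already noted in the text preceding the proposition). Your extra sentence about the i.i.d.\ sampling justifying the conditioning step is a welcome clarification but not a departure from the paper's intended argument.
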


The proof of this proposition is essentially identical to the proof of Proposition \ref{pro:bar} and we omit it. 

Our next step is to prove a recurrence relation satisfied by the average of the iterates, stated as the following lemma. The key differences between this lemma and the previously-proved version in the global state model is that the quantity ${\sigma}^2$ that appears in this recursion will now be divided by $N$, at the cost of the addition of an extra term we will have to deal with. Recall that $\theta_{\rm lc}^*$ is the fixed point of TD($0$) on the MDP $(\mathcal{S},\mathcal{V},\mathcal{A}, \mathcal{P}, r, \gamma)$.

\begin{lemma}\label{lem:2}
Suppose Assumptions \ref{ass:mc}-\ref{ass:features} hold. Further suppose that $\{\theta_v\}_{v \in \mathcal{V}}$ are generated by Algorithm \ref{algo:our}. For $t \in \mathbb{N}_0$, we have that 
\begin{align*}
  E\left[\left\|\bar{\theta}({t+1}) - \theta_{\rm lc}^*\right\|_2^2 \right] \leq & E\left[ \left\|\bar{\theta}({t}) - \theta_{\rm lc}^*\right\|_2^2 \right]  +\alpha_t^2 \left(\frac{2{\sigma}^2}{N} + \frac{8}{N} \sum_{v \in \mathcal{V}} E\left[ \|V_{\theta_v(t)} - V_{\theta_{\rm lc}^*} \|_{D}^2\right]\right )\\
 &  - 2 \alpha_t E\left[ (1-\gamma) \left\|V_{\theta_{\rm lc}^*} - V_{\bar{\theta}(t)} \right\|_{D}^2+ \gamma \left\|V_{\theta_{\rm lc}^*} - V_{\bar{\theta}(t)} \right\|_{{\rm Dir}}^2 \right],
\end{align*}
\end{lemma}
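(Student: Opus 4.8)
The plan is to mirror the proof of Lemma~\ref{lem:1}, substituting the per-agent states $s_v(t)$ for the shared state $s(t)$ and exploiting the independence of the agents' samples to gain the factor of $N$ on the variance term. Expanding $\bar{\theta}(t+1) = \bar{\theta}(t) + \alpha_t[\bar{h}(t)+\bar{m}(t)]$ and taking expectations gives
\begin{equation*}
E\left[\|\bar{\theta}(t+1) - \theta_{\rm lc}^*\|_2^2\right] = E\left[\|\bar{\theta}(t) - \theta_{\rm lc}^*\|_2^2\right] + \alpha_t^2 E\left[\|\bar{h}(t)+\bar{m}(t)\|_2^2\right] - 2\alpha_t E\left[(\bar{h}(t)+\bar{m}(t))^T(\theta_{\rm lc}^* - \bar{\theta}(t))\right],
\end{equation*}
so it suffices to bound the second-moment term and to evaluate the cross term.

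The cross term is handled exactly as in the global-state case: conditioning on $\Theta(t)$ and using $E[\bar{m}(t)\mid\Theta(t)] = 0$ from Proposition~\ref{pro:bar_lc}(b), it reduces to $E[\bar{h}(t)^T(\theta_{\rm lc}^* - \bar{\theta}(t))]$; since $\bar{h}(t) = b - A\bar{\theta}(t)$ by Proposition~\ref{pro:bar_lc}(a) and $b - A\theta_{\rm lc}^* = 0$, Corollary~1 of \citet{liu2020temporal} identifies this with $E[(1-\gamma)\|V_{\theta_{\rm lc}^*} - V_{\bar{\theta}(t)}\|_D^2 + \gamma\|V_{\theta_{\rm lc}^*} - V_{\bar{\theta}(t)}\|_{\rm Dir}^2]$. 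This is the local-state analogue of Lemma~\ref{lem:inner_product}, with an essentially identical proof.

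The real work is the second-moment term $E[\|\bar{h}(t)+\bar{m}(t)\|_2^2] = E[\|\frac{1}{N}\sum_v \delta_v(t)\phi(s_v(t))\|_2^2]$. I would write $\delta_v(t)\phi(s_v(t)) = \bm{a}_v - \bm{b}_v$ with $\bm{a}_v = [r_v(t) - (\phi(s_v(t)) - \gamma\phi(s_v'(t)))^T\theta_{\rm lc}^*]\phi(s_v(t))$ and $\bm{b}_v = \phi(s_v(t))(\phi(s_v(t)) - \gamma\phi(s_v'(t)))^T(\theta_v(t) - \theta_{\rm lc}^*)$, so that $\|\bar{h}(t)+\bar{m}(t)\|_2^2 \le 2\|\bar{\bm{a}}\|_2^2 + 2\|\bar{\bm{b}}\|_2^2$. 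The point is that, conditioned on $\Theta(t)$, the vectors $\bm{a}_1,\dots,\bm{a}_N$ are independent (the per-agent chains evolve independently) and each has mean zero (since $\bm{a}_v$ is $\delta_v(t)\phi(s_v(t))$ evaluated at $\theta_{\rm lc}^*$, whose conditional mean is $b - A\theta_{\rm lc}^* = 0$), so the cross terms drop out and $E[\|\bar{\bm{a}}\|_2^2 \mid \Theta(t)] = \frac{1}{N^2}\sum_v E[\|\bm{a}_v\|_2^2 \mid \Theta(t)] \le \sigma^2/N$ by Assumption~\ref{ass:features}; this is precisely where the $1/N$ comes from. For the $\bm{b}$ term, since each agent now carries its own state the average $\bar{\bm{b}}$ does not collapse to a single term in $\bar{\theta}(t)$, so instead I use Jensen's inequality $\|\bar{\bm{b}}\|_2^2 \le \frac{1}{N}\sum_v\|\bm{b}_v\|_2^2$ together with $E[\|\bm{b}_v\|_2^2 \mid \Theta(t)] \le 4\|V_{\theta_v(t)} - V_{\theta_{\rm lc}^*}\|_D^2$ (the estimate from the proof of Lemma~5 in \citet{bhandari2018finite}), which gives $2E[\|\bar{\bm{b}}\|_2^2] \le \frac{8}{N}\sum_v E[\|V_{\theta_v(t)} - V_{\theta_{\rm lc}^*}\|_D^2]$. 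Adding the two bounds yields $\alpha_t^2 E[\|\bar{h}(t)+\bar{m}(t)\|_2^2] \le \alpha_t^2\left(\frac{2\sigma^2}{N} + \frac{8}{N}\sum_v E[\|V_{\theta_v(t)} - V_{\theta_{\rm lc}^*}\|_D^2]\right)$, and substituting this and the cross-term identity into the expansion above proves the lemma.

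The main obstacle is conceptual rather than computational: because the agents no longer share a state, the quadratic term $\bar{\bm{b}}$ cannot be rewritten using $\bar{\theta}(t)$ alone, and the bound necessarily picks up the per-agent sum $\frac{1}{N}\sum_v\|V_{\theta_v(t)} - V_{\theta_{\rm lc}^*}\|_D^2$ — the ``extra term'' flagged before the lemma — which will have to be controlled later (via each agent's own TD(0) contraction) in the proof of Theorem~\ref{thm:main}. The only technical care point is that the independence and mean-zero statements for the $\bm{a}_v$ must be made conditionally on $\Theta(t)$, using that the time-$t$ samples are i.i.d.\ and independent of the history.
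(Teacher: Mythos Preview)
Your proposal is correct and follows essentially the same route as the paper: the same squared-norm expansion, the same split of the update direction into the $\bm{a}$ and $\bm{b}$ pieces at $\theta_{\rm lc}^*$, the same use of independence and mean-zero of the $\bm{a}_v$'s to get the $\sigma^2/N$ factor, the same convexity bound $\|\bar{\bm{b}}\|_2^2 \le \frac{1}{N}\sum_v\|\bm{b}_v\|_2^2$ (the paper writes it as $\|\sum_i a_i x_i\|^2 \le N\sum_i a_i^2\|x_i\|^2$), and the same handling of the cross term via Proposition~\ref{pro:bar_lc} and Corollary~1 of \citet{liu2020temporal}. The only cosmetic difference is that you phrase the independence/mean-zero argument for the $\bm{a}_v$'s conditionally on $\Theta(t)$; since the $\bm{a}_v$'s do not depend on $\theta_v(t)$ at all, the conditioning is unnecessary (the paper works unconditionally), but it is harmless.
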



\begin{proof}[Proof of Lemma \ref{lem:2}] Similarly to our argument for the global state model,
\begin{equation*}
 \bar{\theta}(t+1) = \bar{\theta}({t}) + \alpha_t \left[\bar{h}(t) + \bar{m}(t)\right].
\end{equation*}
Therefore, 
\begin{equation*}
 \left\|\bar{\theta}({t+1}) - \theta_{\rm lc}^*\right\|_2^2
 = \left\|\bar{\theta}({t}) - \theta_{\rm lc}^*\right\|_2^2 + 2 \alpha_t \left[\bar{h}(t) + \bar{m}(t)\right]^T (\bar{\theta}({t}) - \theta_{\rm lc}^*)
  + \alpha_t^2 \left\|\bar{h}(t) + \bar{m}(t)\right\|_2^2 .
\end{equation*}

Taking expectations:
\begin{equation}
  E\left[\left\|\bar{\theta}({t+1}) - \theta_{\rm lc}^*\right\|_2^2 \right] 
 = E\left[ \left\|\bar{\theta}({t}) - \theta_{\rm lc}^*\right\|_2^2 \right] + \alpha_t^2E\left[ \left\|\bar{h}(t) + \bar{m}(t)\right\|_2^2 \right] 
  - 2 \alpha_t E\left[ \left(\bar{h}(t)+ \bar{m}(t)\right)^T \left(\theta_{\rm lc}^* - \bar{\theta}({t})\right)\right] . \label{eq:exp_rec_lc}
\end{equation}

We consider the second term on the right hand side of Eq. (\ref{eq:exp_rec_lc}). Following the definition of $\bar{h}(t)$ and $\bar{m}(t)$, we have that
\begin{equation*}
 E\left[ \left\|\bar{h}(t) + \bar{m}(t)\right\|_2^2 \right] = E\left[ \left\|\frac{1}{N} \sum_{v \in \mathcal{V}} \delta_v(t)\phi(s_v(t)) \right\|_2^2\right].
\end{equation*}
Plugging in the expression for TD error $\delta_v(t)$ with Eq. (\ref{eq:delta_lc}), we obtain

\begin{align*}
 & E\left[ \left\|\bar{h}(t) + \bar{m}(t)\right\|_2^2 \right]\\
 =& E\left[ \left\| \frac{1}{N} \sum_{v \in \mathcal{V}} r_v(t) \phi(s_v(t)) - \frac{1}{N} \sum_{v \in \mathcal{V}} \phi(s_v(t))\left(\phi(s_v(t))- \gamma \phi(s_v'(t))\right)^T\theta_v(t) \right\|_2^2\right] \\
 =& E\left[ \left\| \frac{1}{N} \sum_{v \in \mathcal{V}} \left[ {r_v}(t) - \left(\phi(s_v(t))- \gamma \phi(s_v'(t))\right)^T \theta_{\rm lc}^*\right]\phi(s_v(t)) -\frac{1}{N} \sum_{v \in \mathcal{V}} \phi(s_v(t))\left(\phi(s_v(t))- \gamma \phi(s_v'(t))\right)^T( \theta_v(t) -\theta_{\rm lc}^*) \right\|_2^2\right] 
\end{align*}

Denote
\begin{equation}
    \bm{a}^* =\frac{1}{N} \sum_{v \in \mathcal{V}} \left[ {r_v}(t) - \left(\phi(s_v(t))- \gamma \phi(s_v'(t))\right)^T \theta_{\rm lc}^*\right]\phi(s_v(t)), \label{eq:a*}
\end{equation}
\begin{equation}
    \bm{b}^* = \frac{1}{N} \sum_{v \in \mathcal{V}} \phi(s_v(t))\left(\phi(s_v(t))- \gamma \phi(s_v'(t))\right)^T( \theta_v(t) -\theta_{\rm lc}^*). \label{eq:b*}
\end{equation}
Using inequality $\|\bm{a}^*-\bm{b}^*\|^2 \leq 2\|\bm{a}^*\|^2 + 2\|\bm{b}^*\|^2$, we obtain
\begin{equation}
 E\left[ \left\|\bar{h}(t) + \bar{m}(t)\right\|_2^2 \right] \leq 2E\left[\|\bm{a}^*\|^2\right] + 2E\left[\|\bm{b}^*\|^2\right]. \label{eq:2norm_lc}
\end{equation}

We first bound $E\left[\|\bm{a}^*\|^2\right]$. Let $\bm{a}^* =\frac{1}{N} \sum_{v \in \mathcal{V}} \rho_v,$ where $$\rho_v = \left[ {r_v}(t) - \left(\phi(s_v(t))- \gamma \phi(s_v'(t))\right)^T \theta_{\rm lc}^*\right]\phi(s_v(t)).$$Recall that, in the local state model, we are just running TD(0) on the identical MDP with the identical rewards across nodes. Hence the quantity $\theta_{\rm lc}^*$ satisfies Equation (\ref{eq:ls}), i.e., for all agents $v$,
 \begin{equation*}
 E \left[\left(r_v(t) - \left(\phi(s_v(t))-\gamma \phi(s_v'(t))\right)^T \theta_{\rm lc}^* \right) \phi(s_v(t)) \right] = 0.
 \end{equation*} We thus have that $ E [\rho_v] =0$ for $v \in \mathcal{V}$. Then
\begin{align*}
 E\left[\|\bm{a}^*\|^2\right] = & E\left[ \left( \bm{a}^* \right) ^T \bm{a}^*\right]\\
 = & E\left[ \left( \frac{1}{N} \sum_{v \in \mathcal{V}} \rho_v \right) ^T \left( \frac{1}{N} \sum_{v \in \mathcal{V}} \rho_v \right)\right] \\
 = & \frac{1}{N^2} E\left[ \sum_{v \in \mathcal{V}} \rho_v ^T \rho_v + \sum_{v \neq v'} \rho_v ^T \rho_{v'} \right] \\
 = & \frac{1}{N} E\left[ \rho_1 ^T \rho_1 \right] + \frac{1}{N^2}\sum_{v \neq v'} E[\rho_v]^T E[\rho_{v'}]\\
 =& \frac{1}{N} E\left[ \rho_1 ^T \rho_1 \right] = \frac{1}{N} E\left[ \| \rho_1 \|^2 \right],
\end{align*}where the forth line follows because we are assuming the quantities $\{s_v(t)\}_{v \in \mathcal{V}}$ are generated i.i.d. across time steps $t$ and the last line uses that $ E [\rho_v] =0$. Next,
\begin{align*}
  E\left[ \| \rho_1 \|^2 \right] 
 = & E\left[ \left \|\left(r_1(t) - \left(\phi(s_1(t))-\gamma \phi(s_1'(t))\right)^T \theta_{\rm lc}^* \right) \phi(s_1(t)) \right \|^2 \right] \\
 \leq & E\left[ \left( {r_1}(t) - \left(\phi(s_1(t))- \gamma \phi(s'_1(t))\right)^T \theta_{\rm lc}^*\right)^2 \right] = \sigma^2,
\end{align*}where, the inequality follows Assumption \ref{ass:features} and recall that ${\sigma}^2$ is defined by,
\begin{equation*}
 {\sigma}^2 = E\left[ \left( {r_v}(t) - \left(\phi(s_v(t))- \gamma \phi(s'_v(t))\right)^T \theta_{\rm lc}^*\right)^2 \right].
\end{equation*}

We have thus shown:
\begin{equation}
 E\left[\|\bm{a}^*\|^2\right] \leq \frac{{\sigma}^2}{N}. \label{eq:a_lc}
\end{equation}



Our next step is to bound $E\left[\|\bm{b}^*\|^2\right]$, where $\bm{b}^*$ is defined in Eq.(\ref{eq:b*}):

\begin{align}
    E\left[\|\bm{b}^*\|^2\right] =  & E\left[\left\| \frac{1}{N} \sum_{v \in \mathcal{V}} \left(\phi(s_v(t))- \gamma \phi(s_v'(t))\right)^T( \theta_v(t) -\theta_{\rm lc}^*) \phi(s_v(t)) \right\| ^2 \right] \notag\\ 
    = & \frac{1}{N^2} E\left[\left\| \sum_{v \in \mathcal{V}} \left(\phi(s_v(t))- \gamma \phi(s_v'(t))\right)^T( \theta_v(t) -\theta_{\rm lc}^*) \phi(s_v(t)) \right\| ^2 \right] \notag\\ 
    \leq & \frac{1}{N^2}\cdot N \cdot \sum_{v \in \mathcal{V}} E\left[ \left( \left(\phi(s_v(t))- \gamma \phi(s_v'(t))\right)^T( {\theta_v}(t) -\theta_{\rm lc}^*) \right)^2 \left\| \phi(s_v(t))\right\|^2\right] \notag\\
    \leq & \frac{1}{N^2}\cdot N \cdot \sum_{v \in \mathcal{V}} E\left[ \left( \left(\phi(s_v(t))- \gamma \phi(s_v'(t))\right)^T( {\theta_v}(t) -\theta_{\rm lc}^*) \right)^2 \right] \notag\\
    \leq & \frac{4}{N} \sum_{v \in \mathcal{V}} E\left[ \|V_{\theta_v(t)} - V_{\theta_{\rm lc}^*} \|_{D}^2\right], \label{eq:b_lc}
\end{align}where the first inequality uses  $\|\sum_{i=1}^N a_i x_i\|^2 \leq N \sum_{i=1}^N a_i^2 \| x_i\|^2$; the second inequality follows Assumption \ref{ass:features}; and the last line follows from the proof of Lemma 5 in \citet{bhandari2018finite}.

Plugging Eq. (\ref{eq:a_lc}) and Eq. (\ref{eq:b_lc}) into Eq. (\ref{eq:2norm_lc}), we get

\begin{equation}
 E\left[ \left\|\bar{h}(t) + \bar{m}(t)\right\|_2^2 \right] \leq \frac{2{\sigma}^2}{N} + \frac{8}{N} \sum_{v \in \mathcal{V}} E\left[ \|V_{\theta_v(t)} - V_{\theta_{\rm lc}^*} \|_{D}^2\right]. \label{eq:h+m_lc}
\end{equation}

This equation bounds bounds one of the terms in Eq. (\ref{eq:exp_rec_lc}). We next consider a different term in the same equation, namely we consider the third term on the right hand side of Eq. (\ref{eq:exp_rec_lc}):
\begin{align*}
  E\left[ \left[\bar{h}(t)+ \bar{m}(t)\right]^T (\theta_{\rm lc}^* - \bar{\theta}({t}))\right] 
 = & E\left[ E\left[\left(\bar{h}(t) + \bar{m}(t)\right)^T (\theta_{\rm lc}^* - \bar{\theta}({t}))| \Theta({t})\right] \right] \notag\\
 = & E\left[\bar{h}^T(t) (\theta_{\rm lc}^* - \bar{\theta}({t})) \right], 
\end{align*}
where in the second equation, we use Eq. (\ref{eq:con_exp_lc}).

By Proposition \ref{pro:bar_lc} part (a), we have that 
$\bar{h}(t) = {b} -A \bar{\theta}(t)$. Furthermore, if we let $ \bar{h}(\theta)$ denote the linear function ${b} -A {\theta}$, we have that $\bar{h}(\theta_{\rm lc}^*)=0$. Now applying Corollary 1 in \citet{liu2020temporal}, we have that
\begin{equation}
 E\left[ \left[\bar{h}(t)+ \bar{m}(t)\right]^T (\theta_{\rm lc}^* - \bar{\theta}({t}))\right] 
 = E\left[ (1-\gamma)\left\|V_{\theta_{\rm lc}^*} - V_{\bar{\theta}(t)} \right\|_{D}^2 + \gamma \left\|V_{\theta_{\rm lc}^*} - V_{\bar{\theta}(t)} \right\|_{{\rm Dir}}^2 \right] \label{eq:inner_product_lc}
\end{equation}

Combining equations (\ref{eq:exp_rec_lc}), (\ref{eq:inner_product_lc}), and (\ref{eq:h+m_lc}), we obtain
\begin{align*}
  E\left[\left\|\bar{\theta}({t+1}) - \theta_{\rm lc}^*\right\|_2^2 \right] \leq & E\left[ \left\|\bar{\theta}({t}) - \theta_{\rm lc}^*\right\|_2^2 \right] +\alpha_t^2 \left(\frac{2{\sigma}^2}{N} + \frac{8}{N} \sum_{v \in \mathcal{V}} E\left[ \|V_{\theta_v(t)} - V_{\theta_{\rm lc}^*} \|_{D}^2\right]\right )\\
 &  - 2 \alpha_t E\left[ (1-\gamma) \left\|V_{\theta_{\rm lc}^*} - V_{\bar{\theta}(t)} \right\|_{D}^2+ \gamma \left\|V_{\theta_{\rm lc}^*} - V_{\bar{\theta}(t)} \right\|_{{\rm Dir}}^2 \right].
\end{align*}
\end{proof}

With this lemma in place, we are now ready to provide a proof of Theorem \ref{thm:main}. This will be similar, but not identical, to the proof of Theorem \ref{thm:1}, as the recursion we have just proved as an extra term multiplying $O(\alpha_t^2)$ relative to Lemma \ref{lem:1}.

\begin{proof}[Proof of Theorem \ref{thm:main}]
Starting from Lemma \ref{lem:2},
\begin{align}
  E\left[\left\|\bar{\theta}({t+1}) - \theta_{\rm lc}^*\right\|_2^2 \right] \leq & E\left[ \left\|\bar{\theta}({t}) - \theta_{\rm lc}^*\right\|_2^2 \right]  +\alpha_t^2 \left(\frac{2{\sigma}^2}{N} + \frac{8}{N} \sum_{v \in \mathcal{V}} E\left[ \|V_{\theta_v(t)} - V_{\theta_{\rm lc}^*} \|_{D}^2\right]\right )\notag\\
 & - 2 \alpha_t E\left[ (1-\gamma) \left\|V_{\theta_{\rm lc}^*} - V_{\bar{\theta}(t)} \right\|_{D}^2+ \gamma \left\|V_{\theta_{\rm lc}^*} - V_{\bar{\theta}(t)} \right\|_{{\rm Dir}}^2 \right], \label{eq:rec2}
\end{align}

we first consider the bound for the term $\sum_{t=1}^T \sum_{v=1}^N E\left[ \|V_{\theta_v(t)} - V_{\theta_{\rm lc}^*} \|_{D}^2\right]$. We can plug in that $N=1$ into Lemma \ref{lem:2} to obtain the next inequality:
\begin{align*}
  E\left[\left\|{\theta}_v({t+1}) - \theta_{\rm lc}^*\right\|_2^2 \right] \leq & E\left[ \left\|{\theta}_v({t}) - \theta_{\rm lc}^*\right\|_2^2 \right]  +\alpha_t^2 \left(2{\sigma}^2 + 8 E\left[ \|V_{\theta_v(t)} - V_{\theta_{\rm lc}^*} \|_{D}^2\right]\right )\\
 &  - 2 \alpha_t E\left[ (1-\gamma) \left\|V_{\theta_{\rm lc}^*} - V_{{\theta}_v(t)} \right\|_{D}^2+ \gamma \left\|V_{\theta_{\rm lc}^*} - V_{{\theta}_v(t)} \right\|_{{\rm Dir}}^2 \right].
\end{align*}
If the sequence of step-sizes are non-increasing and satisfies $$8\alpha_t^2-2\alpha_t(1-\gamma) \leq - \alpha_t(1-\gamma),$$ then we obtain
\begin{equation*}
  \alpha_t E\left[ (1-\gamma) \left\|V_{\theta_{\rm lc}^*} - V_{{\theta}_v(t)} \right\|_{D}^2+ 2 \gamma \left\|V_{\theta_{\rm lc}^*} - V_{{\theta}_v(t)} \right\|_{{\rm Dir}}^2 \right]
 \leq  E\left[ \left\|{\theta}_v({t}) - \theta_{\rm lc}^*\right\|_2^2 \right] -E\left[\left\|{\theta}_v({t+1}) - \theta_{\rm lc}^*\right\|_2^2 \right] + 2 \alpha_t^2 {\sigma}^2.
\end{equation*}
Since $ E\left[ 2 \gamma \left\|V_{\theta_{\rm lc}^*} - V_{{\theta}_v(t)} \right\|_{{\rm Dir}}^2 \right]$ is non-negative, it now follows that
\begin{equation*}
  \alpha_t E\left[ (1-\gamma) \left\|V_{\theta_{\rm lc}^*} - V_{{\theta}_v(t)} \right\|_{D}^2 \right]
 \leq  E\left[ \left\|{\theta}_v({t}) - \theta_{\rm lc}^*\right\|_2^2 \right] -E\left[\left\|{\theta}_v({t+1}) - \theta_{\rm lc}^*\right\|_2^2 \right] + 2 \alpha_t^2 {\sigma}^2.
\end{equation*}
Multiplying $\alpha_t$ on both sides and summing over $t$, we have

\begin{align*}
  & \sum_{t=0}^{T-1} \alpha_t^2 E\left[ (1-\gamma) \left\|V_{\theta_{\rm lc}^*} - V_{{\theta}_v(t)} \right\|_{D}^2 \right] \\
 = & \alpha_0 E\left[ \left\|{\theta}_v({0}) - \theta_{\rm lc}^*\right\|_2^2 \right] + \sum_{t=1}^{T-1}(\alpha_{t-1} - \alpha_{t}) E\left[ \left\|{\theta}_v({t}) - \theta_{\rm lc}^*\right\|_2^2 \right]-\alpha_{T-1} E\left[\left\|{\theta}_v({T}) - \theta_{\rm lc}^*\right\|_2^2 \right] + 2 \sum_{t=0}^{T-1} \alpha_t^3 {\sigma}^2\\
 \leq &\alpha_0 E\left[ \left\|{\theta}_v({0}) - \theta_{\rm lc}^*\right\|_2^2 \right] + 2 \sum_{t=0}^{T-1} \alpha_t^3 {\sigma}^2,
\end{align*}where the last inequality is because that $\{\alpha_t\}_t$ are non-increasing step-sizes.
Summing over agents $v$ , we get
\begin{align}
  \sum_{v=1}^N \sum_{t=0}^{T-1} \alpha_t^2 E\left[ (1-\gamma) \left\|V_{\theta_{\rm lc}^*} - V_{{\theta}_v(t)} \right\|_{D}^2 \right] 
 \leq & \sum_{v=1}^N \alpha_0 E\left[ \left\|{\theta}_v({0}) - \theta_{\rm lc}^*\right\|_2^2 \right] + 2\sum_{v=1}^N \sum_{t=0}^{T-1} \alpha_t^3 {\sigma}^2 \notag \\
 \leq & N \alpha_0 \hat{R}_0 + 2 N \sum_{t=0}^{T-1} \alpha_t^3 {\sigma}^2, \label{eq:sum_vt}
\end{align}
where $\hat{R}_0 = \max_{v \in \mathcal{V}} E\left[ \left\|{\theta}_v({0}) - \theta_{\rm lc}^*\right\|_2^2 \right]. $
With this equation in place, we now turn to the proof of all the parts of the theorem.

\textbf{Proof of part (a):} We consider the constant step-size sequence $\alpha_0 = \cdots = \alpha_T \leq (1- \gamma)/8$. Then let $\alpha$ denote the constant step-size. Plugging into Eq. (\ref{eq:rec2}) and rearranging it, we get
\begin{align*}
  2 \alpha E\left[ (1-\gamma) \left\|V_{\theta_{\rm lc}^*} - V_{\bar{\theta}(t)} \right\|_{D}^2+ \gamma \left\|V_{\theta_{\rm lc}^*} - V_{\bar{\theta}(t)} \right\|_{{\rm Dir}}^2 \right]
 \leq &E\left[ \left\|\bar{\theta}({t}) - \theta_{\rm lc}^*\right\|_2^2 \right] - E\left[\left\|\bar{\theta}({t+1}) - \theta_{\rm lc}^*\right\|_2^2 \right]\\
 & +\alpha^2 \left(\frac{2{\sigma}^2}{N} + \frac{8}{N} \sum_{v \in \mathcal{V}} E\left[ \|V_{\theta_v(t)} - V_{\theta_{\rm lc}^*} \|_{D}^2\right]\right ).
\end{align*}
Summing over $t$ gives
\begin{align*}
 & 2 \sum_{t=0}^{T-1} \alpha E\left[ (1-\gamma) \left\|V_{\theta_{\rm lc}^*} - V_{\bar{\theta}(t)} \right\|_{D}^2+ \gamma \left\|V_{\theta_{\rm lc}^*} - V_{\bar{\theta}(t)} \right\|_{{\rm Dir}}^2 \right]\\
 \leq &E\left[ \left\|\bar{\theta}({0}) - \theta_{\rm lc}^*\right\|_2^2 \right] - E\left[\left\|\bar{\theta}({T}) - \theta_{\rm lc}^*\right\|_2^2 \right]+ \frac{2 T \alpha^2{\sigma}^2}{N} + \frac{8}{N} \sum_{t=0}^{T-1} \sum_{v \in \mathcal{V}} \alpha^2 E\left[ \|V_{\theta_v(t)} - V_{\theta_{\rm lc}^*} \|_{D}^2\right]\\
 \leq & E\left[ \left\|\bar{\theta}({0}) - \theta_{\rm lc}^*\right\|_2^2 \right] + \frac{2 T \alpha^2{\sigma}^2}{N} + \frac{8}{N} \sum_{t=0}^{T-1} \sum_{v \in \mathcal{V}} \alpha^2 E\left[ \|V_{\theta_v(t)} - V_{\theta_{\rm lc}^*} \|_{D}^2\right] \\
 \leq & E\left[ \left\|\bar{\theta}({0}) - \theta_{\rm lc}^*\right\|_2^2 \right] + \frac{2 T \alpha^2{\sigma}^2}{N} + \frac{8}{N(1-\gamma)} \left( N \alpha \hat{R}_0 + 2 N \sum_{t=0}^{T-1} \alpha^3 {\sigma}^2 \right)\\
 \leq & E\left[ \left\|\bar{\theta}({0}) - \theta_{\rm lc}^*\right\|_2^2 \right] + \frac{2 T \alpha^2{\sigma}^2}{N} + \frac{8\alpha}{1-\gamma} \left(\hat{R}_0 + 2 T \alpha^2 {\sigma}^2 \right)
\end{align*}where the second inequality follows that $E\left[\left\|\bar{\theta}({T}) - \theta_{\rm lc}^*\right\|_2^2 \right]$ is non-negative; the third inequality uses Eq. (\ref{eq:sum_vt}). 

Now dividing by $2 \alpha$ on both sides:

\begin{equation*}
  \sum_{t=0}^{T-1} E\left[ (1-\gamma) \left\|V_{\theta_{\rm lc}^*} - V_{\bar{\theta}(t)} \right\|_{D}^2+ \gamma \left\|V_{\theta_{\rm lc}^*} - V_{\bar{\theta}(t)} \right\|_{{\rm Dir}}^2 \right]
 \leq  \frac{1}{2\alpha}E\left[ \left\|\bar{\theta}({0}) - \theta_{\rm lc}^*\right\|_2^2 \right] + \frac{ T \alpha {\sigma}^2}{N} + \frac{4}{1-\gamma} \left(\hat{R}_0 + 2 T \alpha^2 {\sigma}^2 \right).
\end{equation*}Let $\hat{\theta}(T)=\frac{1}{T} \sum_{t=1}^T \bar{\theta}(t) $. Then, by convexity
\begin{align*}
  E\left[(1-\gamma) \left\|V_{\theta_{\rm lc}^*} - V_{\hat{\theta}(T)} \right\|_{D}^2 + \gamma \left\|V_{\theta_{\rm lc}^*} - V_{\hat{\theta}(T)} \right\|_{{\rm Dir}}^2 \right] 
 \leq& \frac{1}{T} \sum_{t=1}^T E\left[ (1-\gamma) \left\|V_{\theta_{\rm lc}^*} - V_{\bar{\theta}(t)} \right\|_{D}^2 + \gamma \left\|V_{\theta_{\rm lc}^*} - V_{\bar{\theta}(t)} \right\|_{{\rm Dir}}^2 \right]\\
 \leq & \frac{1}{ T } \left(\frac{1}{2\alpha} E\left[ \left\|\bar{\theta}({0}) - \theta_{\rm lc}^*\right\|_2^2 \right] +\frac{4\hat{R}_0}{1-\gamma} \right)+ \frac{ \alpha {\sigma}^2}{N} + \frac{8 \alpha^2 {\sigma}^2}{1-\gamma},
\end{align*} which is what we wanted to show. 

\textbf{Proof of part (b):} We now consider the step-size $\alpha_0 = \cdots = \alpha_T =\frac{1}{\sqrt{T}}$. When $T \geq \frac{64}{ (1-\gamma)^2}$, it can be observed that $\alpha = \frac{1}{\sqrt{T}} \leq \frac{1-\gamma}{8}$. As a consequence of part (a), it is immediate that, 
\begin{equation*}
 E\left[(1-\gamma) \left\|V_{\theta_{\rm lc}^*} - V_{\hat{\theta}(T)} \right\|_{D}^2 + \gamma \left\|V_{\theta_{\rm lc}^*} - V_{\hat{\theta}(T)} \right\|_{{\rm Dir}}^2 \right] 
 \leq  \frac{1}{2 \sqrt{T} }\left( E\left[ \left\|\bar{\theta}({0}) - \theta_{\rm lc}^*\right\|_2^2 \right]+\frac{ 2 {\sigma}^2}{N} \right) +\frac{1}{T} \left(\frac{4\hat{R}_0+ 8{\sigma}^2 }{1-\gamma} \right),
\end{equation*} which is what we wanted to show.

\textbf{Proof of part (c):} Using that $\gamma \left\|V_{\theta_{\rm lc}^*} - V_{\bar{\theta}(t)} \right\|_{{\rm Dir}}^2$ is non-negative and rearranging Eq. (\ref{eq:rec2}), we have

\begin{equation*}
  E \left[ \left\|\bar{\theta}({t+1}) - \theta_{\rm lc}^*\right\|_2^2 \right] \leq E\left[ \left\|\bar{\theta}({t}) - \theta_{\rm lc}^*\right\|_2^2 \right]
 +\alpha_t^2 \left(\frac{2{\sigma}^2}{N} + \frac{8}{N} \sum_{v \in \mathcal{V}} E\left[ \|V_{\theta_v(t)} - V_{\theta_{\rm lc}^*} \|_{D}^2\right]\right )
  - 2 \alpha_t (1-\gamma) E \left\|V_{\theta_{\rm lc}^*} - V_{\bar{\theta}(t)} \right\|_D^2 .
\end{equation*}

Applying Lemma 1 in \citet{bhandari2018finite}, which states that $$\sqrt{\omega} \|\theta\|_2 \leq \|V_{\theta}\|_{D} \leq \|\theta\|_2 ,$$ we get
\begin{equation}
  E \left[ \left\|\bar{\theta}({t+1}) - \theta_{\rm lc}^*\right\|_2^2 \right] \leq ( 1-2 \alpha_t (1-\gamma) \omega) E\left[ \left\|\bar{\theta}({t}) - \theta_{\rm lc}^*\right\|_2^2 \right] 
 + \alpha_t^2 \left(\frac{2{\sigma}^2}{N} + \frac{8}{N} \sum_{v \in \mathcal{V}} E\left[ \|V_{\theta_v(t)} - V_{\theta_{\rm lc}^*} \|_{D}^2\right]\right) \label{eq:1/t}.
\end{equation}

We first consider the last term on the right hand side, i.e., $E\left[ \|V_{\theta_v(t)} - V_{\theta_{\rm lc}^*} \|_{D}^2\right]$. Since each agent in the system executes the classical TD($0$) at time $t$ for $t \in \mathbb{N}_0$, then by part (c) of Theorem 2 and Lemma 1 in \citep{bhandari2018finite}, for $v \in \mathcal{V}$, we have that
\begin{equation*}
 E\left[ \|V_{\theta_v(t)} - V_{\theta_{\rm lc}^*} \|_{D}^2\right] 
 \leq E\left[ \|{\theta_v(t)} - {\theta_{\rm lc}^*} \|_{2}^2\right] 
 \leq \frac{\hat{\zeta}}{t + \tau},
\end{equation*}where $$\hat{\zeta} = \max\left\{ {2 \alpha^2 {\sigma}^2}, \tau \hat{R}_0 \right\},$$ recall that $\hat{R}_0 = \max_{v \in \mathcal{V}} E\left[ \left\|{\theta}_v({0}) - \theta_{\rm lc}^*\right\|_2^2 \right] $ and $\tau = \frac{16}{(1-\gamma)^2\omega}$.
Hence,
\begin{equation*}
 \frac{8}{N} \sum_{v \in \mathcal{V}} E\left[ \|V_{\theta_v(t)} - V_{\theta_{\rm lc}^*} \|_{D}^2\right] \leq \frac{8 \hat{\zeta}}{t + \tau},
\end{equation*}and plugging it into Eq. (\ref{eq:1/t}), we can obtain

\begin{align*}
  E \left[ \left\|\bar{\theta}({t+1}) - \theta_{\rm lc}^*\right\|_2^2 \right]
 \leq & ( 1-2 \alpha_t (1-\gamma) \omega) E\left[ \left\|\bar{\theta}({t}) - \theta_{\rm lc}^*\right\|_2^2 \right] + \alpha_t^2 \left(\frac{2{\sigma}^2}{N} + \frac{8 \hat{\zeta}}{t + \tau} \right) \notag\\
 = & \left( 1 - \frac{4}{t+\tau}\right)E\left[ \left\|\bar{\theta}({t}) - \theta_{\rm lc}^*\right\|_2^2 \right] + \frac{2\alpha ^2 {\sigma}^2/N}{(t+\tau)^2} + \frac{8 \alpha ^2 \hat{\zeta}}{(t+\tau)^3}, 
\end{align*}where we use that $\alpha_t = \frac{\alpha}{t+\tau}$ with $\alpha = \frac{2}{(1-\gamma)\omega}$ and $\tau = \frac{16}{(1-\gamma)^2\omega}$ to get the last line. This recursion implies that
\begin{align}
 E \left[ \left\|\bar{\theta}({t+1}) - \theta_{\rm lc}^*\right\|_2^2 \right] 
 \leq & \prod_{i=0}^t \left( 1- \frac{4}{t+\tau-i}\right) E \left[ \left\|\bar{\theta}({0}) - \theta_{\rm lc}^*\right\|_2^2 \right] + \frac{2\alpha ^2 {\sigma}^2}{N} \sum_{i=0}^t \left[\frac{1}{(t+\tau-i)^2} \prod_{l=0}^{i-1} \left( 1- \frac{4}{t+\tau-l}\right) \right] \notag\\
 & + 8 \alpha ^2 \hat{\zeta} \sum_{i=0}^t \left[\frac{1}{(t+\tau-i)^3} \prod_{l=0}^{i-1} \left( 1- \frac{4}{t+\tau-l}\right) \right].\label{eq:rec_0}
\end{align}
Consider the product
\begin{align}
  \prod_{i=0}^t \left( 1- \frac{4}{t+\tau-i}\right) 
 = & \frac{t+\tau-4}{t+\tau} \cdot \frac{t+\tau-5}{t+\tau-1} \cdot \cdots \cdot \frac{\tau-4}{\tau}\notag \\
 =& \frac{(\tau-1)(\tau-2)(\tau-3)(\tau-4)}{(t+\tau)(t+\tau-1)(t+\tau-2)(t+\tau-3)} \notag \\
 =& \frac{\tau-1}{t+\tau} \cdot \frac{\tau-2}{t+\tau-1} \cdot \frac{\tau-3}{t+\tau-2} \cdot\frac{\tau-4}{t+\tau-3} \notag \\
 < & \left( \frac{\tau-1}{t+\tau} \right)^4. \label{eq:prod}
\end{align}The last inequality follows because that last three terms in equation is smaller than $(\tau-1)/(t+\tau)$. Indeed, for $i =1,2,3$, we have that 
\begin{align*}
 \frac{\tau -i-1}{t+\tau-i} = & \frac{\tau-1}{t+\tau} + \left( \frac{\tau -i-1}{t+\tau-i} - \frac{\tau-1}{t+\tau} \right) \\
 = & \frac{\tau-1}{t+\tau} + \frac{(t+\tau)(\tau -i-1) - (\tau-1) (t+\tau-i)}{(t+\tau)(t+\tau-i)}\\
 =& \frac{\tau-1}{t+\tau} - \frac{(i+1)t }{(t+\tau)(t+\tau-i)}\\
 < & \frac{\tau-1}{t+\tau}.
\end{align*} 

For the other product $\prod_{l=0}^{i-1} \left( 1- \frac{4}{t+\tau-l}\right) $ in Eq. (\ref{eq:rec_0}), applying the same method of Eq. (\ref{eq:prod}), we thus have
\begin{equation}
 \prod_{l=0}^{i-1} \left( 1- \frac{4}{t+\tau-l}\right) \leq \left( \frac{t+\tau-i}{t+\tau} \right)^4. \label{eq:prod2}
\end{equation}

Using Eq. (\ref{eq:prod}) and Eq. (\ref{eq:prod2}), Eq. (\ref{eq:rec_0}) becomes 
\begin{align}
 & E \left[ \left\|\bar{\theta}({t+1}) - \theta_{\rm lc}^*\right\|_2^2 \right] \notag \\
 \leq & \left( \frac{\tau-1}{t+\tau} \right)^4 E \left[ \left\|\bar{\theta}({0}) - \theta_{\rm lc}^*\right\|_2^2 \right] + \frac{2\alpha ^2 {\sigma}^2}{N} \sum_{i=0}^t \left[\frac{1}{(t+\tau-i)^2} \left( \frac{t+\tau-i}{t+\tau} \right)^4 \right]  + 8 \alpha ^2 \hat{\zeta} \sum_{i=0}^t \left[\frac{1}{(t+\tau-i)^3} \left( \frac{t+\tau-i}{t+\tau} \right)^4 \right] \notag\\
 \leq & \left( \frac{\tau-1}{t+\tau} \right)^4 E \left[ \left\|\bar{\theta}({0}) - \theta_{\rm lc}^*\right\|_2^2 \right]  + \frac{2 \alpha ^2 {\sigma}^2}{N} \sum_{i=0}^t \frac{(t+\tau-i)^2}{(t+\tau)^4} + 8 \alpha ^2 \hat{\zeta} \sum_{i=0}^t \frac{t+\tau-i}{(t+\tau)^4}. \label{eq:lcc1}
\end{align}Next, we bound summations of the second and the third on the right hand side $\sum_{i=0}^t \frac{(t+\tau-i)^2}{(t+\tau)^4} $ and $\sum_{i=0}^t \frac{t+\tau-i}{(t+\tau)^4} $ separately. For the summation in the second term, i.e., $\sum_{i=0}^t \frac{(t+\tau-i)^2}{(t+\tau)^4} $, we have
\begin{align}
  \sum_{i=0}^t \frac{(t+\tau-i)^2}{(t+\tau)^4} 
 = & \sum_{i=0}^t \frac{(i+\tau)^2}{(t+\tau)^4} \notag\\
 \leq & \frac{1}{(t+\tau)^4} \sum_{i=1}^{t+\tau} i^2\notag\\
 = & \frac{1}{(t+\tau)^4}\cdot \frac{1}{6} \cdot (t+\tau)(t+\tau+1)(2t+2\tau+1)\notag\\
 \leq & \frac{1}{6(t+\tau)^4} (t+\tau) \left( 2(t+\tau) \right)\left( 3(t+\tau) \right)\notag\\
 = & \frac{1}{t+\tau}.\label{eq:lcc2}
\end{align}For the summation in the third term, i.e., $\sum_{i=0}^t \frac{t+\tau-i}{(t+\tau)^4} $, it is immediately that
\begin{align}
  \sum_{i=0}^t \frac{t+\tau-i}{(t+\tau)^4} 
 = & \sum_{i=0}^t \frac{i+\tau}{(t+\tau)^4}\notag\\
 \leq & \frac{1}{(t+\tau)^4} \sum_{i=0}^{t} (i+\tau)\notag\\
 = & \frac{1}{(t+\tau)^4}\cdot \frac{(t+1)(t+2\tau)}{2} \notag\\
 \leq & \frac{1}{2(t+\tau)^4} \left( (t+\tau) \right)\left( 2(t+\tau) \right)\notag\\
 = & \frac{1}{(t+\tau)^2}.\label{eq:lcc3}
\end{align}Therefore, combining Eq. (\ref{eq:lcc1}), Eq. (\ref{eq:lcc2}), and Eq. (\ref{eq:lcc3}), we obtain
\begin{equation*}
  E \left[ \left\|\bar{\theta}({t+1}) - \theta_{\rm lc}^*\right\|_2^2 \right] \leq \frac{2 \alpha ^2 {\sigma}^2/N}{t+\tau} + \frac{8 \alpha ^2 \hat{\zeta} }{(t+\tau)^2}+ \frac{(\tau-1)^4E \left[ \left\|\bar{\theta}({0}) - \theta_{\rm lc}^*\right\|_2^2 \right]}{(t+\tau)^4} .
\end{equation*}
\end{proof}

\section{Numerical Experiments}
In this section, we provide details of the simulations done in the main body of the paper. These simulations were done on OpenAI control problems and GridWorld. We first give the details of the Gridworld setup, which is fairly standard. 

\subsection{Settings on the Gridworld MDP}\label{subsec:gw}
In this subsection, we introduce the specific problem settings for the grid-world MDP. We consider a $4 \times 4$ grid, where the states are $\mathcal{S} = \{1, 2, \ldots, 16\}$. There are four possible actions for each state,  $\mathcal{A}$ = \{left, right, up, down\}. If the action leads out of the grid, then the next state will remain to be the current state. Set the discount factor in the MDP to be $0.8$, $\gamma = 0.8$. Let deterministic rewards $r(s,a)$ be randomly chosen from a normal distribution $\mathcal{N}(1,100)$.
\begin{table}[h]
    \centering
    \begin{tabular}{|c|c|c|c|}
\hline  
1&2&3&4\\
\hline  
5&6&7&8\\
\hline 
9&10&11&12\\
\hline
13&14&15&16\\
\hline
\end{tabular}
    \caption{State space of grid world}
    \label{tab:my_label}
\end{table}

In this experiment, we will consider a random policy, i.e., each agent chooses an action from the 4 possible actions uniformly at random. Feature vectors are generated as $\phi(s) = (1,0,0,0)^T$ for four upper-left states $s \in \{1,2,5,6\}$; $\phi(s) = (0,1,0,0)^T$ for four upper-right states $s \in \{3,4,7,8\}$; $\phi(s) = (0,0,1,0)^T$ for four lower-left states $s \in \{9,10,13,14\}$; and $\phi(s) = (0,0,0,1)^T$ for four lower-right states $s \in \{11,12,15,16\}$. In this case, for any parameter $\theta \in \mathbb{R}^4$, we have $V_{\theta}(s) = \theta^{T}\phi(s)$ as the approximation for the value function of state $s$. Furthermore, samples are generated i.i.d and are equally likely chosen from the state space.

Due to the relatively small state space and the fixed policy, it is simply for us to use both the  transition matrix $P$, and further get stationary distribution $\pi$. We can also get $\theta_{\rm lc}^*$ by solving Eq.(\ref{eq:ls}). Therefore, the left-hand side of Theorem \ref{thm:main}(a):
\begin{equation*}
    E\left[(1-\gamma) \left\|V_{\theta_{\rm lc}^*} - V_{\hat{\theta}(T)} \right\|_{D}^2 + \gamma \left\|V_{\theta_{\rm lc}^*} - V_{\hat{\theta}(T)} \right\|_{{\rm Dir}}^2 \right],
\end{equation*} where recall that norm $\| \cdot\|_{D}$ and semi-norm $\| \cdot\|_{\rm Dir}$ is defined as Eq.(\ref{eq:def_D}) and Eq.(\ref{eq:def_Dir}) with stationary distribution $\pi$, can be obtained exactly for any constant step-size.

\subsection{Settings on the Classic Control Problems}\label{subsec:control}

Unlike the grid world case, for a more involved RL problem we do not have an explicit solution for the final limit $\theta_{\rm lc}^*$ that we can compare to. In other words, it is not possible to plot the left-hand side of Theorem \ref{thm:main} which contains the optimal parameter vector $\theta_{\rm lc}^*$. As a consequence, we use the empirical variances among several runs of the method, which is a plausible measure for accuracy of the method in place of the left-hand side of Theorem \ref{thm:main}(a).

For classic control problems, we use the tile coding \cite{sutton2018reinforcement} to deal with multi-dimensional continuous spaces. A tiling is a partition of the state space and a tile is an element of a partition. We set the parameter dimension $K$ be the total number of tiles among all tilings. The feature vector of state $s$, $\phi(s) \in \mathbb{R}^K$, is a vector has one component for each tile in each tiling. For a state $s$, it falls in exactly one tile for each tiling. The element in the $\phi(s)$ corresponding  to the tile that $s$ falls within is one and all others are zeros. Hence, the number of ones in the feature vector is always equal to the number of tilings.

The numbers of tiling and grid are similar to those used in \citet{lakshminarayanan2018linear}. We use 5 tilings, and each tiling has $7 \times 7$ grids for two dimensional MountainCar-v1 and MountainCarContinous-v0; $3 \times 3 \times 3$ grids for three dimensional Pendulum-v1; $3^4$ grids for four dimensional CartPole-v1; and $2^6$ grids for six dimensional Acrobot-v1. We considered uniform random policy for all problems. The discount factor was $0.8$. The initial condition $\theta_v(0)$ were sampled form standard normal distribution and for a fixed initialization. We applied Algorithm \ref{algo:our} several times and then computed the empirical variance in the final estimates. As in the previous subsection, the step-size were chosen to be constant. In Figure \ref{fig:var}, each subplot shows the empirical variance for many different choices of $\alpha$ with $N=1$ and $N = \frac{1-\gamma}{8 \alpha}$.  As we expected, all the blue lines for $N=\frac{1-\gamma}{8 \alpha}$ are approximately quadratic in shapes while all the red lines are generally linear, consistent with our theoretical results. 


\subsection{TD Errors of Distributed TD Methods}

We now discuss the details of the simulations that generated Figure \ref{fig:compare_six}, the comparison of our Algorithm \ref{algo:our} with earlier distributed TD methods from \citet{doan2019finite} and \citet{wang2020decentralized}. We plot the averaged TD error among the network, i.e., $\frac{1}{N} \sum_{v \in \mathcal{V}} {\delta}_v({t})$ vs iteration $t$ on the x-axis.

The number of agents $N = 100$. For the distributed TD algorithms proposed in \citet{doan2019finite} and \citet{wang2020decentralized}, the communication graph among agents is generated by the Erdos–Renyi model, which is connected. 
In the grid world case, all the settings are the same as stated in subsection \ref{subsec:gw} except that deterministic rewards $r(s,a)$ be randomly chosen from a normal distribution $\mathcal{N}(1,0.01)$. The step-size is constant as $\alpha= 0.3$. The parameters selection is mainly based on the parameters used in the simulations of \citet{wang2020decentralized}.

For two dimensional MountainCar-v1 and MountainCarContinous-v0, we used 5 tilings, each tiling has $7 \times 7$ grids, and step-size $\alpha = 0.3$. For three dimensional Pendulum-v1, we used 5 tilings, each tiling has $5 \times 5 \times 5$ grids, and step-size $\alpha = 0.05$. For four dimensional CartPole-v1, we used 5 tilings, each tiling has $8^4$ grids, and step-size $\alpha = 0.3$. For six dimensional Acrobot-v1, we used 5 tilings, each tiling has $2^6$ grids, and step-size $\alpha = 0.05$. The constant step sizes for open AI gym problems are chosen from the the set $\Lambda = \{0.3, 0.25, 0.2, \cdots, 0.05\}$. For each problem, we choose the largest step size from the set $\Lambda$ such that all methods converge or the smallest of these step sizes $0.05$ even if there exists one method does not converge for all step sizes in the set $\Lambda$. Note that the experiments of CartPole and Pendulum, the method in \citet{wang2020decentralized} does not converge with any step sizes in the set $\Lambda$; but Algorithm \ref{algo:our} in this paper and method in \citet{doan2019finite} do converge with all step sizes in the set $\Lambda$. We only show experimental result with $\alpha =0.05$ in the main text.


\end{document}